\documentclass{article}

% if you need to pass options to natbib, use, e.g.:
\PassOptionsToPackage{numbers, compress}{natbib}
% before loading neurips_2023

% ready for submission
% \usepackage{neurips_2023}

% to compile a preprint version, e.g., for submission to arXiv, add add the
% [preprint] option:
% \usepackage[preprint]{neurips_2023}

% to compile a camera-ready version, add the [final] option, e.g.:
\usepackage[final]{neurips_2023}

% to avoid loading the natbib package, add option nonatbib:
%    \usepackage[nonatbib]{neurips_2023}

\usepackage{bm}
\usepackage{bbm}
\usepackage{dsfont}
\usepackage{mathtools}

\usepackage{amsfonts}       % blackboard math symbols
\usepackage{amsmath}
\usepackage{amssymb}
\usepackage{amsthm}

%%%%% NEW MATH DEFINITIONS %%%%%

\usepackage{amsmath,amsfonts,bm}

% Mark sections of captions for referring to divisions of figures

% Highlight a newly defined term

% Figure reference, lower-case.

% Figure reference, capital. For start of sentence

% Section reference, lower-case.

% Section reference, capital.

% Reference to two sections.

% Reference to three sections.

% Reference to an equation, lower-case.
%\def\eqref#1{equation~\ref{#1}}
% Reference to an equation, upper case
%\def\Eqref#1{Equation~\ref{#1}}
% A raw reference to an equation---avoid using if possible

% Reference to a chapter, lower-case.

% Reference to an equation, upper case.

% Reference to a range of chapters

% Reference to an algorithm, lower-case.

% Reference to an algorithm, upper case.

% Reference to a part, lower case

% Reference to a part, upper case

\def\1{\bm{1}}

% Random variables

% rm is already a command, just don't name any random variables m

% Random vectors

\def\rvx{{\mathbf{x}}}

% Elements of random vectors

% Random matrices

% Elements of random matrices

% Vectors

% Elements of vectors

% Matrix

% Tensor
\DeclareMathAlphabet{\mathsfit}{\encodingdefault}{\sfdefault}{m}{sl}
\SetMathAlphabet{\mathsfit}{bold}{\encodingdefault}{\sfdefault}{bx}{n}

% Graph

% Sets

% Don't use a set called E, because this would be the same as our symbol
% for expectation.

% Entries of a matrix

% entries of a tensor
% Same font as tensor, without \bm wrapper

% The true underlying data generating distribution

% The empirical distribution defined by the training set

% The model distribution

% Stochastic autoencoder distributions

 % Laplace distribution

% Wolfram Mathworld says $L^2$ is for function spaces and $\ell^2$ is for vectors
% But then they seem to use $L^2$ for vectors throughout the site, and so does
% wikipedia.

 % See usage in notation.tex. Chosen to match Daphne's book.

\DeclareMathOperator*{\argmax}{arg\,max}

\usepackage[utf8]{inputenc} % allow utf-8 input
\usepackage[T1]{fontenc}    % use 8-bit T1 fonts
\usepackage{hyperref}       % hyperlinks
\usepackage{url}            % simple URL typesetting
\usepackage{booktabs}       % professional-quality tables
\usepackage{amsfonts}       % blackboard math symbols
\usepackage{nicefrac}       % compact symbols for 1/2, etc.
\usepackage{microtype}      % microtypography
\usepackage{xcolor}         % colors

\usepackage{lipsum}

\def\rveps{{\boldsymbol{\epsilon}}}
\def\rvdt{{\boldsymbol{\delta}}}

\usepackage{subfigure}
\usepackage{caption}
\usepackage{wrapfig}
\usepackage{multirow}
\usepackage{adjustbox}
\usepackage{placeins}
\usepackage{colortbl}
\usepackage{enumitem}

\definecolor{BrickRed}{rgb}{0.6,0,0}
\definecolor{RoyalBlue}{rgb}{0,0,0.8}
\definecolor{Tdgreen}{rgb}{0,0.4,0.7}
\definecolor{Red7}{rgb}{0.941, 0.243, 0.243}
\definecolor{Green7}{RGB}{55, 178, 77}
\definecolor{Blue9}{rgb}{0.098,0.3,0.9}

\definecolor{pinegreen}{rgb}{0.0, 0.47, 0.44}
\definecolor{cornellred}{rgb}{0.7, 0.11, 0.11}
\definecolor{cadmiumgreen}{rgb}{0.0, 0.42, 0.24}
\definecolor{mylightblue}{rgb}{0.85, 0.90, 0.94}
\definecolor{kaistblue}{RGB}{20,135,200}
\definecolor{darkblue}{rgb}{0,0.08,0.45}

\hypersetup{
  urlcolor = cadmiumgreen,
  linkcolor = cornellred,
  citecolor  = kaistblue,
  colorlinks = true,
}

\newcommand{\ie}{\textit{i}.\textit{e}., }
\newcommand{\eg}{\textit{e}.\textit{g}., }
\newcommand{\viz}{\textit{viz}., }

\newcommand{\gua}{\color{Green7}{\uparrow}}
\newcommand{\rda}{\color{Red7}{\downarrow}}

\usepackage{pifont}% http://ctan.org/pkg/pifont
\newcommand{\cmark}{\ding{51}}%
\newcommand{\xmark}{\ding{55}}%
\newcommand{\ck}{\color{Green7}{\cmark}}
\newcommand{\xk}{\color{Red7}{\xmark}}

\usepackage{duckuments}

%%%%%%%%%%%%%%%%%%%%%%%%%%%%%%%%
% THEOREMS
%%%%%%%%%%%%%%%%%%%%%%%%%%%%%%%%
\usepackage{algorithm}
\usepackage{algorithmic}

\theoremstyle{plain}
\newtheorem{theorem}{Theorem}[section]
\newtheorem{lemma}[theorem]{Lemma}

% \title{Calibrating Diffusion Models \\ for Randomized Smoothing}
\title{Multi-scale Diffusion Denoised Smoothing}

% The \author macro works with any number of authors. There are two commands
% used to separate the names and addresses of multiple authors: \And and \AND.
%
% Using \And between authors leaves it to LaTeX to determine where to break the
% lines. Using \AND forces a line break at that point. So, if LaTeX puts 3 of 4
% authors names on the first line, and the last on the second line, try using
% \AND instead of \And before the third author name.

\author{%
  Jongheon Jeong \qquad Jinwoo Shin
  %\thanks{Use footnote for providing further informationabout author (webpage, alternative address)---\emph{not} for acknowledgingfunding agencies.} 
  \\
  % \textsuperscript{1}School of Electrical Engineering \quad \textsuperscript{2}Kim Jaechul Graduate School of AI\\
  Korea Advanced Institute of Science and Technology (KAIST) \\
  Daejeon, South Korea\\
  \texttt{\{jongheonj,\,jinwoos\}@kaist.ac.kr}
  % David S.~Hippocampus\thanks{Use footnote for providing further information
  %   about author (webpage, alternative address)---\emph{not} for acknowledging
  %   funding agencies.} \\
  % Department of Computer Science\\
  % Cranberry-Lemon University\\
  % Pittsburgh, PA 15213 \\
  % \texttt{hippo@cs.cranberry-lemon.edu} \\
  % examples of more authors
  % \And
  % Coauthor \\
  % Affiliation \\
  % Address \\
  % \texttt{email} \\
  % \AND
  % Coauthor \\
  % Affiliation \\
  % Address \\
  % \texttt{email} \\
  % \And
  % Coauthor \\
  % Affiliation \\
  % Address \\
  % \texttt{email} \\
  % \And
  % Coauthor \\
  % Affiliation \\
  % Address \\
  % \texttt{email} \\
}

\begin{document}

\maketitle

\begin{abstract}
Along with recent diffusion models, randomized smoothing has become one of a few tangible approaches that offers adversarial robustness to models at scale, \eg those of large pre-trained models. Specifically, one can perform randomized smoothing on any classifier via a simple ``denoise-and-classify'' pipeline, so-called \emph{denoised smoothing}, given that an accurate denoiser is available -- such as diffusion model. In this paper, we present scalable methods to address the current trade-off between certified robustness and accuracy in denoised smoothing. Our key idea is to ``selectively'' apply smoothing among multiple noise scales, coined \emph{multi-scale smoothing}, which can be efficiently implemented with a single diffusion model. This approach also suggests a new objective to compare the \emph{collective} robustness of multi-scale smoothed classifiers, and questions which representation of diffusion model would maximize the objective. To address this, we propose to further fine-tune diffusion model (a) to perform consistent denoising whenever the original image is recoverable, but (b) to generate rather diverse outputs otherwise. Our experiments show that the proposed multi-scale smoothing scheme, combined with diffusion fine-tuning, not only allows strong certified robustness at high noise scales but also maintains accuracy close to non-smoothed classifiers. Code is available at \url{https://github.com/jh-jeong/smoothing-multiscale}.
\end{abstract}

\section{Introduction}

Arguably, one of the important lessons in modern deep learning is the effectiveness of massive data and model scaling \cite{kaplan2020scaling,zhai2022scaling}, which enabled many breakthroughs in recent years \cite{brown2020language,radford2021learning,rombach2022high,brohan2022can}. Even with the largest amount of data available on the web and computational budgets, however, the \emph{worst-case} behaviors of deep learning models are still challenging to regularize. For example, large language models often leak private information in training data \cite{carlini2021extracting}, and one can completely fool superhuman-level Go agents with few meaningless moves \cite{lan2022are}. Such unintentional behaviors can be of critical concerns in deploying deep learning into real-world systems, and the risk has been increasing as the capability of deep learning continues to expand.

In this context, \emph{adversarial robustness} \cite{szegedy2014intriguing,carlini2019evaluating} has been a seemingly-close milestone towards reliable deep learning. Specifically, neural networks are even fragile to small, ``imperceptible'' scales of noise when it comes to the worst-case, and consequently there have been many efforts to obtain neural networks that are robust to such noise \cite{madry2018towards,pmlr-v80-athalye18a,pmlr-v97-zhang19p,pmlr-v97-cohen19c,tramer2020adaptive}.
Although it is a reasonable premise that we humans have an inherent mechanism to correct against adversarial examples \cite{elsayed2018adversarial,guo2022adversarially}, yet so far the threat still persists in the context of deep learning, \eg even in recent large vision-language models \cite{mao2022understanding}, and is known as hard to avoid without a significant reduction in model performance \cite{tsipras2018robustness,tramer2020fundamental}.

\emph{Randomized smoothing} \cite{lecuyer2019certified,pmlr-v97-cohen19c}, the focus in this paper, is currently one of a few techniques that has been successful in obtaining adversarial robustness from neural networks. 
Specifically, it constructs a \emph{smoothed classifier} by taking a majority vote from a ``base'' classifier, \eg a neural network, over random Gaussian input noise.
The technique is notable for its \emph{provable} guarantees on adversarial robustness, \ie it is a \emph{certified defense} \cite{li2020sokcertified}, and its scalability to arbitrary model architectures. For example, it was the first certified defense that could offer adversarial robustness on ImageNet \cite{dataset/ilsvrc}. 
The scalablity of randomized smoothing has further expanded by \citet{salman2020denoised}, which observed that randomized smoothing can be applied to any pre-trained classifiers by prepending a denoiser model, dubbed \emph{denoised smoothing}. Combined with the recent \emph{diffusion-based models} \cite{ho2020denoising}, denoised smoothing could provide the current state-of-the-arts in $\ell_2$-certified robustness \cite{carlini2022certified,xiao2023densepure,li2020sokcertified}.

Despite its desirable properties, randomized smoothing in practice is also at odds with model accuracy \cite{pmlr-v97-cohen19c}, similarly to other defenses \cite{madry2018towards,zhang2020towards}, which makes it burdening to be applied in the real-world. 
For example, the variance of Gaussian noise, or \emph{smoothing factor}, is currently a crucial hyperparameter to increase certified robustness at the cost of accuracy.
The fundamental trade-off between accuracy and adversarial robustness has been well-evidenced in the literature \cite{tsipras2018robustness,pmlr-v97-zhang19p,tramer2020fundamental}, 
but it has been relatively under-explored whether they may or may not be applicable in the context of randomized smoothing. For example, \citet{tramer2020fundamental} demonstrate that pursuing $\varepsilon$-uniform adversarial robustness in neural networks may increase their vulnerability against \emph{invariance attacks}, \ie some semantics-altering perturbations possible inside the $\varepsilon$-ball: yet, it is still unclear whether such a vulnerability would be inevitable at non-uniformly robust models such as smoothed classifiers.

In attempt to understand the accuracy-robustness trade-off in randomized smoothing, efforts have been made to increase the certified robustness a given smoothed classifier can provide, \eg through a better training method \cite{salman2020denoised,Zhai2020MACER,jeong2020consistency,jeong2021smoothmix,jeong2023confidence}. For example, \citet{nips_salman19} have employed adversarial training \cite{madry2018towards} in the context of randomized smoothing, and \citet{jeong2020consistency} have proposed to train a classifier with consistency regularization. Motivated by the trade-off among different smoothing factors, some works have alternatively proposed to perform smoothing with \emph{input-dependent} factors \cite{alfarra2022data,chen2021insta,wang2021pretraintofinetune}: unfortunately, subsequent works \cite{skenk2021intriguing,jeong2021smoothmix} have later shown that such schemes do not provide valid robustness certificates, reflecting its brittleness in overcoming the trade-off.

\begin{figure*}[t]
    \centering
    \subfigure[Cascaded smoothing]
    {
        \includegraphics[width=0.59\linewidth]{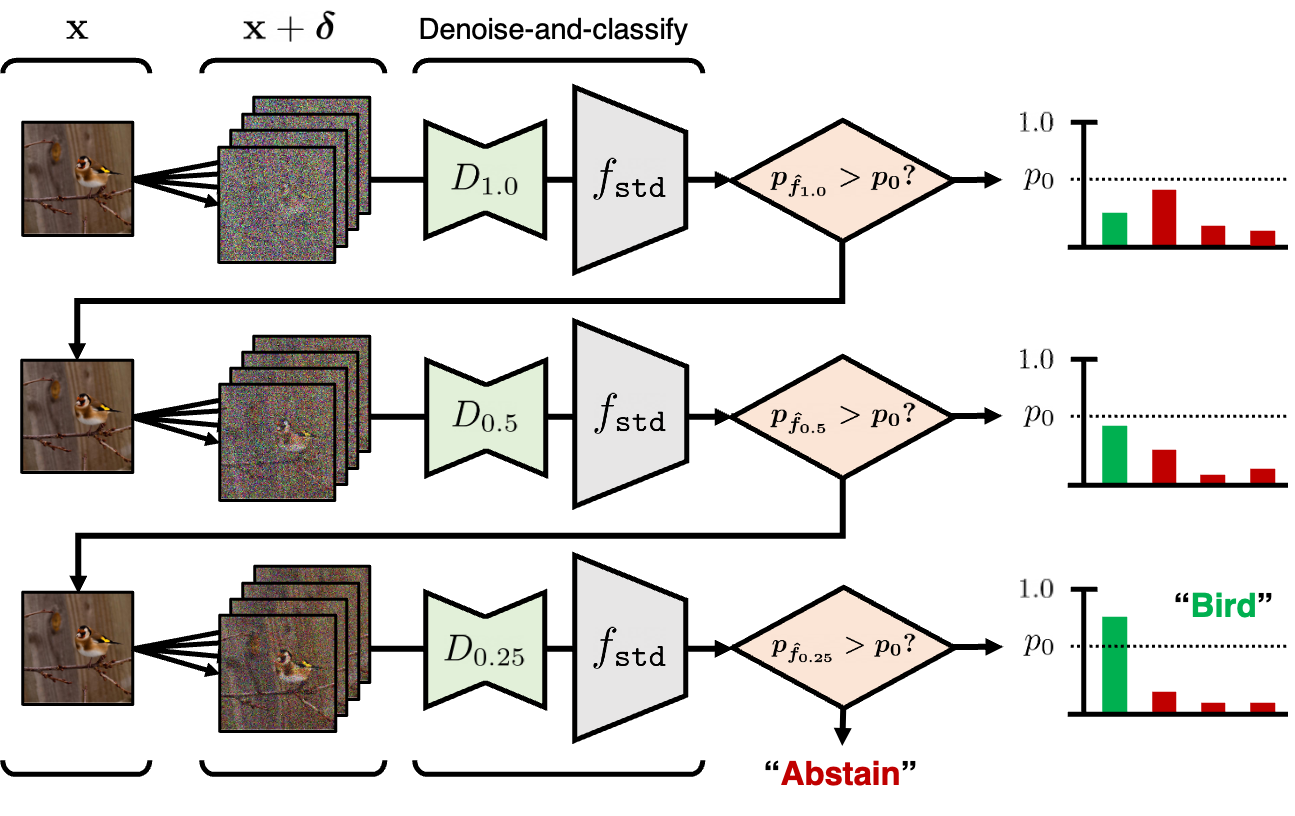}
        \label{fig:cascaded_smoothing}
    }
    ~~
    \subfigure[Diffusion calibration]
    {
        \includegraphics[width=0.35\linewidth]{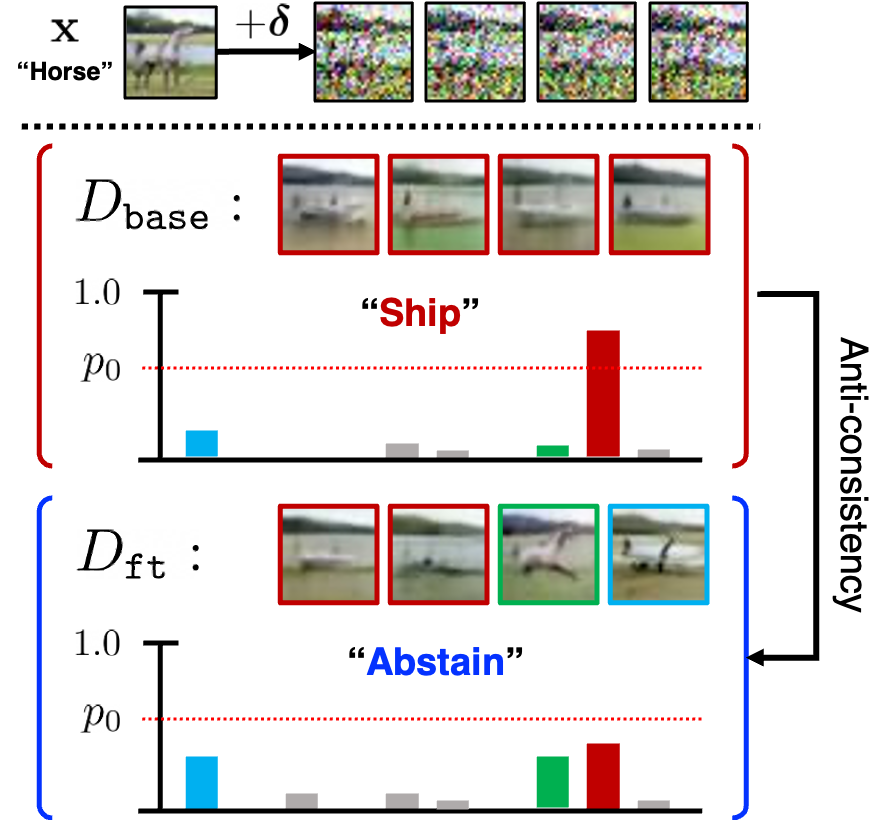}
        \label{fig:overconfidence}
    }
    \caption{An overview of the proposed approaches, (a) \emph{cascaded randomized smoothing} (Section~\ref{ss:cascading} and (b) \emph{diffusion calibration} (Section~\ref{ss:calibration}) to attain a better trade-off between accuracy and certified robustness in randomized smoothing, upon on the recent \emph{diffusion denoised smoothing} scheme \cite{carlini2022certified}.}
    \vspace{-0.2in}
\end{figure*}

\textbf{Contribution. } In this paper, we develop a practical method to overcome the current frontier of accuracy-robustness trade-off in randomized smoothing, particularly upon the architectural benefits that the recent \emph{diffusion denoised smoothing} \cite{carlini2022certified} can offer. Specifically, we first propose to aggregate multiple smoothed classifiers of different smoothing factors to obtain their collective robustness (see Figure~\ref{fig:cascaded_smoothing}), which leverages the scale-free nature of diffusion denoised smoothing. 
In this way, the model can decide which smoothed classifier to use for each input, maintaining the overall accuracy.
Next, we fine-tune a diffusion model for randomized smoothing through the ``denoise-and-classify'' pipeline (see Figure~\ref{fig:overconfidence}), as an efficient alternative of the full fine-tuning of (potentially larger) pre-trained classifiers.
Here, we identify the \emph{over-confidence} of diffusion models, \eg to a specific class given certain backgrounds, as a major challenge towards accurate-yet-robust randomized smoothing, and design a regularization objective to mitigate the issue.

In our experiments, we evaluate our proposed schemes on CIFAR-10 \cite{dataset/cifar} and ImageNet \cite{dataset/ilsvrc}, two of standard benchmarks for certified $\ell_2$-robustness, particularly considering practical scenarios of applying diffusion denoised smoothing to large pre-trained models such as CLIP \cite{radford2021learning}. Overall, the results consistently highlight that (a) the proposed multi-scale smoothing scheme upon the recent {diffusion denoised smoothing} \cite{carlini2022certified} can significantly improve the accuracy of smoothed inferences while maintaining their certified robustness at larger radii, and (b) our fine-tuning scheme of diffusion models can additively improve the results - not only in certified robustness but also in accuracy - by simply replacing the denoiser model without any further adaptation. For example, we could improve certifications from a diffusion denoised smoothing based classifier by $30.6\% \rightarrow 72.5\%$ in clean accuracy, while also improving its certified robustness at $\varepsilon=2.0$ by $12.6\% \rightarrow 14.1\%$. We observe that the collective robustness from our proposed multi-scale smoothing does not always correspond to their individual certified robustness, which has been a major evaluation in the literature, but rather to their ``calibration'' across models: which opens up a new direction to pursue for a practical use of randomized smoothing.

\section{Preliminaries}

\textbf{Adversarial robustness and randomized smoothing. } 
For a given classifier $f: \mathcal{X} \rightarrow \mathcal{Y}$, where $\rvx \in \mathcal{X} \subseteq\mathbb{R}^d$ and $y \in \mathcal{Y}:=\{1, \cdots, K\}$, \emph{adversarial robustness} refers to the behavior of $f$ in making consistent predictions at the \emph{worst-case} perturbations under semantic-preserving restrictions. Specifically, for samples from a data distribution $(\rvx, y)\sim p_{\rm data}(\rvx, y)$, it requires $f(\rvx + \rvdt)=y$ for \emph{every} perturbation $\rvdt$ that a threat model defines, \eg an $\ell_2$-ball $\|\rvdt\|_2 \le \varepsilon$.
One of ways to quantify adversarial robustness is to measure the following \emph{average minimum-distance} of adversarial perturbations \cite{carlini2019evaluating}, \ie $R(f; p_{\rm data}) := \mathbb{E}_{(\rvx, y)\sim p_{\rm data}}\left[\min_{f (\rvx')\ne y} ||\rvx' - \rvx||_2\right].$

The essential challenge in achieving adversarial robustness stems from that evaluating this (and further optimizing on it) is usually infeasible. \emph{Randomized smoothing} \cite{lecuyer2019certified,pmlr-v97-cohen19c} bypasses this difficulty by constructing a new classifier $\hat{f}$ from $f$ instead of letting $f$ to directly model the robustness: specifically, it transforms the base classifier $f$ with a certain \emph{smoothing measure}, where in this paper we focus on the case of Gaussian distributions $\mathcal{N}(0, \sigma^2 \mathbf{I})$: 
\begin{equation}
\label{eq:smoothing}
    \hat{f}(\rvx) := \argmax_{c\in \mathcal{Y}} \mathbb P_{\delta\sim\mathcal N(0,\sigma^2\mathbf{I})}\left[f(\rvx+\rvdt)=c\right].
\end{equation}
Then, the robustness of $\hat{f}$ at $(\rvx, y)$, namely $R(\hat{f}; \rvx, y)$, can be lower-bounded in terms of the \emph{certified radius} $\underline{R}(\hat{f}, \rvx, y)$, \eg \citet{pmlr-v97-cohen19c} showed that the following bound holds, which is tight for $\ell_2$-adversarial threat models:
\begin{align}
    R(\hat{f}; \rvx, y) &\ge \sigma \cdot \Phi^{-1}(p_{\hat{f}}(\rvx, y)) =: \underline{R}(\hat{f}, \rvx, y),\quad
    \text{where}\quad p_{\hat{f}}(\rvx, y) := {\mathbb{P}_{\rvdt}[f(\rvx+\rvdt)=y]},
    \label{eq:p_f}
\end{align}
provided that $\hat{f}(\rvx) = y$, otherwise $R(\hat{f}; \rvx, y) := 0$.\footnote{$\Phi$ denotes the cumulative distribution function of $\mathcal{N}(0,1^2)$.} Here, we remark that the formula for certified radius is essentially a function of $p_{\hat{f}}$, which is the \emph{accuracy} of $f(\rvx+\rvdt)$ over $\rvdt$. 

\textbf{Denoised smoothing. } Essentially, randomized smoothing requires $f$ to make accurate classification of Gaussian-corrupted inputs. A possible design of $f$ in this regard is to concatenate a Gaussian denoiser, say $\texttt{denoise}(\cdot)$, with any standard classifier $f_{\tt std}$, so-called \emph{denoised smoothing} \cite{salman2020denoised}:
\begin{equation}
\label{eq:denoised}
    f(\rvx+\rvdt) := f_{\tt std}(\texttt{denoise}(\rvx+\rvdt)).
\end{equation}
Under this design, an ideal denoiser $\texttt{denoise}(\cdot)$ should ``accurately'' recover $\rvx$ from $\rvx+\rvdt$, \ie $\texttt{denoise}(\rvx+\rvdt) \approx \rvx$ (in terms of their semantics to perform classification) with high probability of $\rvdt \sim \mathcal{N}(0, \sigma^2 \mathbf{I})$. Denoised smoothing offers a more scalable framework for randomized smoothing, considering that (a) standard classifiers (rather than those specialized to Gaussian noise) are nowadays easier to obtain in the paradigm of large pre-trained models, and (b) the recent developments in diffusion models \cite{ho2020denoising} has supplied denoisers strong enough for the framework. In particular, \citet{lee2021provable} has firstly explored the connection between diffusion models and randomized smoothing; \citet{carlini2022certified} has further observed that latest diffusion models combined with a pre-trained classifier provides a state-of-the-art design of randomized smoothing.

\textbf{Diffusion models. } In principle, \emph{diffusion models} \cite{sohl2015deep,ho2020denoising} aims to generate a given data distribution $p_{\rm data}(\rvx)$ via an iterative denoising process from a Gaussian noise $\hat{\rvx}_T\sim \mathcal{N}(0, T^2 \mathbf{I})$ for a certain $T > 0$. Specifically, it first assumes the following diffusion process which maps $p_{\rm data}$ to $\mathcal{N}(0, T^2 \mathbf{I})$: $\mathrm{d}\rvx_t = \boldsymbol{\mu}(\rvx_t, t)\mathrm{d}t + \sigma(t) \mathrm{d}\mathbf{w}_t$, where $t\in[0, T]$, and $\mathbf{w}_t$ denotes the standard Brownian motion. Based on this, diffusion models first train a \emph{score model} $\mathbf{s}_{\phi}(\rvx, t)\approx \nabla \log p_t(\rvx)$ via score matching \cite{hyvarinen2005estimation}, and use the model to solve the probabilistic flow from $\hat{\rvx}_T\sim \mathcal{N}(0, T^2 \mathbf{I})$ to $\rvx_0$ for sampling. The score estimator $\mathbf{s}_{\phi}(\rvx, t)$ is often parametrized by a \emph{denoiser} $D(\rvx; \sigma(t))$ in practice, \viz $\nabla \log p_t(\rvx) = (D(\rvx; \sigma(t)) - \rvx) / \sigma(t)^2$, which establishes its close relationship with denoised smoothing.

\section{Method}

Consider a classification task from $\mathcal{X}$ to $\mathcal{Y}$ where training data $\mathcal{D}=\{(\rvx_i, y_i)\} \sim p_{\rm data}(\rvx, y)$ is available, and let $f: \mathcal{X} \rightarrow \mathcal{Y}$ be a classifier. We denote $\hat{f}_{\sigma}$ to be the smoothed classifier of $f$ with respect to the smoothing factor $\sigma > 0$, as defined by \eqref{eq:smoothing}. In this work, we aim to better understand how the \emph{accuracy-robustness trade-off} of $\hat{f}_{\sigma}$ occurs, with a particular consideration of the recent denoised smoothing scheme.
Generally speaking, the trade-off implies the following: for a given model, there exists a sample that the model gets ``wrong'' as it is optimized for (adversarial) robustness on another sample. In this respect, we start by taking a closer look at what it means by a model gets wrong, particularly when it is from randomized smoothing. 

\vspace{-0.05in}
\subsection{Over-smoothing and over-confidence in randomized smoothing}
\label{ss:tradeoffs}

Consider a smoothed classifier $\hat{f}_{\sigma}$, and suppose there exists a sample $(\rvx, y)$ where $\hat{f}_{\sigma}$ makes an error: \ie $\hat{f}_{\sigma}(\rvx) \neq y$. 
Our intuition here is to separate possible scenarios of $\hat{f}_{\sigma}(\rvx)$ making an error into two distinct cases, based on the \emph{prediction confidence} of $\hat{f}_{\sigma}(\rvx)$. Specifically, we define the \emph{confidence} of a smoothed classifier $\hat{f}_{\sigma}$ at $\rvx$ based on the definition of randomized smoothing \eqref{eq:smoothing} and \eqref{eq:p_f}:
\begin{equation}\label{eq:confidence}
    p_{\hat{f}_\sigma}(\rvx) := \max_y \ p_{\hat{f}_\sigma}(\rvx, y) = \max_y \ {\mathbb{P}_{\rvdt \sim \mathcal{N}(0, \sigma^2 \mathbf{I})}[f(\rvx+\rvdt)=y]}.
\end{equation}
Intuitively, this notion of ``smoothed'' confidence measures how \emph{consistent} the base classifier $f$ is in classifying $\rvx + \rvdt$ over Gaussian noise $\rvdt \sim \mathcal{N}(0, \sigma^2 \mathbf{I})$. In cases when $f$ is modeled by denoised smoothing, achieving high confidence requires the denoiser $D$ to accurately ``bounce-back'' a given noisy image $\rvx + \rvdt$ into one that falls into class $y$ with high probability over $\rvdt$.

Given a smoothed confidence $p := p_{\hat{f}_\sigma}(\rvx)$, we propose to distinguish two cases of model errors, which are both peculiar to randomized smoothing, by considering a certain threshold $p_0$ on $p$. Namely, we interpret an error of $\hat{f}_\sigma$ at $\rvx$ either as (a) $p \le p_0$: the model is \emph{over-smoothing}, or (b) $p > p_0$: the model is having an \emph{over-confidence} on the input:
\begin{enumerate}[leftmargin=0.25in]
    \item \textbf{Over-smoothing ($p \le p_0$): } 
    On one hand, it is unavoidable that the mutual information $I(\rvx + \rvdt; y)$ between input and its class label absolutely decrease from smoothing with larger variance $\sigma^2$, although using larger $\sigma$ can increase the maximum certifiable radius of $\hat{f}_{\sigma}$ in practice \eqref{eq:p_f}.
    Here, a ``well-calibrated'' smoothed classifier should output a prediction close to the uniform distribution across $\mathcal{Y}$, leading to a low prediction confidence. In terms of denoised smoothing, the expectation is clearer: as $\rvx + \rvdt$ gets closer to the pure Gaussian noise, the denoiser $D$ should generate more diverse outputs hence in $\mathcal{Y}$ as well. 
    Essentially, this corresponds to an accuracy-robustness trade-off from choosing a specific $\sigma$ for a given (\eg information-theoretic) capacity of data.
    \item \textbf{Over-confidence ($p > p_0$): } 
    On the other hand, it is also possible for a model $\hat{f}_{\sigma}$ to be incorrect but with a \emph{high confidence}. Compared to the over-smoothing case, this scenario rather signals a ``miscalibration'' and corresponds to a trade-off from \emph{model biases}: even across smoothed models with a fixed $\sigma$, the balance between accuracy and robustness can be different depending on how each model assigns robustness in its decision boundary per-sample basis. 
    When viewed in terms of denoised smoothing, this occurrence can reveal an implicit bias of the denoiser function $D$, \eg that of diffusion models. For example, a denoiser might be trained to adapt to some spurious cues in training data, \eg their backgrounds, as also illustrated in Figure~\ref{fig:overconfidence}.
\end{enumerate}

In the subsequent sections, Section~\ref{ss:cascading} and \ref{ss:calibration}, we introduce two methods to exhibit better accuracy-robustness trade-off in randomized smoothing, each of which focuses on the individual scenarios of over-smoothing and over-confidence, respectively. Specifically, Section~\ref{ss:cascading} proposes to use a \emph{cascaded inference} of multiple smoothed classifiers across different smoothing factors to mitigate the limit of using a single smoothing factor. Next, in Section~\ref{ss:calibration}, we propose to calibrate diffusion models to reduce its over-confidence particularly in denoised smoothing.

\vspace{-0.05in}
\subsection{Cascaded randomized smoothing}
\label{ss:cascading}

To overcome the trade-off between accuracy and certified robustness from \emph{over-smoothing}, \ie from choosing a specific $\sigma$, we propose to combine \emph{multiple} smoothed classifiers with different $\sigma$'s. In a nutshell, we design a pipeline of smoothed inferences that each input (possibly with different noise resilience) can adaptively select which model to use for its prediction. Here, the primary challenge is to make it ``correct'', so that the proposed pipeline does not break the existing statistical guarantees on certified robustness that each smoothed classifier makes. 

Specifically, we now assume $K$ distinct smoothing factors, say $0 < \sigma_1 < \cdots < \sigma_K$, and their corresponding smoothed classifiers of $f$, namely $\hat{f}_{\sigma_1}, \cdots, \hat{f}_{\sigma_K}$. For a given input $\rvx$, our desiderata is (a) to maximize robustness certification at $\rvx$ based on the smoothed inferences available from the individual models, say $p_{\hat{f}_{\sigma_1}}(\rvx), \cdots, p_{\hat{f}_{\sigma_K}}(\rvx)$, while (b) minimizing the access to each of the models those require a separate Monte Calro integration in practice. In these respects, we propose a simple ``predict-or-abstain'' policy, coined \emph{cascaded randomized smoothing}:\footnote{We also discuss several other possible (and more sophisticated) designs in Appendix~\ref{appendix:others}.}
\begin{equation}\label{eq:casc}
    \mathtt{casc}(\rvx; \{\hat{f}_{\sigma_i}\}_{i=1}^K) := \begin{cases*}
        \hat{f}_{\sigma_K}(\rvx) & if \  $p_{\hat{f}_{\sigma_K}}\left(\rvx\right) > p_0$,  \\
        \mathtt{casc}(\rvx; \{\hat{f}_{\sigma_i}\}_{i=1}^{K-1}) & if \  $p_{\hat{f}_{\sigma_K}}\left(\rvx\right) \le p_0$ and $K > 1$, \\
        \mathtt{ABSTAIN} & otherwise, \\
    \end{cases*}
\end{equation}
where $\mathtt{ABSTAIN} \notin \mathcal{Y}$ denotes an artificial class to indicate ``undecidable''. 
Intuitively, the pipeline starts from computing $\hat{f}_{\sigma_K}(\rvx)$, the model with highest $\sigma$, but takes its output only if its (smoothed) confidence $p_{\hat{f}_{\sigma_K}}(\rvx)$ exceeds a certain threshold $p_0$:\footnote{In our experiments, we simply use $p_0 = 0.5$ for all $\sigma$'s. See Appendix~\ref{appendix:ablation} for an ablation study with $p_0$.} otherwise, it tries a smaller noise scale, say $\sigma_{K-1}$ and so on, applying the same abstention policy of $p_{\hat{f}_{\sigma}}\left(\rvx\right) \le p_0$. In this way, it can early-stop the computation at higher $\sigma$ if it is confident enough, so it can maintain higher certified robustness, while avoiding unnecessary accesses to other models of smaller $\sigma$. 

Next, we ask whether this pipeline can indeed provide a robustness certification: \ie how much one can ensure $\mathtt{casc}(\rvx + \rvdt) = \mathtt{casc}(\rvx)$ in its neighborhood $\rvdt$. Theorem~\ref{thm:casc} below shows that one can indeed enjoy the most certified radius from $\hat{f}_{\sigma_k}$ where $\mathtt{casc}(\rvx)=: \hat{y}$ halts, as long as the preceding models are either keep abstaining or output $\hat{y}$ over $\rvdt$:\footnote{The proof of Theorem~\ref{thm:casc} is provided in Appendix~\ref{appendix:proof}.}
\begin{theorem}\label{thm:casc}
Let $\hat{f}_{\sigma_1}, \cdots, \hat{f}_{\sigma_K}: \mathcal{X} \rightarrow \mathcal{Y}$ be smoothed classifiers with $0 < \sigma_1 < \cdots < \sigma_K$. Suppose $\mathtt{casc}(\rvx; \{\hat{f}_{\sigma_i}\}_{i=1}^K) =: \hat{y} \in \mathcal{Y}$ halts at $\hat{f}_{\sigma_k}$ for some $k$. Consider any $\underline{p}$ and  $\overline{p}_{i, c} \in [0, 1]$ that satisfy the following: \emph{\textbf{(a)}} $\underline{p} \le p_{\hat{f}_{\sigma_k}}(\rvx, \hat{y})$, \ and\ \ \emph{\textbf{(b)}}\ $\overline{p}_{k', c} \ge p_{\hat{f}_{\sigma_{k'}}}(\rvx, c)$ { for } $k' > k$ and $c \in \mathcal{Y}$.
Then, it holds that $\mathtt{casc}(\rvx + \rvdt; \{\hat{f}_{\sigma_i}\}_{i=1}^K) = \hat{y}$ \ for any $\|\rvdt\| < R$, where:
\begin{equation}\label{eq:casc_R}
    R := \min \left\{ \sigma_k \cdot \Phi^{-1}\Bigl(\underline{p}\Bigr), \min_{\substack{y \neq \hat{y} \\ k' > k}}\left\{\sigma_{k'} \cdot \Phi^{-1}\left(1 - \overline{p}_{k', y}\right) \right\}  \right\}.
\end{equation}
\end{theorem}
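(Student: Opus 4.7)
The plan is to show that under the perturbation budget $R$, the cascade on $\rvx+\rvdt$ cannot halt with any answer other than $\hat y$, and cannot fail to halt at or above level $k$. Observe that because $\mathtt{casc}(\cdot)$ is fully determined by the abstain-or-predict behavior of the individual $\hat f_{\sigma_i}$, the claim decomposes into two independent requirements: (a) for every $k'>k$, the classifier $\hat f_{\sigma_{k'}}(\rvx+\rvdt)$ either abstains or returns $\hat y$ (i.e.\ no wrong class acquires smoothed confidence exceeding $p_0=\tfrac12$); and (b) at level $k$, the classifier $\hat f_{\sigma_k}(\rvx+\rvdt)$ halts with the answer $\hat y$. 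Granted (a) and (b), the cascade on $\rvx+\rvdt$ either outputs $\hat y$ at some $k'>k$ (if $\hat y$'s confidence happens to exceed $p_0$ there) or filters down to level $k$ and outputs $\hat y$ by (b); in all cases the overall output is $\hat y$.

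The core machinery is the standard Neyman--Pearson bound used by \citet{pmlr-v97-cohen19c}: shifting a Gaussian smoothing input by $\rvdt$ transforms the class probability $p_{\hat f_\sigma}(\rvx, c)$ into a value lying between $\Phi(\Phi^{-1}(p_{\hat f_\sigma}(\rvx,c))-\|\rvdt\|/\sigma)$ and $\Phi(\Phi^{-1}(p_{\hat f_\sigma}(\rvx,c))+\|\rvdt\|/\sigma)$. To establish (a), I apply the upper part of this bound to every $c\ne\hat y$ and every $k'>k$: using assumption (b) of the theorem, $p_{\hat f_{\sigma_{k'}}}(\rvx+\rvdt,c)\le \Phi(\Phi^{-1}(\overline p_{k',c})+\|\rvdt\|/\sigma_{k'})$, and requiring the right-hand side to stay at most $\tfrac12$ yields exactly $\|\rvdt\|\le \sigma_{k'}\cdot\Phi^{-1}(1-\overline p_{k',c})$. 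Taking the minimum over $c\ne\hat y$ and $k'>k$ reproduces the second argument in the definition of $R$. For (b), I apply the lower part of the bound to class $\hat y$ at level $k$: assumption (a) gives $p_{\hat f_{\sigma_k}}(\rvx+\rvdt,\hat y)\ge \Phi(\Phi^{-1}(\underline p)-\|\rvdt\|/\sigma_k)$, and this exceeds $\tfrac12$ precisely when $\|\rvdt\|<\sigma_k\cdot\Phi^{-1}(\underline p)$, which is the first argument in $R$. Since any class whose smoothed probability exceeds $\tfrac12$ must be the (unique) argmax, (b) follows automatically.

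Assembling the pieces, if $\|\rvdt\|<R$ then both the level-$k$ guarantee and the levels-above-$k$ guarantees hold simultaneously, so the cascade on $\rvx+\rvdt$ outputs $\hat y$ as claimed. The only subtle point I would be careful about is pinning down that $p_0=\tfrac12$ is what makes the clean formula $\sigma_k\Phi^{-1}(\underline p)$ (rather than $\sigma_k(\Phi^{-1}(\underline p)-\Phi^{-1}(p_0))$) correct; with a generic $p_0$ the bound would carry an extra offset term, and this is also why the $\min$ over ``bad'' classes can be stated in the symmetric form $\sigma_{k'}\Phi^{-1}(1-\overline p_{k',y})$. Apart from this bookkeeping, the argument is a pure layer-by-layer application of the standard Gaussian smoothing certificate; the main conceptual obstacle is only organizing the case analysis so that an early halt on the perturbed input (at some $k'>k$) is ruled out from producing a wrong label, which is precisely the role of the $y\ne\hat y$ restriction in the second minimum.
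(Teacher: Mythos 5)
Your proposal is correct and mirrors the paper's own argument: both decompose the claim into (i) the level-$k$ classifier still accepting $\hat y$ and (ii) all levels $k'>k$ either abstaining or returning $\hat y$, and both certify each piece with the standard Gaussian smoothing bound (you invoke it as the Neyman--Pearson shift inequality, the paper as the equivalent $1$-Lipschitzness of $\rvx\mapsto\sigma\Phi^{-1}(p_{\hat f_\sigma}(\rvx,y))$ from Salman et al.). Your closing remark about the $p_0=\tfrac12$ offset is exactly the generalization the appendix handles via $R_{p_0}$.
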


Overall, the proposed multi-scale smoothing scheme (and Theorem~\ref{thm:casc}) raise the importance of ``abstaining well'' in randomized smoothing: if a smoothed classifier can perfectly detect and abstain its potential errors, one could overcome the trade-off between accuracy and certified robustness by joining a more accurate model afterward. 
The option to abstain in randomized smoothing was originally adopted to make its statistical guarantees correct in practice. Here, we extend this usage to also rule out less-confident predictions for a more conservative decision making. 
As discussed in Section~\ref{ss:tradeoffs}, now the \emph{over-confidence} becomes a major challenge in this matter: \eg such samples can potentially bypass the abstention policy of cascaded smoothing, which motivates our fine-tuning scheme presented in Section~\ref{ss:calibration}. 

\textbf{Certification. } We implement our proposed cascaded smoothing to make ``statistically consistent'' predictions across different noise samples, considering a certain \emph{significance level} $\alpha$ (\eg $\alpha = 0.001$): in a similar fashion as \citet{pmlr-v97-cohen19c}. Roughly speaking, for a given input $\mathbf{x}$, it makes predictions only when the $(1-\alpha)$-confidence interval of $p_{\hat{f}}(\mathbf{x})$ does not overlap with $p_0$ upon $n$ \textit{i.i.d.}~noise samples (otherwise it abstains). The more details can be found in Appendix~\ref{appendix:certification}.

\vspace{-0.05in}
\subsection{Calibrating diffusion models through smoothing}
\label{ss:calibration}

Next, we move on to the \emph{over-confidence} issue in randomized smoothing: \viz $\hat{f}_\sigma$ often makes errors with a high confidence to wrong classes. 
We propose to fine-tune a given $\hat{f}_\sigma$ to make rather \emph{diverse} outputs when it misclassifies, \ie towards a more ``calibrated'' $\hat{f}_\sigma$. By doing so, we aim to cast the issue of over-confidence as that of \emph{over-smoothing}: which can be easier to handle in practice, \eg by abstaining. 
In this work, we particularly focus on fine-tuning only the \emph{denoiser model} $D$ in the context of denoised smoothing, \ie $f := f_{\tt std} \circ D$ for a standard classifier $f_{\tt std}$: this offers a more scalable approach to improve certified robustness of pre-trained models, given that fine-tuning the entire classifier model in denoised smoothing can be computationally prohibitive in practice.

Specifically, given a base classifier $f := f_{\tt std} \circ D$ and training data $\mathcal{D}$, we aim to fine-tune $D$ to improve the certified robustness of $\hat{f}_\sigma$. 
To this end, we leverage the \emph{confidence} information of the backbone classifier $f_{\tt std}$ fairly assuming it as an ``oracle'' - which became somewhat reasonable given the recent off-the-shelf models available - and apply different losses depending on the confidence information per-sample basis. We propose two losses in this matter: (a) \emph{Brier loss} for either correct or under-confident samples, and (b) \emph{anti-consistency loss} for incorrect, over-confident samples.  

\textbf{Brier loss. } For a given training sample $(\rvx, y)$, we adopt the Brier (or ``squared'') loss \cite{brier1950verification} to regularize the denoiser function $D$ to promote the confidence of $f_{\tt std}(D(\rvx + \rvdt))$ towards $y$, which can be beneficial to increase the smoothed confidence $p_{\hat{f}_{\sigma}}(\rvx)$ that impacts the certified robustness at $\rvx$. Compared to the cross-entropy (or ``log'') loss, a more widely-used form in such purpose, we observe that the Brier loss can be favorable in such fine-tuning of $D$ through $f_{\tt std}$, in a sense that the loss is less prone to ``over-optimize'' the confidence at values closer to 1. 

Here, an important detail is that we do not apply the regularization to incorrect-yet-confident samples: \ie whenever $f_{\tt std}(D(\rvx + \rvdt)) \neq y$ and $p_{\tt std}(\rvdt) := \max_c F_{{\tt std}, c}(D(\rvx + \rvdt)) > p_0$, where $F_{\tt std}$ is the soft prediction of $f_{\tt std}$. This corresponds to the case when $D(\rvx + \rvdt)$ rather outputs a ``realistic'' off-class sample, which will be handled by the anti-consistency loss we propose. Overall, we have:
\begin{equation}\label{eq:brier}
L_{\tt Brier}(\rvx, y) := \mathbb{E}_{\rvdt}[\mathbf{1}[\hat{y}_{\rvdt} = y~\text{ or }~p_{\tt std}(\rvdt) \le p_0] \cdot \| F_{\tt std}(D(\rvx + \rvdt)) - \mathbf{e}_y \|^2],
\end{equation}
where we denote $\hat{y}_{\rvdt} := f(\rvx+\rvdt)$ and $\mathbf{e}_y$ is the $y$-th unit vector in $\mathbb{R}^{|\mathcal{Y}|}$.

\textbf{Anti-consistency loss. }
On the other hand, the anti-consistency loss aims to detect whether the sample is over-confident, and penalizes it accordingly. The challenge here is that identifying over-confidence in a smoothed classifier requires checking for $p_{\hat{f}_\sigma}(\rvx) > p_0$ \eqref{eq:confidence}, which can be infeasible during training. We instead propose a simpler condition to this end, which only takes two independent Gaussian noise, say $\rvdt_1, \rvdt_2 \sim \mathcal{N}(0, \sigma^2 \mathbf{I})$. Specifically, we identify $(\rvx, y)$ as over-confident whenever (a) $\hat{y}_1 := f(\rvx+\rvdt_1)$ and $\hat{y}_2 := f(\rvx+\rvdt_2)$ match, while (b) they are incorrect, \ie $\hat{y}_1 \neq y$. 
Intuitively, such a case signals that the denoiser $D$ is often making a complete flip to the semantics of $\rvx + \rvdt$ (see Figure~\ref{fig:overconfidence} for an example), which we aim to penalize. A care should be taken, however, considering the possibility that $D(\rvx + \rvdt)$ indeed generates an in-distribution sample that falls into a different class: in this case, penalizing it may result in a decreased robustness of that sample. 
In these respects, our design of anti-consistency loss forces the two samples simply to have different predictions, by keeping at least one prediction as the original, while penalizing the counterpart. Denoting $\mathbf{p}_1 := F_{\tt std}(D(\rvx + \rvdt_1))$ and $\mathbf{p}_2 := F_{\tt std}(D(\rvx + \rvdt_2))$, we again apply the squared loss on $\mathbf{p}_1$ and $\mathbf{p}_2$ to implement the loss design, as the following:
\begin{equation}\label{eq:ac}
L_{\tt AC}(\rvx, y) := \mathbf{1}[\hat{y}_1 = \hat{y}_2 \text{ and } \hat{y}_1 \neq y] \cdot (\|\mathbf{p}_1 - \mathtt{sg}(\mathbf{p}_1)\|^2 + \|\mathbf{p}_{2}\|^2),
\end{equation}
where $\mathtt{sg}(\cdot)$ denotes the stopping gradient operation. 

\textbf{Overall objective. } Combining the two proposed losses, \ie the Brier loss and anti-consistency loss, defines a new regularization objective to add upon any pre-training objective for the denoiser $D$:
\begin{equation}\label{eq:total}
L(D) := L_{\tt Denoiser} + \lambda \cdot (L_{\tt Brier} + \alpha \cdot L_{\tt AC}),
\end{equation}
where $\lambda, \alpha > 0$ are hyperparameters. Here, $\alpha$ denotes the relative strength of $L_{\tt AC}$ over $L_{\tt Brier}$ in its regularization.\footnote{In our experiments, we simply use $\alpha=1$ unless otherwise specified.} Remark that increasing $\alpha$ would give more penalty on over-confident samples, which would lead the model to make more abstentions: therefore, this results in an increased accuracy particularly in cascaded smoothing (Section~\ref{ss:cascading}).

\section{Experiments}

\begin{figure*}[t]
    \centering
    \begin{minipage}{.65\textwidth}
    \centering
    \subfigure[CIFAR-10]
    {
        \includegraphics[width=0.47\linewidth]{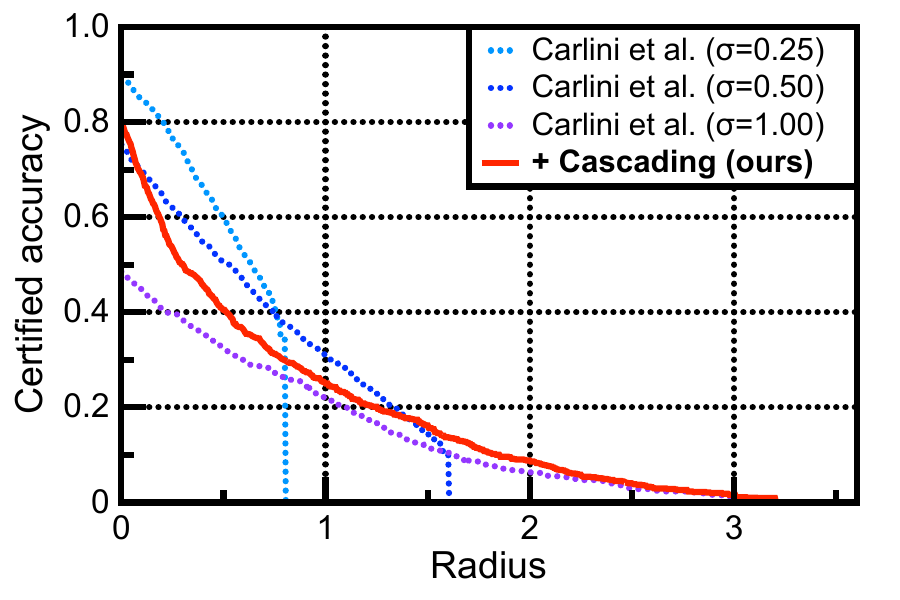}
    }
    \subfigure[ImageNet]
    {
        \includegraphics[width=0.47\linewidth]{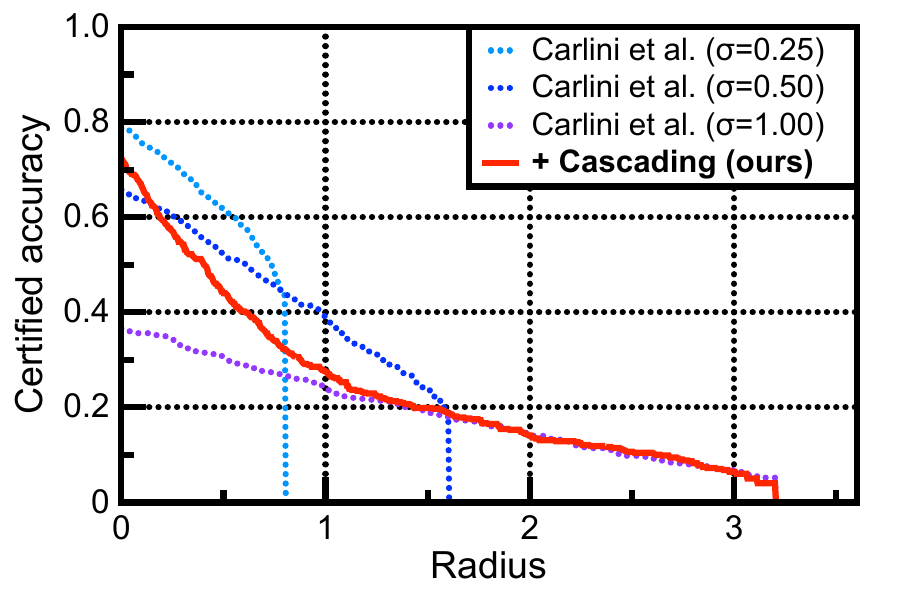}
    }
    \vspace{-0.05in}
    \caption{Comparison of certified test accuracy with \citet{carlini2022certified} on the effectiveness of cascaded smoothing.}\label{fig:cascaded}
    \end{minipage}
    \hspace*{\fill}
    \begin{minipage}{.31\textwidth}
    \centering
    \vspace{0.1in}
    \includegraphics[width=\linewidth]{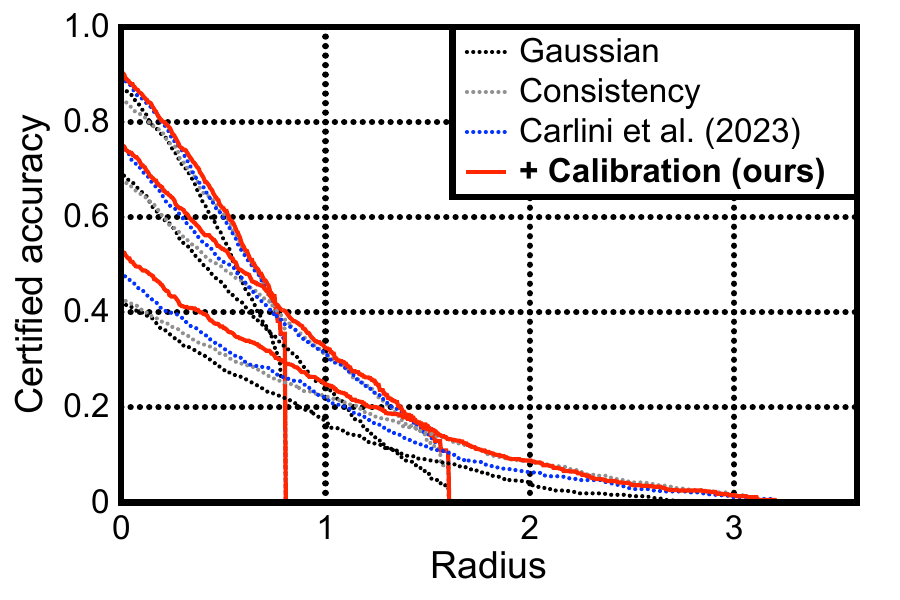}
    \vspace{-0.1in}
    \caption{Comparison of certified test accuracy on CIFAR-10 at $\sigma \in \{0.25, 0.50, 1.00\}$.}\label{fig:calib}
    \end{minipage}
    \vspace{-0.1in}
\end{figure*}

\begin{table*}[t]
\centering
\caption{Comparison of (a) certified accuracy, (b) empricial accuracy, and (c) average certified radius (ACR) on CIFAR-10. We set our result bold-faced whenever it achieves the best upon baselines.}\label{tab:cifar10}
\begin{adjustbox}{width=0.95\linewidth}
\begin{tabular}{clccccccccc}
\toprule
\multicolumn{2}{l}{CIFAR-10, ViT-B/16@224} & \multicolumn{3}{c}{Empirical accuracy (\%)} & \multicolumn{6}{c}{Certified accuracy at $\varepsilon$ (\%)} \\
\cmidrule{3-5} \cmidrule(l){6-11}
$\sigma$ & \multicolumn{1}{c}{Training} & C10   & C10-C & C10.1 & 0.0   & 0.5   & 1.0   & 1.5   & 2.0   & ACR \\
\cmidrule(r){1-2} \cmidrule{3-5} \cmidrule(l){6-10} \cmidrule(l){11-11}
\multirow{4}[2]{*}{0.25} & Gaussian  \cite{pmlr-v97-cohen19c} & 92.7  & 88.3  & {76.7} & 87.0  & 55.3  &       &       &       & 0.479 \\
      & Consistency  \cite{jeong2020consistency} & 89.5  & 84.7  & 67.6  & 84.9  & 60.7  &       &       &       & 0.517 \\
\cmidrule(r){2-2} 
& \citet{carlini2022certified} & 92.8  & 88.8  & 67.6  & 89.5  & 59.8  &       &       &       & 0.527 \\
      & \textbf{+ Calibration (ours)} & \textbf{94.5} & \textbf{89.4} & 69.2  & \textbf{90.3} & \textbf{61.9} &       &       &       & \textbf{0.535} \\
\midrule
\multirow{4}[2]{*}{0.50} & Gaussian  \cite{pmlr-v97-cohen19c} & 87.4  & 81.8  & 71.0  & 69.9  & 45.9  & 24.4  & 6.0   &       & 0.543 \\
      & Consistency  \cite{jeong2020consistency} & 80.7  & 75.0  & 59.6  & 67.7  & 49.2  & 31.2  & 13.1  &       & 0.617 \\
\cmidrule(r){2-2} 
& \citet{carlini2022certified} & 88.8  & 84.9  & 64.3  & {75.3} & 50.6  & 31.0  & 14.3  &       & 0.646 \\
      & \textbf{+ Calibration (ours)} & 88.6  & 84.2  & 63.7  & 75.1  & \textbf{53.3} & \textbf{32.5} & \textbf{15.3} &       & \textbf{0.664} \\
(0.25-0.50) & \textbf{\ \ \ \ + Cascading (ours)} & \textbf{92.5} & \textbf{89.1} & \textbf{78.5} & \textbf{85.1} & \textbf{53.3} & \textbf{32.9} & 14.9  &       & \textbf{0.680} \\
\midrule
\multirow{4}[2]{*}{1.00} & Gaussian  \cite{pmlr-v97-cohen19c} & 75.6  & 72.4  & 68.3  & 42.2  & 27.9  & 16.9  & 9.0   & 4.0   & 0.426 \\
      & Consistency  \cite{jeong2020consistency} & 65.8  & 61.3  & 53.6  & 43.1  & 31.1  & 22.3  & 14.6  & 8.7   & 0.513 \\
\cmidrule(r){2-2} 
& \citet{carlini2022certified} & 87.3  & 83.2  & 73.0  & 48.6  & 32.5  & 21.9  & 11.8  & 6.4   & 0.530 \\
      & \textbf{+ Calibration (ours)} & 83.3  & 79.3  & 60.9  & \textbf{52.6} & \textbf{36.7} & \textbf{24.9} & \textbf{16.1} & \textbf{8.8} & \textbf{0.577} \\
 (0.25-1.00) & \textbf{\ \ \ \ + Cascading (ours)} & \textbf{90.1} & \textbf{86.9} & \textbf{74.5} & \textbf{79.6} & \textbf{40.5} & \textbf{25.0} & \textbf{16.2} & \textbf{8.8} & \textbf{0.645} \\
\bottomrule
\end{tabular}%

\end{adjustbox}
\vspace{-0.1in}
\end{table*}

\begin{table*}[t]
\centering
\caption{Comparison of (a) certified accuracy, (b) empricial accuracy, and (c) average certified radius (ACR) on ImageNet. We set our result bold-faced whenever it achieves the best upon baselines.}\label{tab:imagenet}
\begin{adjustbox}{width=0.95\linewidth}
\begin{tabular}{lcccccccccc}
\toprule
\multicolumn{2}{l}{ImageNet, ViT-B/16@224} & \multicolumn{3}{c}{Empirical accuracy (\%)} & \multicolumn{6}{c}{Certified accuracy at $\varepsilon$ (\%)} \\
\cmidrule{3-5} \cmidrule(l){6-11}
Training & $\sigma$ & IN-1K & IN-R  & IN-A  & 0.0   & 0.5   & 1.0   & 1.5   & 2.0   & ACR \\
\cmidrule(r){1-2} \cmidrule{3-5} \cmidrule(l){6-10} \cmidrule(l){11-11}
\citet{carlini2022certified} & 0.25  & 80.9  & 69.3  & 35.3  & 78.8  & 61.4  &       &       &       & 0.517 \\
\cmidrule(r){1-2} \cmidrule{3-5} \cmidrule(l){6-10} \cmidrule(l){11-11}
\citet{carlini2022certified} & 0.50  & 79.5  & 67.5  & 32.3  & 65.8  & 52.2  & 38.6  & 23.6  &       & 0.703 \\
\textbf{+ Cascading (ours)} & (0.25-0.50) & \textbf{83.5} & \textbf{69.6} & \textbf{41.3} & \textbf{75.0} & \textbf{52.6} & \textbf{39.0} & 22.8  &       & \textbf{0.720} \\
\textbf{\ \ \ \ + Calibration (ours)} & (0.25-0.50) & \textbf{83.8} & \textbf{69.8} & \textbf{41.7} & \textbf{76.6} & \textbf{54.6} & \textbf{39.8} & 23.0  &       & \textbf{0.743} \\
\cmidrule(r){1-2} \cmidrule{3-5} \cmidrule(l){6-10} \cmidrule(l){11-11}
\citet{carlini2022certified} & 1.00  & 77.2  & 64.3  & 32.0  & 30.6  & 25.8  & 20.6  & 17.0  & 12.6  & 0.538 \\
\textbf{+ Cascading (ours)} & (0.25-1.00) & \textbf{82.6} & \textbf{69.8} & \textbf{40.5} & \textbf{69.0} & \textbf{42.4} & \textbf{26.6} & \textbf{19.0} & \textbf{14.6} & \textbf{0.752} \\
\textbf{\ \ \ \ + Calibration (ours)} & (0.25-1.00) & \textbf{83.2} & \textbf{69.5} & \textbf{40.8} & \textbf{72.5} & \textbf{44.0} & \textbf{27.5} & \textbf{19.9} & \textbf{14.1} & \textbf{0.775} \\
\bottomrule
\end{tabular}%

\end{adjustbox}
\vspace{-0.1in}
\end{table*}

We verify the effectiveness of our proposed schemes, (a) \emph{cascaded smoothing} and (b) \emph{diffusion calibration}, mainly on CIFAR-10 \cite{dataset/cifar} and ImageNet \cite{dataset/ilsvrc}: two standard datasets for an evaluation of certified $\ell_2$-robustness. 
We provide the detailed experimental setups, \eg training, datasets, hyperparameters, computes, \textit{etc.}, in Appendix~\ref{appendix:details}.

\textbf{Baselines. }
Our evaluation mainly compares with \emph{diffusion denoised smoothing} \cite{carlini2022certified}, the current state-of-the-art methodology in randomized smoothing. We additionally compare with two other training baselines from the literature, by considering models with the same classifier architecture but without the denoising step of \citet{carlini2022certified}. Specifically, we consider (a) \emph{Gaussian training} \cite{pmlr-v97-cohen19c}, which trains a classifier with Gaussian augmentation; and (b) \emph{Consistency} \cite{jeong2020consistency}, which additionally regularizes the variance of predictions over Gaussian noise in training. We select the baselines assuming practical scenarios where the training cost of the classifier side is crucial: other existing methods for smoothed classifiers often require much more costs, \eg $8$ times over Gaussian \cite{nips_salman19,Zhai2020MACER}. 

\textbf{Setups. } We follow \citet{carlini2022certified} for the choice of diffusion models: specifically, we use the 50M-parameter $32\times32$ diffusion model from \citet{pmlr-v139-nichol21a} for CIFAR-10, and the 552M-parameter $256\times256$ unconditional model from \citet{dhariwal2021adm} for ImageNet. For the classifier side, we use ViT-B/16 \cite{pmlr-v139-touvron21a} pre-trained via CLIP \cite{radford2021learning} throughout our experiments. For uses we fine-tune the model on each of CIFAR-10 and ImageNet via FT-CLIP \cite{dong2022ftclip}, resulting in classifiers that achieve 98.1\% and 85.2\% in top-1 accuracy on CIFAR-10 and ImageNet, respectively. 
Following the prior works, we mainly consider $\sigma \in \{0.25, 0.50, 1.00\}$ for smoothing in our experiments. 

\textbf{Evaluation metrics. } 
We consider two popular metrics in the literature when evaluating certified robustness of smoothed classifiers: (a) the \emph{approximate certified test accuracy} at $r$: the fraction of the test set which \textsc{Certify} \cite{pmlr-v97-cohen19c} classifies correctly with the radius larger {than $r$} without abstaining, and (b) the \emph{average certified radius} (ACR) \cite{Zhai2020MACER}: the average of certified radii on the test set $\mathcal{D}_{\tt test}$ while assigning incorrect samples as 0: \viz
$\mathrm{ACR} := \frac{1}{|\mathcal{D}_{\tt test}|}\sum_{(\rvx, y)\in \mathcal{D}_{\tt test}}[\mathrm{CR}(f, \sigma, \rvx) \cdot \mathds{1}_{\hat{f}(\rvx)=y}]$,
where $\mathrm{CR}(\cdot)$ denotes the certified radius that \textsc{Certify} returns. Throughout our experiments, we use $n=10,000$ noise samples to certify robustness for both CIFAR-10 and ImageNet. We follow \cite{pmlr-v97-cohen19c} for the other hyperparameters to run \textsc{Certify}, namely by $n_0=100$, and $\alpha=0.001$. In addition to certified accuracy, we also compare the \emph{empirical accuracy} of smoothed classifiers. Here, we define empirical accuracy by the fraction of test samples those are either (a) certifiably correct, or (b) abstained but correct in the \emph{clean} classifier: which can be a natural alternative especially at denoised smoothing. For this comparison, we use $n=100$ to evaluate empirical accuracy.  

Unlike the evaluation of \citet{carlini2022certified}, however, we do not compare cascaded smoothing with the \emph{envelop accuracy curve} over multiple smoothed classifiers at $\sigma\in\{0.25, 0.50, 1.00\}$: although the envelope curve can be a succinct proxy to compare methods, it can be somewhat misleading and unfair to compare the curve directly with an individual smoothed classifier. This is because the curve does not really construct a concrete classifier on its own: it additionally assumes that each test sample has prior knowledge on the value of $\sigma \in \{0.25, 0.5, 1.0\}$ to apply, which is itself challenging to infer. This is indeed what our proposal of cascaded smoothing addresses.

\vspace{-0.05in}
\subsection{Results}

\textbf{Cascaded smoothing. } Figure~\ref{fig:cascaded} visualizes the effect of cascaded smoothing we propose in the plots of certified accuracy, and the detailed results are summarized in Table~\ref{tab:cifar10} and \ref{tab:imagenet} as ``+~Cascading''. Overall, we observe that the certified robustness that cascaded smoothing offers can be highly desirable over the considered single-scale smoothed classifiers. Compared to the single-scale classifiers at highest $\sigma$, \eg $\sigma=1.0$ in Table~\ref{tab:cifar10} and \ref{tab:imagenet}, our cascaded classifiers across $\sigma\in\{0.25, 0.50, 1.00\}$ absolutely improve the certified accuracy at all the range of $\varepsilon$ by incorporating more accurate predictions from classifiers, \eg of lower $\sigma \in \{0.25, 0.50\}$. On the opposite side, \eg compared to \citet{carlini2022certified} at $\sigma=0.25$, the cascaded classifiers provide competitive certified clean accuracy ($\varepsilon=0$), \eg 89.5\%~\textit{vs.}~85.1\% on CIFAR-10, while being capable of offering a wider range of robustness certificates. Those considerations are indeed reflected quantitatively in terms of the improvements in ACRs. The existing gaps in the clean accuracy are in principle due to the errors in higher-$\sigma$ classifiers: \ie a better calibration, to let them better abstain, could potentially reduce the gaps.

\textbf{Diffusion calibration. } Next, we evaluate the effectiveness of our proposed diffusion fine-tuning scheme: ``+ Calibration'' in Table~\ref{tab:cifar10} and \ref{tab:imagenet} report the results on CIFAR-10 and ImageNet, respectively, and Figure~\ref{fig:calib} plots the CIFAR-10 results for each of $\sigma\in\{0.25, 0.50, 1.00\}$. Overall, on both CIFAR-10 and ImageNet, we observe that the proposed fine-tuning scheme could \emph{uniformly} improve certified accuracy across the range considered, even including the clean accuracy. This confirms that the tuning could essentially improve the accuracy-robustness trade-off rather than simply moving along itself. As provided in Table~\ref{tab:aoc}, we remark that simply pursuing only the Brier loss \eqref{eq:brier} may achieve a better ACR in overall, but with a decreased
accuracy: it is the role of the anti-consistency loss \eqref{eq:ac} to balance between the two, consequently to achieve a better trade-off afterwards.

\textbf{Empirical accuracy. } 
In practical scenarios of adopting smoothed classifiers for inference, it is up to users to decide how to deal with the ``abstained'' inputs. Here, we consider a possible candidate of simply outputting the \emph{standard} prediction instead for such inputs: in this way, the output could be noted as an ``uncertified'' prediction, while possibly being more accurate, \eg for in-distribution inputs.
Specifically, in Table~\ref{tab:cifar10} and \ref{tab:imagenet}, we consider a fixed standard classifier of CLIP-finetuned ViT-B/16 on CIFAR-10 (or ImageNet), and compare the empirical accuracy of smoothed classifiers on CIFAR-10, -10-C \cite{hendrycks2018benchmarking} and -10.1 \cite{recht2018cifar10} (or ImageNet, -R \cite{Hendrycks_2021_ICCV}, and -A \cite{Hendrycks_2021_CVPR}). Overall, the results show that our smoothed models could consistently outperform even in terms of empirical accuracy while maintaining high certified accuracy, \ie they abstain only when necessary. 
We additionally observe in Appendix~\ref{appendix:empirical} that the empirical accuracy of our smoothed model can be further improved by considering an ``ensemble'' with the clean classifier: 
\eg the ensemble improves the accuracy of our cascaded classifier ($\sigma \in \{0.25, 0.5\}$) on CIFAR-10-C by $88.8\% \rightarrow 95.0\%$, even outperforming the accuracy of the standard classifier of $93.4\%$. 

\begin{table*}[t]
\centering
\small
\caption{Comparison of ACR and certified accuracy of cascaded smooth classifiers on CIFAR-10 over different training and ablations. We use $\sigma \in \{0.25, 0.5, 1.0\}$ for cascaded smoothing. Bold and underline indicate the best and runner-up, respectively. Model reported in Table~\ref{tab:cifar10} is marked as grey.}\label{tab:aoc}
\begin{adjustbox}{width=0.85\linewidth}
\begin{tabular}{lccccccccc}
\toprule
 {Cascaded}, $\sigma\in\{0.25, 0.5, 1.0\}$     &       & \multicolumn{8}{c}{Certified accuracy at $\varepsilon$ (\%)} \\
\cmidrule{3-10}Training & ACR   & 0.00  & 0.25  & 0.50  & 0.75  & 1.00  & 1.25  & 1.50  & 1.75 \\
\midrule
Gaussian  \cite{pmlr-v97-cohen19c} & 0.470 & 70.7  & 46.9  & 31.7  & 23.7  & 16.8  & 12.4  & 9.2   & 6.5 \\
Consistency  \cite{jeong2020consistency} & 0.548 & 57.6  & 42.7  & 32.3  & 26.3  & 22.4  & 18.1  & 14.6  & 11.1 \\
\midrule
 \citet{carlini2022certified} & 0.579 & \underline{80.5} & 52.8  & 37.4  & 28.0  & 21.7  & 16.8  & 11.8  & 8.5 \\
 + Cross-entropy & 0.605 & 77.9  & 52.8  & 39.6  & 29.1  & 22.2  & 18.7  & 13.5  & 9.5 \\
 \textbf{+ Brier loss} & \textbf{0.666} & 77.6  & \textbf{55.7} & \textbf{41.7} & \textbf{32.7} & \textbf{26.8} & \textbf{21.7} & \textbf{16.3} & \textbf{11.9} \\
\rowcolor[HTML]{EFEFEF} \ \ \ \ \textbf{+ Anti-consist.} ($\alpha=1.0$) & \underline{0.645} & 79.6  & 53.5  & \underline{40.5}  & \underline{31.2}  & \underline{25.0}  & \underline{20.1}  & \underline{16.2}  & \underline{11.4} \\
 \ \ \ \ \textbf{+ Anti-consist.} ($\alpha=2.0$) & 0.625 & {80.1}  & {54.6}  & 39.2  & 30.6  & 24.0  & 18.5  & 14.5  & 9.9 \\
 \textbf{+ Anti-consist.} ($\alpha=1.0$)  & 0.566 & \textbf{83.9} & \underline{55.1} & 37.8  & 25.8  & 19.8  & 14.5  & 11.5  & 8.1  \\
\bottomrule
\end{tabular}%
\end{adjustbox}
\end{table*}
\begin{table*}[t]
\centering
\small
\begin{minipage}[t]{.56\textwidth}
    \centering
    \caption{Comparison of ACR and certified accuracy (\%) on CIFAR-10 across two fine-tuning schemes: \emph{diffusion} fine-tuning (``Diff.''; ours) and \emph{classifier} fine-tuning (``Class.'') \cite{carlini2022certified}. We use $\sigma=0.5$ for this experiment.}
    \label{tab:diff_cls}
    \begin{adjustbox}{width=\linewidth}
    \begin{tabular}{ccccccccc}
\toprule
\multicolumn{2}{c}{Fine-tuning} &       & \multicolumn{6}{c}{Certified accuracy at $\varepsilon$ (\%)} \\
\cmidrule{1-2} \cmidrule{4-9} 
 Diff. & Class. & ACR   & 0.00  & 0.25  & 0.50  & 0.75  & 1.00  & 1.25 \\
\cmidrule{1-2} \cmidrule(lr){3-3} \cmidrule{4-9}
 \xk   & \xk   & 0.639 & 75.3  & 62.0  & 50.6  & 39.3  & 31.0  & 22.9 \\
 \cmidrule{1-2} \cmidrule(lr){3-3} \cmidrule{4-9}
 \ck   & \xk   & 0.664 & 75.1  & 64.1  & 53.3  & 42.1  & \textbf{32.5} & 25.0 \\
 \xk   & \ck   & 0.664 & \textbf{76.0} & \textbf{64.7} & 52.7  & 42.1  & 32.4  & 24.4 \\
 \ck   & \ck   & \textbf{0.673} & 75.2  & \textbf{64.7} & \textbf{53.5} & \textbf{43.8} & \textbf{32.5} & \textbf{25.8} \\
\bottomrule
\end{tabular}%
    \end{adjustbox}
\end{minipage}
\hspace*{\fill}
\begin{minipage}[t]{.41\textwidth}
    \centering
    \caption{Comparison of model error rates on CIFAR-10 decomposed into (a) over-smoothing ($p \le p_0$) and (b) over-confidence ($p > p_0$), also with ACR.}
    \label{tab:decomposed}
    \begin{adjustbox}{width=\linewidth}
    \begin{tabular}{lccc}
\toprule
 & \multicolumn{2}{c}{Error rates (\%, $\rda$)} & \\ 
\cmidrule{2-3} 
Model ($\sigma=1.0$) & $p \le p_0$ & $p > p_0$ & ACR ($\gua$) \\
\midrule
\citet{carlini2022certified}  & 43.8  & \textbf{7.6} & 0.498 \\
\midrule
\textbf{Cascading} & 5.0   & 14.5 & \underline{0.579} \\
\ \ \textbf{+ Anti-consist.} & \underline{4.7}   & \underline{11.4}  &  0.566 \\
\textbf{\ \ \ \ + Brier loss} & \textbf{3.6}   & 16.8 & \textbf{0.645} \\
\bottomrule
\end{tabular}%
    \end{adjustbox}
\end{minipage}
\vspace{-0.1in}
\end{table*}

\vspace{-0.05in}
\subsection{Ablation study}

In Table~\ref{tab:aoc}, we compare our result with other training baselines as well as some ablations for a component-wise analysis, particularly focusing on their performance in cascaded smoothing across $\sigma\in \{0.25, 0.50, 1.00\}$ on CIFAR-10. Here, we highlight several remarks from the results, and provide the more detailed study, \eg the effect of $p_0$ \eqref{eq:casc}, in Appendix~\ref{appendix:ablation}.

\textbf{Cascading from other training. } ``Gaussian'' and ``Consistency'' in Table~\ref{tab:aoc} report the certified robustness of cascaded classifiers where each of single-scale models is individually trained by the method. Even while their (certified) clean accuracy of $\sigma=0.25$ models are competitive with those of denoising-based models, their collective accuracy significantly degraded after cascading, and interestingly the drop is much more significant on ``Consistency'', although it did provide more robustness at larger $\varepsilon$. Essentially, for a high clean accuracy in cascaded smoothing the individual classifiers should make consistent predictions although their confidence may differ: the results imply that individual training of classifiers, without a shared denoiser, may break this consistency. This supports an architectural benefit of denoised smoothing for a use as cascaded smoothing. 

\textbf{Cross-entropy~\textit{vs.}~Brier. } ``+ Cross-entropy'' in Table~\ref{tab:aoc} considers an ablation of the Brier loss \eqref{eq:brier} where the loss is replaced by the standard cross-entropy: although it indeed improves ACR compared to \citet{carlini2022certified} and achieves the similar clean accuracy with ``+ Brier loss'', its gain in overall certified robustness is significantly inferior to that of Brier loss as also reflected in the worse ACR. The superiority of the ``squared'' loss instead of the ``log'' loss suggests that it may not be necessary to optimize the confidence of individual denoised image toward a value strictly close to 1, which is reasonable considering the severity of noise usually considered for randomized smoothing.

\textbf{Anti-consistency loss. } The results marked as ``+ Anti-consist.'' in Table~\ref{tab:aoc}, on the other hand, validates the effectiveness of the anti-consistency loss \eqref{eq:ac}. Compared to ``+ Brier loss'', which is equivalent to the case when $\alpha=0.0$ in \eqref{eq:total}, we observe that adding anti-consistency results in a slight decrease in ACR but with an increase in clean accuracy. The increased accuracy of cascaded smoothing indicates that the fine-tuning could successfully reduce over-confidence, and this tendency continues at larger $\alpha=2.0$. 
Even with the decrease in ACR over the Brier loss, the overall ACRs attained are still superior to others, confirming the effectiveness of our proposed loss in total \eqref{eq:total}.

\textbf{Classifier fine-tuning. } In Table~\ref{tab:diff_cls}, we compare our proposed diffusion fine-tuning (Section~\ref{ss:calibration}) with another possible scheme of \emph{classifier} fine-tuning, of which the effectiveness has been shown by \citet{carlini2022certified}. Overall, we observe that fine-tuning both classifiers and denoiser can bring their complementary effects to improve denoised smoothing in ACR, and the diffusion fine-tuning itself could obtain a comparable gain to the classifier fine-tuning. The (somewhat counter-intuitive) effectiveness of diffusion fine-tuning confirms that denoising process can be also biased as well as classifiers to make over-confident errors.
We further remark two aspects where classifier fine-tuning can be less practical, especially when it comes with the cascaded smoothing pipeline we propose:
\begin{enumerate}[label=(\alph*),leftmargin=0.25in]
    \vspace{-0.03in}
    \item To perform cascaded smoothing at multiple noise scales, classifier fine-tuning would require separate runs of training for optimal models per scale, also resulting in multiple different models to load - which can be less scalable in terms of memory efficiency.  
    \item In a wider context of denoised smoothing, the classifier part is often assumed to be a model at scale, even covering cases when it is a ``black-box'' model, \eg public APIs. Classifier fine-tuning, in such cases, can become prohibitive or even impossible.
    \vspace{-0.03in}
\end{enumerate}
With respect to (a) and (b), denoiser fine-tuning we propose offers a more efficient inference architecture: it can handle multiple noise scales jointly with a single diffusion model, while also being applicable to the extreme scenario when the classifier model is black-box.

\textbf{Over-smoothing and over-confidence. } As proposed in Section~\ref{ss:tradeoffs}, errors from smoothed classifiers can be decomposed into two: \ie $p \le p_0$ for over-smoothing, and $p > p_0$ over-confidence, respectively. In Table~\ref{tab:decomposed}, we further report a detailed breakdown of the model errors on CIFAR-10, assuming $\sigma=1.0$. Overall, we observe that over-smoothing can be the major source of errors especially at such a high noise level, and our proposed cascading dramatically reduces the error. Although we find cascading could increase over-confidence from accumulating errors through multiple inferences, our diffusion fine-tuning could alleviate the errors jointly reducing the over-smoothing as well.

\vspace{-0.05in}
\section{Conclusion and Discussion}

Randomized smoothing has been traditionally viewed as a somewhat less-practical approach, perhaps due to its cost in inference time and impact on accuracy.
In another perspective, to our knowledge, it is currently one of a few existing approaches that is prominent in pursuing \emph{adversarial robustness at scale}, 
\eg in a paradigm where training cost scales faster than computing power. 
This work aims to make randomized smoothing more practical, particularly concerning on scalable scenarios of robustifying large pre-trained classifiers. 
We believe our proposals in this respect, \ie \emph{cascaded smoothing} and \emph{diffusion calibration}, can be a useful step towards building safer AI-based systems.

\textbf{Limitation. }
A practical downside of randomized smoothing is in its increased inference cost, mainly from the majority voting procedure per inference. Our method also essentially possesses this practical bottleneck, and the proposed multi-scale smoothing scheme may further increase the cost from taking multiple smoothed inferences. Yet, we note that randomized smoothing itself is equipped with many practical axes to reduce its inference cost by compensating with abstention: for example, one can reduce the number of noise samples, \eg to $n=100$. It would be an important future direction to explore practices for a better trade-off between the inference cost and robustness of smoothed classifiers, which could eventually open up a feasible way to obtain adversarial robustness at scale.

\textbf{Broader impact. }
Deploying deep learning based systems into the real-world, especially when they are of security-concerned \cite{caruana2015intelligible, yurtsever2020survey}, poses risks for both companies and customers, and we researchers are responsible to make this technology more reliable through research towards \emph{AI~safety} \cite{amodei2016concrete,hendrycks2021unsolved}.
\emph{Adversarial robustness}, that we focus on in this work, is one of the central parts of this direction, but one should also recognize that adversarial robustness is still a bare minimum requirement for reliable deep learning. 
The future research should also explore more diverse notions of AI Safety to establish a realistic sense of security for practitioners, \eg monitoring and alignment research, to name a few.

\begin{ack}

This work was partly supported by Center for Applied Research in Artificial Intelligence (CARAI) grant funded by Defense Acquisition Program Administration (DAPA) and Agency for Defense Development (ADD) (UD230017TD), and by Institute of Information \& communications Technology Planning \& Evaluation (IITP) grant funded by the Korea government (MSIT) (No.2019-0-00075, Artificial Intelligence Graduate School Program (KAIST)). 
We are grateful to the Center for AI Safety (CAIS) for generously providing compute resources that supported a significant portion of the experiments conducted in this work. 
We thank Kyuyoung Kim and Sihyun Yu for the proofreading of our manuscript, and Kyungmin Lee for the initial discussions. We also thank the anonymous reviewers for their valuable comments to improve our manuscript. 

% Use unnumbered first level headings for the acknowledgments. All acknowledgments
% go at the end of the paper before the list of references. Moreover, you are required to declare
% funding (financial activities supporting the submitted work) and competing interests (related financial activities outside the submitted work).
% More information about this disclosure can be found at: \url{https://neurips.cc/Conferences/2023/PaperInformation/FundingDisclosure}.

% Do {\bf not} include this section in the anonymized submission, only in the final paper. You can use the \texttt{ack} environment provided in the style file to autmoatically hide this section in the anonymized submission.
\end{ack}

{\small

\bibliographystyle{plainnat}
\bibliography{references}

\begin{thebibliography}{77}
\providecommand{\natexlab}[1]{#1}
\providecommand{\url}[1]{\texttt{#1}}
\expandafter\ifx\csname urlstyle\endcsname\relax
  \providecommand{\doi}[1]{doi: #1}\else
  \providecommand{\doi}{doi: \begingroup \urlstyle{rm}\Url}\fi

\bibitem[Alfarra et~al.(2022)Alfarra, Bibi, Torr, and Ghanem]{alfarra2022data}
Motasem Alfarra, Adel Bibi, Philip~HS Torr, and Bernard Ghanem.
\newblock Data dependent randomized smoothing.
\newblock In \emph{Uncertainty in Artificial Intelligence}, pages 64--74. PMLR, 2022.

\bibitem[Amodei et~al.(2016)Amodei, Olah, Steinhardt, Christiano, Schulman, and Man{\'e}]{amodei2016concrete}
Dario Amodei, Chris Olah, Jacob Steinhardt, Paul Christiano, John Schulman, and Dan Man{\'e}.
\newblock Concrete problems in {AI} safety.
\newblock \emph{arXiv preprint arXiv:1606.06565}, 2016.

\bibitem[Athalye et~al.(2018)Athalye, Carlini, and Wagner]{pmlr-v80-athalye18a}
Anish Athalye, Nicholas Carlini, and David Wagner.
\newblock Obfuscated gradients give a false sense of security: Circumventing defenses to adversarial examples.
\newblock In \emph{Proceedings of the 35th International Conference on Machine Learning}, volume~80 of \emph{Proceedings of Machine Learning Research}, pages 274--283, Stockholmsmässan, Stockholm Sweden, 10--15 Jul 2018. PMLR.
\newblock URL \url{http://proceedings.mlr.press/v80/athalye18a.html}.

\bibitem[Balunovic and Vechev(2020)]{Balunovic2020Adversarial}
Mislav Balunovic and Martin Vechev.
\newblock Adversarial training and provable defenses: Bridging the gap.
\newblock In \emph{International Conference on Learning Representations}, 2020.
\newblock URL \url{https://openreview.net/forum?id=SJxSDxrKDr}.

\bibitem[Bonferroni(1936)]{bonferroni1936teoria}
Carlo Bonferroni.
\newblock Teoria statistica delle classi e calcolo delle probabilita.
\newblock \emph{Pubblicazioni del R Istituto Superiore di Scienze Economiche e Commericiali di Firenze}, 8:\penalty0 3--62, 1936.

\bibitem[Brier(1950)]{brier1950verification}
Glenn~W Brier.
\newblock Verification of forecasts expressed in terms of probability.
\newblock \emph{Monthly weather review}, 78\penalty0 (1):\penalty0 1--3, 1950.

\bibitem[Brohan et~al.(2022)Brohan, Chebotar, Finn, Hausman, Herzog, Ho, Ibarz, Irpan, Jang, Julian, et~al.]{brohan2022can}
Anthony Brohan, Yevgen Chebotar, Chelsea Finn, Karol Hausman, Alexander Herzog, Daniel Ho, Julian Ibarz, Alex Irpan, Eric Jang, Ryan Julian, et~al.
\newblock Do as {I} can, not as {I} say: Grounding language in robotic affordances.
\newblock In \emph{6th Annual Conference on Robot Learning}, 2022.

\bibitem[Brown et~al.(2020)Brown, Mann, Ryder, Subbiah, Kaplan, Dhariwal, Neelakantan, Shyam, Sastry, Askell, et~al.]{brown2020language}
Tom Brown, Benjamin Mann, Nick Ryder, Melanie Subbiah, Jared~D Kaplan, Prafulla Dhariwal, Arvind Neelakantan, Pranav Shyam, Girish Sastry, Amanda Askell, et~al.
\newblock Language models are few-shot learners.
\newblock In \emph{Advances in Neural Information Processing Systems}, volume~33, pages 1877--1901, 2020.

\bibitem[Carlini et~al.(2019)Carlini, Athalye, Papernot, Brendel, Rauber, Tsipras, Goodfellow, and Madry]{carlini2019evaluating}
Nicholas Carlini, Anish Athalye, Nicolas Papernot, Wieland Brendel, Jonas Rauber, Dimitris Tsipras, Ian Goodfellow, and Aleksander Madry.
\newblock On evaluating adversarial robustness.
\newblock \emph{arXiv preprint arXiv:1902.06705}, 2019.

\bibitem[Carlini et~al.(2021)Carlini, Tram{\`e}r, Wallace, Jagielski, Herbert-Voss, Lee, Roberts, Brown, Song, Erlingsson, et~al.]{carlini2021extracting}
Nicholas Carlini, Florian Tram{\`e}r, Eric Wallace, Matthew Jagielski, Ariel Herbert-Voss, Katherine Lee, Adam Roberts, Tom Brown, Dawn Song, Ulfar Erlingsson, et~al.
\newblock Extracting training data from large language models.
\newblock In \emph{30th USENIX Security Symposium}, pages 2633--2650, 2021.

\bibitem[Carlini et~al.(2023)Carlini, Tramer, Dvijotham, Rice, Sun, and Kolter]{carlini2022certified}
Nicholas Carlini, Florian Tramer, Krishnamurthy~Dj Dvijotham, Leslie Rice, Mingjie Sun, and J~Zico Kolter.
\newblock ({C}ertified!!) adversarial robustness for free!
\newblock In \emph{The Eleventh International Conference on Learning Representations}, 2023.
\newblock URL \url{https://openreview.net/forum?id=JLg5aHHv7j}.

\bibitem[Caruana et~al.(2015)Caruana, Lou, Gehrke, Koch, Sturm, and Elhadad]{caruana2015intelligible}
Rich Caruana, Yin Lou, Johannes Gehrke, Paul Koch, Marc Sturm, and Noemie Elhadad.
\newblock Intelligible models for healthcare: Predicting pneumonia risk and hospital 30-day readmission.
\newblock In \emph{Proceedings of the 21th ACM SIGKDD International Conference on Knowledge Discovery and Data Mining}, pages 1721--1730, 2015.

\bibitem[Chen et~al.(2021)Chen, Kong, Yu, Luque, Goldstein, and Huang]{chen2021insta}
Chen Chen, Kezhi Kong, Peihong Yu, Juan Luque, Tom Goldstein, and Furong Huang.
\newblock Insta-{RS}: Instance-wise randomized smoothing for improved robustness and accuracy.
\newblock \emph{arXiv preprint arXiv:2103.04436}, 2021.

\bibitem[Clopper and Pearson(1934)]{clopper1934use}
Charles~J Clopper and Egon~S Pearson.
\newblock The use of confidence or fiducial limits illustrated in the case of the binomial.
\newblock \emph{Biometrika}, 26\penalty0 (4):\penalty0 404--413, 1934.

\bibitem[Cohen et~al.(2019)Cohen, Rosenfeld, and Kolter]{pmlr-v97-cohen19c}
Jeremy Cohen, Elan Rosenfeld, and Zico Kolter.
\newblock Certified adversarial robustness via randomized smoothing.
\newblock In Kamalika Chaudhuri and Ruslan Salakhutdinov, editors, \emph{Proceedings of the 36th International Conference on Machine Learning}, volume~97 of \emph{Proceedings of Machine Learning Research}, pages 1310--1320, Long Beach, California, USA, 09--15 Jun 2019. PMLR.
\newblock URL \url{http://proceedings.mlr.press/v97/cohen19c.html}.

\bibitem[Croce and Hein(2020)]{Croce2020Provable}
Francesco Croce and Matthias Hein.
\newblock Provable robustness against all adversarial $l_p$-perturbations for $p\geq 1$.
\newblock In \emph{International Conference on Learning Representations}, 2020.

\bibitem[Croce et~al.(2020)Croce, Andriushchenko, Sehwag, Debenedetti, Flammarion, Chiang, Mittal, and Hein]{croce2020robustbench}
Francesco Croce, Maksym Andriushchenko, Vikash Sehwag, Edoardo Debenedetti, Nicolas Flammarion, Mung Chiang, Prateek Mittal, and Matthias Hein.
\newblock Robust{B}ench: a standardized adversarial robustness benchmark.
\newblock \emph{arXiv preprint arXiv:2010.09670}, 2020.

\bibitem[Dhariwal and Nichol(2021)]{dhariwal2021adm}
Prafulla Dhariwal and Alexander Nichol.
\newblock Diffusion models beat {GAN}s on image synthesis.
\newblock In M.~Ranzato, A.~Beygelzimer, Y.~Dauphin, P.S. Liang, and J.~Wortman Vaughan, editors, \emph{Advances in Neural Information Processing Systems}, volume~34, pages 8780--8794. Curran Associates, Inc., 2021.
\newblock URL \url{https://proceedings.neurips.cc/paper_files/paper/2021/file/49ad23d1ec9fa4bd8d77d02681df5cfa-Paper.pdf}.

\bibitem[Dong et~al.(2022)Dong, Bao, Zhang, Chen, Shuyang, Zhang, Yuan, Chen, Wen, and Yu]{dong2022ftclip}
Xiaoyi Dong, Jianmin Bao, Ting Zhang, Dongdong Chen, Gu~Shuyang, Weiming Zhang, Lu~Yuan, Dong Chen, Fang Wen, and Nenghai Yu.
\newblock C{LIP} itself is a strong fine-tuner: Achieving 85.7\% and 88.0\% top-1 accuracy with {ViT-B} and {ViT-L} on {I}mage{N}et.
\newblock \emph{arXiv preprint arXiv:2212.06138}, 2022.

\bibitem[Eiras et~al.(2021)Eiras, Alfarra, Kumar, Torr, Dokania, Ghanem, and Bibi]{eiras2021ancer}
Francisco Eiras, Motasem Alfarra, M.~Pawan Kumar, Philip H.~S. Torr, Puneet~K. Dokania, Bernard Ghanem, and Adel Bibi.
\newblock A{NCER}: Anisotropic certification via sample-wise volume maximization, 2021.

\bibitem[Elsayed et~al.(2018)Elsayed, Shankar, Cheung, Papernot, Kurakin, Goodfellow, and Sohl-Dickstein]{elsayed2018adversarial}
Gamaleldin Elsayed, Shreya Shankar, Brian Cheung, Nicolas Papernot, Alexey Kurakin, Ian Goodfellow, and Jascha Sohl-Dickstein.
\newblock Adversarial examples that fool both computer vision and time-limited humans.
\newblock \emph{Advances in Neural Information Processing Systems}, 31, 2018.

\bibitem[Fischer et~al.(2020)Fischer, Baader, and Vechev]{fischer2020certified}
Marc Fischer, Maximilian Baader, and Martin Vechev.
\newblock Certified defense to image transformations via randomized smoothing.
\newblock \emph{Advances in Neural information processing systems}, 33:\penalty0 8404--8417, 2020.

\bibitem[Gehr et~al.(2018)Gehr, Mirman, Drachsler-Cohen, Tsankov, Chaudhuri, and Vechev]{gehr2018ai2}
Timon Gehr, Matthew Mirman, Dana Drachsler-Cohen, Petar Tsankov, Swarat Chaudhuri, and Martin Vechev.
\newblock Ai2: Safety and robustness certification of neural networks with abstract interpretation.
\newblock In \emph{IEEE Symposium on Security and Privacy}, 2018.

\bibitem[Goodman(1965)]{goodman1965simultaneous}
Leo~A Goodman.
\newblock On simultaneous confidence intervals for multinomial proportions.
\newblock \emph{Technometrics}, 7\penalty0 (2):\penalty0 247--254, 1965.

\bibitem[Gowal et~al.(2019)Gowal, Dvijotham, Stanforth, Bunel, Qin, Uesato, Arandjelovic, Mann, and Kohli]{gowal2019scalable}
Sven Gowal, Krishnamurthy~Dj Dvijotham, Robert Stanforth, Rudy Bunel, Chongli Qin, Jonathan Uesato, Relja Arandjelovic, Timothy Mann, and Pushmeet Kohli.
\newblock Scalable verified training for provably robust image classification.
\newblock In \emph{IEEE/CVF International Conference on Computer Vision}, pages 4842--4851, 2019.

\bibitem[Guo et~al.(2022)Guo, Lee, Leclerc, Dapello, Rao, Madry, and Dicarlo]{guo2022adversarially}
Chong Guo, Michael Lee, Guillaume Leclerc, Joel Dapello, Yug Rao, Aleksander Madry, and James Dicarlo.
\newblock Adversarially trained neural representations are already as robust as biological neural representations.
\newblock In \emph{International Conference on Machine Learning}, pages 8072--8081. PMLR, 2022.

\bibitem[Hendrycks and Dietterich(2019)]{hendrycks2018benchmarking}
Dan Hendrycks and Thomas Dietterich.
\newblock Benchmarking neural network robustness to common corruptions and perturbations.
\newblock In \emph{International Conference on Learning Representations}, 2019.
\newblock URL \url{https://openreview.net/forum?id=HJz6tiCqYm}.

\bibitem[Hendrycks et~al.(2021{\natexlab{a}})Hendrycks, Basart, Mu, Kadavath, Wang, Dorundo, Desai, Zhu, Parajuli, Guo, Song, Steinhardt, and Gilmer]{Hendrycks_2021_ICCV}
Dan Hendrycks, Steven Basart, Norman Mu, Saurav Kadavath, Frank Wang, Evan Dorundo, Rahul Desai, Tyler Zhu, Samyak Parajuli, Mike Guo, Dawn Song, Jacob Steinhardt, and Justin Gilmer.
\newblock The many faces of robustness: A critical analysis of out-of-distribution generalization.
\newblock In \emph{Proceedings of the IEEE/CVF International Conference on Computer Vision}, pages 8340--8349, October 2021{\natexlab{a}}.

\bibitem[Hendrycks et~al.(2021{\natexlab{b}})Hendrycks, Carlini, Schulman, and Steinhardt]{hendrycks2021unsolved}
Dan Hendrycks, Nicholas Carlini, John Schulman, and Jacob Steinhardt.
\newblock Unsolved problems in {ML} safety.
\newblock \emph{arXiv preprint arXiv:2109.13916}, 2021{\natexlab{b}}.

\bibitem[Hendrycks et~al.(2021{\natexlab{c}})Hendrycks, Zhao, Basart, Steinhardt, and Song]{Hendrycks_2021_CVPR}
Dan Hendrycks, Kevin Zhao, Steven Basart, Jacob Steinhardt, and Dawn Song.
\newblock Natural adversarial examples.
\newblock In \emph{Proceedings of the IEEE/CVF Conference on Computer Vision and Pattern Recognition}, pages 15262--15271, June 2021{\natexlab{c}}.

\bibitem[Ho et~al.(2020)Ho, Jain, and Abbeel]{ho2020denoising}
Jonathan Ho, Ajay Jain, and Pieter Abbeel.
\newblock Denoising diffusion probabilistic models.
\newblock \emph{Advances in Neural Information Processing Systems}, 33:\penalty0 6840--6851, 2020.

\bibitem[Horv{\'a}th et~al.(2022{\natexlab{a}})Horv{\'a}th, Mueller, Fischer, and Vechev]{horvath2022boosting}
Mikl{\'o}s~Z. Horv{\'a}th, Mark~Niklas Mueller, Marc Fischer, and Martin Vechev.
\newblock Boosting randomized smoothing with variance reduced classifiers.
\newblock In \emph{International Conference on Learning Representations}, 2022{\natexlab{a}}.
\newblock URL \url{https://openreview.net/forum?id=mHu2vIds_-b}.

\bibitem[Horv{\'a}th et~al.(2022{\natexlab{b}})Horv{\'a}th, M{\"u}ller, Fischer, and Vechev]{horvath2022robust}
Mikl{\'o}s~Z. Horv{\'a}th, Mark~Niklas M{\"u}ller, Marc Fischer, and Martin Vechev.
\newblock Robust and accurate -- compositional architectures for randomized smoothing.
\newblock In \emph{ICLR 2022 Workshop on Socially Responsible Machine Learning}, 2022{\natexlab{b}}.

\bibitem[Hyv{\"a}rinen and Dayan(2005)]{hyvarinen2005estimation}
Aapo Hyv{\"a}rinen and Peter Dayan.
\newblock Estimation of non-normalized statistical models by score matching.
\newblock \emph{Journal of Machine Learning Research}, 6\penalty0 (4), 2005.

\bibitem[Jeong and Shin(2020)]{jeong2020consistency}
Jongheon Jeong and Jinwoo Shin.
\newblock Consistency regularization for certified robustness of smoothed classifiers.
\newblock \emph{Advances in Neural Information Processing Systems}, 33:\penalty0 10558--10570, 2020.

\bibitem[Jeong et~al.(2021)Jeong, Park, Kim, Lee, Kim, and Shin]{jeong2021smoothmix}
Jongheon Jeong, Sejun Park, Minkyu Kim, Heung-Chang Lee, Doguk Kim, and Jinwoo Shin.
\newblock Smooth{M}ix: Training confidence-calibrated smoothed classifiers for certified robustness.
\newblock In \emph{Advances in Neural Information Processing Systems}, 2021.
\newblock URL \url{https://openreview.net/forum?id=nlEQMVBD359}.

\bibitem[Jeong et~al.(2023)Jeong, Kim, and Shin]{jeong2023confidence}
Jongheon Jeong, Seojin Kim, and Jinwoo Shin.
\newblock Confidence-aware training of smoothed classifiers for certified robustness.
\newblock In \emph{Proceedings of the AAAI Conference on Artificial Intelligence}, volume~37, pages 8005--8013, 2023.
\newblock \doi{10.1609/aaai.v37i7.25968}.
\newblock URL \url{https://ojs.aaai.org/index.php/AAAI/article/view/25968}.

\bibitem[Kaplan et~al.(2020)Kaplan, McCandlish, Henighan, Brown, Chess, Child, Gray, Radford, Wu, and Amodei]{kaplan2020scaling}
Jared Kaplan, Sam McCandlish, Tom Henighan, Tom~B. Brown, Benjamin Chess, Rewon Child, Scott Gray, Alec Radford, Jeffrey Wu, and Dario Amodei.
\newblock Scaling laws for neural language models, 2020.

\bibitem[Krizhevsky(2009)]{dataset/cifar}
Alex Krizhevsky.
\newblock Learning multiple layers of features from tiny images.
\newblock Technical report, Department of Computer Science, University of Toronto, 2009.

\bibitem[Lan et~al.(2022)Lan, Zhang, Wu, Tsai, Wu, and Hsieh]{lan2022are}
Li-Cheng Lan, Huan Zhang, Ti-Rong Wu, Meng-Yu Tsai, I-Chen Wu, and Cho-Jui Hsieh.
\newblock Are {AlphaZero}-like agents robust to adversarial perturbations?
\newblock In Alice~H. Oh, Alekh Agarwal, Danielle Belgrave, and Kyunghyun Cho, editors, \emph{Advances in Neural Information Processing Systems}, 2022.
\newblock URL \url{https://openreview.net/forum?id=yZ_JlZaOCzv}.

\bibitem[Lecuyer et~al.(2019)Lecuyer, Atlidakis, Geambasu, Hsu, and Jana]{lecuyer2019certified}
Mathias Lecuyer, Vaggelis Atlidakis, Roxana Geambasu, Daniel Hsu, and Suman Jana.
\newblock Certified robustness to adversarial examples with differential privacy.
\newblock In \emph{2019 IEEE Symposium on Security and Privacy (SP)}, pages 656--672. IEEE, 2019.

\bibitem[Lee(2021)]{lee2021provable}
Kyungmin Lee.
\newblock Provable defense by denoised smoothing with learned score function.
\newblock In \emph{ICLR Workshop on Security and Safety in Machine Learning Systems}, 2021.

\bibitem[Levine and Feizi(2021)]{levine2021improved}
Alexander~J Levine and Soheil Feizi.
\newblock Improved, deterministic smoothing for $l_1$ certified robustness.
\newblock In \emph{International Conference on Machine Learning}, pages 6254--6264. PMLR, 2021.

\bibitem[Li et~al.(2020)Li, Qi, Xie, and Li]{li2020sokcertified}
Linyi Li, Xiangyu Qi, Tao Xie, and Bo~Li.
\newblock {SoK}: Certified robustness for deep neural networks.
\newblock \emph{arXiv preprint arXiv:2009.04131}, 2020.

\bibitem[Li et~al.(2022)Li, Zhang, Xie, and Li]{pmlr-v162-li22aa}
Linyi Li, Jiawei Zhang, Tao Xie, and Bo~Li.
\newblock Double sampling randomized smoothing.
\newblock In Kamalika Chaudhuri, Stefanie Jegelka, Le~Song, Csaba Szepesvari, Gang Niu, and Sivan Sabato, editors, \emph{Proceedings of the 39th International Conference on Machine Learning}, volume 162 of \emph{Proceedings of Machine Learning Research}, pages 13163--13208. PMLR, 17--23 Jul 2022.
\newblock URL \url{https://proceedings.mlr.press/v162/li22aa.html}.

\bibitem[Liu et~al.(2020)Liu, Feng, Wang, and Dong]{liu2020enhancing}
Chizhou Liu, Yunzhen Feng, Ranran Wang, and Bin Dong.
\newblock Enhancing certified robustness via smoothed weighted ensembling, 2020.

\bibitem[Madry et~al.(2018)Madry, Makelov, Schmidt, Tsipras, and Vladu]{madry2018towards}
Aleksander Madry, Aleksandar Makelov, Ludwig Schmidt, Dimitris Tsipras, and Adrian Vladu.
\newblock Towards deep learning models resistant to adversarial attacks.
\newblock In \emph{International Conference on Learning Representations}, 2018.
\newblock URL \url{https://openreview.net/forum?id=rJzIBfZAb}.

\bibitem[Mao et~al.(2023)Mao, Geng, Yang, Wang, and Vondrick]{mao2022understanding}
Chengzhi Mao, Scott Geng, Junfeng Yang, Xin Wang, and Carl Vondrick.
\newblock Understanding zero-shot adversarial robustness for large-scale models.
\newblock In \emph{International Conference on Learning Representations}, 2023.
\newblock URL \url{https://openreview.net/forum?id=P4bXCawRi5J}.

\bibitem[Mirman et~al.(2018)Mirman, Gehr, and Vechev]{pmlr-v80-mirman18b}
Matthew Mirman, Timon Gehr, and Martin Vechev.
\newblock Differentiable abstract interpretation for provably robust neural networks.
\newblock In \emph{International Conference on Machine Learning}, volume~80, pages 3578--3586, 10--15 Jul 2018.
\newblock URL \url{http://proceedings.mlr.press/v80/mirman18b.html}.

\bibitem[Mohapatra et~al.(2020)Mohapatra, Ko, Weng, Chen, Liu, and Daniel]{mohapatra2020higher}
Jeet Mohapatra, Ching-Yun Ko, Tsui-Wei Weng, Pin-Yu Chen, Sijia Liu, and Luca Daniel.
\newblock Higher-order certification for randomized smoothing.
\newblock \emph{Advances in Neural Information Processing Systems}, 33:\penalty0 4501--4511, 2020.

\bibitem[Mueller et~al.(2021)Mueller, Balunovic, and Vechev]{mueller2021certify}
Mark~Niklas Mueller, Mislav Balunovic, and Martin Vechev.
\newblock Certify or predict: Boosting certified robustness with compositional architectures.
\newblock In \emph{International Conference on Learning Representations}, 2021.
\newblock URL \url{https://openreview.net/forum?id=USCNapootw}.

\bibitem[Nichol and Dhariwal(2021)]{pmlr-v139-nichol21a}
Alexander~Quinn Nichol and Prafulla Dhariwal.
\newblock Improved denoising diffusion probabilistic models.
\newblock In Marina Meila and Tong Zhang, editors, \emph{Proceedings of the 38th International Conference on Machine Learning}, volume 139 of \emph{Proceedings of Machine Learning Research}, pages 8162--8171. PMLR, 18--24 Jul 2021.
\newblock URL \url{https://proceedings.mlr.press/v139/nichol21a.html}.

\bibitem[Radford et~al.(2021)Radford, Kim, Hallacy, Ramesh, Goh, Agarwal, Sastry, Askell, Mishkin, Clark, et~al.]{radford2021learning}
Alec Radford, Jong~Wook Kim, Chris Hallacy, Aditya Ramesh, Gabriel Goh, Sandhini Agarwal, Girish Sastry, Amanda Askell, Pamela Mishkin, Jack Clark, et~al.
\newblock Learning transferable visual models from natural language supervision.
\newblock In \emph{International Conference on Machine Learning}, pages 8748--8763. PMLR, 2021.

\bibitem[Recht et~al.(2018)Recht, Roelofs, Schmidt, and Shankar]{recht2018cifar10}
Benjamin Recht, Rebecca Roelofs, Ludwig Schmidt, and Vaishaal Shankar.
\newblock Do {CIFAR-10} classifiers generalize to {CIFAR-10}?, 2018.

\bibitem[Rombach et~al.(2022)Rombach, Blattmann, Lorenz, Esser, and Ommer]{rombach2022high}
Robin Rombach, Andreas Blattmann, Dominik Lorenz, Patrick Esser, and Bj{\"o}rn Ommer.
\newblock High-resolution image synthesis with latent diffusion models.
\newblock In \emph{Proceedings of the IEEE/CVF Conference on Computer Vision and Pattern Recognition}, pages 10684--10695, 2022.

\bibitem[Russakovsky et~al.(2015)Russakovsky, Deng, Su, Krause, Satheesh, Ma, Huang, Karpathy, Khosla, Bernstein, Berg, and Fei-Fei]{dataset/ilsvrc}
Olga Russakovsky, Jia Deng, Hao Su, Jonathan Krause, Sanjeev Satheesh, Sean Ma, Zhiheng Huang, Andrej Karpathy, Aditya Khosla, Michael Bernstein, Alexander~C. Berg, and Li~Fei-Fei.
\newblock {ImageNet Large Scale Visual Recognition Challenge}.
\newblock \emph{International Journal of Computer Vision}, 115\penalty0 (3):\penalty0 211--252, 2015.
\newblock \doi{10.1007/s11263-015-0816-y}.

\bibitem[Salman et~al.(2019)Salman, Li, Razenshteyn, Zhang, Zhang, Bubeck, and Yang]{nips_salman19}
Hadi Salman, Jerry Li, Ilya Razenshteyn, Pengchuan Zhang, Huan Zhang, Sebastien Bubeck, and Greg Yang.
\newblock Provably robust deep learning via adversarially trained smoothed classifiers.
\newblock In \emph{Advances in Neural Information Processing Systems 32}, pages 11289--11300. Curran Associates, Inc., 2019.

\bibitem[Salman et~al.(2020)Salman, Sun, Yang, Kapoor, and Kolter]{salman2020denoised}
Hadi Salman, Mingjie Sun, Greg Yang, Ashish Kapoor, and J~Zico Kolter.
\newblock Denoised smoothing: A provable defense for pretrained classifiers.
\newblock \emph{Advances in Neural Information Processing Systems}, 33:\penalty0 21945--21957, 2020.

\bibitem[Sohl-Dickstein et~al.(2015)Sohl-Dickstein, Weiss, Maheswaranathan, and Ganguli]{sohl2015deep}
Jascha Sohl-Dickstein, Eric Weiss, Niru Maheswaranathan, and Surya Ganguli.
\newblock Deep unsupervised learning using nonequilibrium thermodynamics.
\newblock In \emph{International Conference on Machine Learning}, pages 2256--2265. PMLR, 2015.

\bibitem[S{\'u}ken\'{\i}k et~al.(2022)S{\'u}ken\'{\i}k, Kuvshinov, and G{\"u}nnemann]{skenk2021intriguing}
Peter S{\'u}ken\'{\i}k, Aleksei Kuvshinov, and Stephan G{\"u}nnemann.
\newblock Intriguing properties of input-dependent randomized smoothing.
\newblock In \emph{Proceedings of the 39th International Conference on Machine Learning}, volume 162 of \emph{Proceedings of Machine Learning Research}, pages 20697--20743. PMLR, 17--23 Jul 2022.
\newblock URL \url{https://proceedings.mlr.press/v162/sukeni-k22a.html}.

\bibitem[Szegedy et~al.(2014)Szegedy, Zaremba, Sutskever, Bruna, Erhan, Goodfellow, and Fergus]{szegedy2014intriguing}
Christian Szegedy, Wojciech Zaremba, Ilya Sutskever, Joan Bruna, Dumitru Erhan, Ian Goodfellow, and Rob Fergus.
\newblock Intriguing properties of neural networks, 2014.

\bibitem[Torralba et~al.(2008)Torralba, Fergus, and Freeman]{torralba2008tiny}
Antonio Torralba, Rob Fergus, and William~T. Freeman.
\newblock 80 million tiny images: A large data set for nonparametric object and scene recognition.
\newblock \emph{IEEE Transactions on Pattern Analysis and Machine Intelligence}, 30\penalty0 (11):\penalty0 1958--1970, 2008.
\newblock \doi{10.1109/TPAMI.2008.128}.

\bibitem[Touvron et~al.(2021)Touvron, Cord, Douze, Massa, Sablayrolles, and Jegou]{pmlr-v139-touvron21a}
Hugo Touvron, Matthieu Cord, Matthijs Douze, Francisco Massa, Alexandre Sablayrolles, and Herve Jegou.
\newblock Training data-efficient image transformers \& distillation through attention.
\newblock In Marina Meila and Tong Zhang, editors, \emph{Proceedings of the 38th International Conference on Machine Learning}, volume 139 of \emph{Proceedings of Machine Learning Research}, pages 10347--10357. PMLR, 18--24 Jul 2021.
\newblock URL \url{https://proceedings.mlr.press/v139/touvron21a.html}.

\bibitem[Tram{\`e}r et~al.(2020{\natexlab{a}})Tram{\`e}r, Behrmann, Carlini, Papernot, and Jacobsen]{tramer2020fundamental}
Florian Tram{\`e}r, Jens Behrmann, Nicholas Carlini, Nicolas Papernot, and J{\"o}rn-Henrik Jacobsen.
\newblock Fundamental tradeoffs between invariance and sensitivity to adversarial perturbations.
\newblock In \emph{International Conference on Machine Learning}, pages 9561--9571. PMLR, 2020{\natexlab{a}}.

\bibitem[Tram{\`e}r et~al.(2020{\natexlab{b}})Tram{\`e}r, Carlini, Brendel, and Madry]{tramer2020adaptive}
Florian Tram{\`e}r, Nicholas Carlini, Wieland Brendel, and Aleksander Madry.
\newblock On adaptive attacks to adversarial example defenses.
\newblock In \emph{Advances in Neural Information Processing Systems}, volume~33, 2020{\natexlab{b}}.

\bibitem[Tsipras et~al.(2019)Tsipras, Santurkar, Engstrom, Turner, and Madry]{tsipras2018robustness}
Dimitris Tsipras, Shibani Santurkar, Logan Engstrom, Alexander Turner, and Aleksander Madry.
\newblock Robustness may be at odds with accuracy.
\newblock In \emph{International Conference on Learning Representations}, 2019.
\newblock URL \url{https://openreview.net/forum?id=SyxAb30cY7}.

\bibitem[Wang et~al.(2021)Wang, Zhai, He, Wang, and Jian]{wang2021pretraintofinetune}
Lei Wang, Runtian Zhai, Di~He, Liwei Wang, and Li~Jian.
\newblock Pretrain-to-finetune adversarial training via sample-wise randomized smoothing, 2021.
\newblock URL \url{https://openreview.net/forum?id=Te1aZ2myPIu}.

\bibitem[Wong and Kolter(2018)]{pmlr-v80-wong18a}
Eric Wong and Zico Kolter.
\newblock Provable defenses against adversarial examples via the convex outer adversarial polytope.
\newblock In \emph{International Conference on Machine Learning}, volume~80, pages 5286--5295, 10--15 Jul 2018.
\newblock URL \url{http://proceedings.mlr.press/v80/wong18a.html}.

\bibitem[Xiao et~al.(2023)Xiao, Chen, Jin, Wang, Nie, Liu, Anandkumar, Li, and Song]{xiao2023densepure}
Chaowei Xiao, Zhongzhu Chen, Kun Jin, Jiongxiao Wang, Weili Nie, Mingyan Liu, Anima Anandkumar, Bo~Li, and Dawn Song.
\newblock Dense{P}ure: Understanding diffusion models for adversarial robustness.
\newblock In \emph{The Eleventh International Conference on Learning Representations}, 2023.
\newblock URL \url{https://openreview.net/forum?id=p7hvOJ6Gq0i}.

\bibitem[Yang et~al.(2020)Yang, Duan, Hu, Salman, Razenshteyn, and Li]{yang2020randomized}
Greg Yang, Tony Duan, J~Edward Hu, Hadi Salman, Ilya Razenshteyn, and Jerry Li.
\newblock Randomized smoothing of all shapes and sizes.
\newblock In \emph{International Conference on Machine Learning}, pages 10693--10705. PMLR, 2020.

\bibitem[Yang et~al.(2022)Yang, Li, Xu, Kailkhura, Xie, and Li]{yang2022on}
Zhuolin Yang, Linyi Li, Xiaojun Xu, Bhavya Kailkhura, Tao Xie, and Bo~Li.
\newblock On the certified robustness for ensemble models and beyond.
\newblock In \emph{International Conference on Learning Representations}, 2022.
\newblock URL \url{https://openreview.net/forum?id=tUa4REjGjTf}.

\bibitem[Yurtsever et~al.(2020)Yurtsever, Lambert, Carballo, and Takeda]{yurtsever2020survey}
Ekim Yurtsever, Jacob Lambert, Alexander Carballo, and Kazuya Takeda.
\newblock A survey of autonomous driving: Common practices and emerging technologies.
\newblock \emph{IEEE Access}, 8:\penalty0 58443--58469, 2020.

\bibitem[Zhai et~al.(2020)Zhai, Dan, He, Zhang, Gong, Ravikumar, Hsieh, and Wang]{Zhai2020MACER}
Runtian Zhai, Chen Dan, Di~He, Huan Zhang, Boqing Gong, Pradeep Ravikumar, Cho-Jui Hsieh, and Liwei Wang.
\newblock {MACER}: Attack-free and scalable robust training via maximizing certified radius.
\newblock In \emph{International Conference on Learning Representations}, 2020.
\newblock URL \url{https://openreview.net/forum?id=rJx1Na4Fwr}.

\bibitem[Zhai et~al.(2022)Zhai, Kolesnikov, Houlsby, and Beyer]{zhai2022scaling}
Xiaohua Zhai, Alexander Kolesnikov, Neil Houlsby, and Lucas Beyer.
\newblock Scaling {V}ision {T}ransformers.
\newblock In \emph{Proceedings of the IEEE/CVF Conference on Computer Vision and Pattern Recognition}, pages 12104--12113, 2022.

\bibitem[Zhang et~al.(2019)Zhang, Yu, Jiao, Xing, Ghaoui, and Jordan]{pmlr-v97-zhang19p}
Hongyang Zhang, Yaodong Yu, Jiantao Jiao, Eric Xing, Laurent~El Ghaoui, and Michael Jordan.
\newblock Theoretically principled trade-off between robustness and accuracy.
\newblock In \emph{Proceedings of the 36th International Conference on Machine Learning}, volume~97 of \emph{Proceedings of Machine Learning Research}, pages 7472--7482, Long Beach, California, USA, 09--15 Jun 2019. PMLR.
\newblock URL \url{http://proceedings.mlr.press/v97/zhang19p.html}.

\bibitem[Zhang et~al.(2020)Zhang, Chen, Xiao, Gowal, Stanforth, Li, Boning, and Hsieh]{zhang2020towards}
Huan Zhang, Hongge Chen, Chaowei Xiao, Sven Gowal, Robert Stanforth, Bo~Li, Duane Boning, and Cho-Jui Hsieh.
\newblock Towards stable and efficient training of verifiably robust neural networks.
\newblock In \emph{International Conference on Learning Representations}, 2020.
\newblock URL \url{https://openreview.net/forum?id=Skxuk1rFwB}.

\bibitem[Zhang et~al.(2022)Zhang, Wang, Xu, Li, Li, Jana, Hsieh, and Kolter]{zhang2022general}
Huan Zhang, Shiqi Wang, Kaidi Xu, Linyi Li, Bo~Li, Suman Jana, Cho-Jui Hsieh, and J~Zico Kolter.
\newblock General cutting planes for bound-propagation-based neural network verification.
\newblock In Alice~H. Oh, Alekh Agarwal, Danielle Belgrave, and Kyunghyun Cho, editors, \emph{Advances in Neural Information Processing Systems}, 2022.
\newblock URL \url{https://openreview.net/forum?id=5haAJAcofjc}.

\end{thebibliography}
}
% \section*{References}

% References follow the acknowledgments in the camera-ready paper. Use unnumbered first-level heading for
% the references. Any choice of citation style is acceptable as long as you are
% consistent. It is permissible to reduce the font size to \verb+small+ (9 point)
% when listing the references.
% Note that the Reference section does not count towards the page limit.
% \medskip

% {
% \small

% [1] Alexander, J.A.\ \& Mozer, M.C.\ (1995) Template-based algorithms for
% connectionist rule extraction. In G.\ Tesauro, D.S.\ Touretzky and T.K.\ Leen
% (eds.), {\it Advances in Neural Information Processing Systems 7},
% pp.\ 609--616. Cambridge, MA: MIT Press.

% [2] Bower, J.M.\ \& Beeman, D.\ (1995) {\it The Book of GENESIS: Exploring
%   Realistic Neural Models with the GEneral NEural SImulation System.}  New York:
% TELOS/Springer--Verlag.

% [3] Hasselmo, M.E., Schnell, E.\ \& Barkai, E.\ (1995) Dynamics of learning and
% recall at excitatory recurrent synapses and cholinergic modulation in rat
% hippocampal region CA3. {\it Journal of Neuroscience} {\bf 15}(7):5249-5262.
% }

%%%%%%%%%%%%%%%%%%%%%%%%%%%%%%%%%%%%%%%%%%%%%%%%%%%%%%%%%%%%

\clearpage
\appendix

\section{Related Work}

In the context of aggregating multiple smoothed classifiers, several works have previously explored on ensembling smoothed classifiers of the same noise scale \cite{liu2020enhancing,horvath2022boosting,yang2022on}, in attempt to boost their certified robustness. Our proposed multi-scale framework for randomized smoothing considers a different setup of aggregating smoothed classifiers of different noise scales, with a novel motivation of addressing the current accuracy-robustness trade-off in randomized smoothing. More in this respect, \citet{mueller2021certify} have considered a selective inference scheme between a ``certification'' network (for robustness) and a ``core'' network (for accuracy), and \citet{horvath2022robust} have adapted the framework into randomized smoothing. Our approach explores an orthogonal direction, and can be viewed as directly improving the ``certification'' network here: in this way, we could offer improvements not only in empirical but also in \emph{certified} accuracy as a result. 

Independently to our method, there have been approaches to further tighten the certified lower-bound that randomized smoothing guarantees \cite{mohapatra2020higher,yang2020randomized,levine2021improved,pmlr-v162-li22aa,xiao2023densepure}. \citet{mohapatra2020higher} have shown that higher-order information of smoothed classifiers, \eg its gradient on inputs, can tighten the certification of Gaussian-smoothed classifiers, beyond that only utilizes the zeroth-order information of smoothed confidence. In the context of denoised smoothing \cite{salman2020denoised,carlini2022certified}, \citet{xiao2023densepure} have recently proposed a multi-step, multi-round scheme for diffusion models to improve accuracy of denoising at randomized smoothing.
Such approaches could be also incorporated to our method to further improve certification. 
Along with the developments of randomized smoothing, there have been also other attempts in certifying deep neural networks against adversarial examples \cite{gehr2018ai2,pmlr-v80-wong18a,pmlr-v80-mirman18b,gowal2019scalable,zhang2020towards,Croce2020Provable,Balunovic2020Adversarial,zhang2022general}, where a more extensive survey on the field can be found in \citet{li2020sokcertified}.

\section{Experimental Details}
\label{appendix:details}

\subsection{Datasets}

\textbf{CIFAR-10} \citep{dataset/cifar} consist of 60,000 images of size 32$\times$32 pixels, 50,000 for training and 10,000 for testing. Each of the images is labeled to one of 10 classes, and the number of data per class is set evenly, \ie 6,000 images per each class. 
By default, we do not apply any data augmentation but the input normalization with mean $\mathtt{[0.5, 0.5, 0.5]}$ and standard deviation $\mathtt{[0.5, 0.5, 0.5]}$, following the standard training configurations of diffusion models. The full dataset can be downloaded at \url{https://www.cs.toronto.edu/~kriz/cifar.html}.

{\textbf{CIFAR-10-C} \cite{hendrycks2018benchmarking} are collections of 75 replicas of the CIFAR-10 test datasets (of size 10,000), which consists of 15 different types of common corruptions each of which contains 5 levels of corruption severities. Specifically, the datasets includes the following corruption types: (a) \emph{noise}: Gaussian, shot, and impulse noise; (b) \emph{blur}: defocus, glass, motion, zoom; (c) \emph{weather}: snow, frost, fog, bright; and (d) \emph{digital}: contrast, elastic, pixel, JPEG compression. In our experiments, we evaluate test errors on CIFAR-10-C for models trained on the ``clean'' CIFAR-10 datasets, where the error values are averaged across different corruption types per severity level. The full datasets can be downloaded at \url{https://github.com/hendrycks/robustness}.}

{\textbf{CIFAR-10.1} \cite{recht2018cifar10} is a reproduction of the CIFAR-10 test set that are separately collected from Tiny Images dataset \cite{torralba2008tiny}. The dataset consists of 2,000 samples for testing, and designed to minimize distribution shift relative to the original CIFAR-10 dataset in their data creation pipelines. The full dataset can be downloaded at \url{https://github.com/modestyachts/CIFAR-10.1}, where we use the ``\texttt{v6}'' version for our experiments among those given in the repository.}

{\textbf{ImageNet} \cite{dataset/ilsvrc}, also known as ILSVRC 2012 classification dataset, consists of 1.2 million high-resolution training images and 50,000 validation images, which are labeled with 1,000 classes. 
As a data pre-processing step, we apply a 256$\times$256 center cropping for both training and testing images after re-scaling the images to have 256 in their shorter edges, making it compatible with the pre-processing of ADM \cite{dhariwal2021adm}, the backbone diffusion model. 
Similarly to CIFAR-10, all the images are normalized with mean $\mathtt{[0.5, 0.5, 0.5]}$ and standard deviation $\mathtt{[0.5, 0.5, 0.5]}$.
A link for downloading the full dataset can be found in \url{http://image-net.org/download}.}

{\textbf{ImageNet-R} \cite{Hendrycks_2021_ICCV} consists of 30,000 images of various artistic renditions for 200 (out of 1,000) ImageNet classes: \eg art, cartoons, deviantart, graffiti, embroidery, graphics, origami, paintings, patterns, plastic objects, plush objects, sculptures, sketches, tattoos, toys, video game renditions, and so on. To perform an evaluation of ImageNet classifiers on this dataset, we apply masking on classifier logits for the 800 classes those are not in ImageNet-R. The full dataset can be downloaded at \url{https://github.com/hendrycks/imagenet-r}.}

{\textbf{ImageNet-A} \cite{Hendrycks_2021_CVPR} consists of 7,500 images of 200 ImageNet classes those are ``adversarially'' filtered, by collecting natural images that causes wrong predictions from a pre-trained ResNet-50 model. To perform an evaluation of ImageNet classifiers on this dataset, we apply masking on classifier logits for the 800 classes those are not in ImageNet-A. The full dataset can be downloaded at \url{https://github.com/hendrycks/natural-adv-examples}.}

\subsection{Training}

\textbf{CLIP fine-tuning. } We follow the training configuration suggested by FT-CLIP \cite{dong2022ftclip} in fine-tuning the CLIP ViT-B/16@224 model, where the full script is specified at \url{https://github.com/LightDXY/FT-CLIP}. The same configuration is applied to all the datasets considered for fine-tuning, namely CIFAR-10, ImageNet, and Gaussian-corrupted versions of CIFAR-10 (for baselines such as ``Gaussian'' and ``Consistency''), which can be done by resizing the datasets into $224 \times 224$ as a data pre-processing. 
We fine-tune all these tested models for 50 epochs on each of the considered datasets. 

\textbf{Diffusion fine-tuning. } Similarly to the CLIP fine-tuning, we follow the training details given by \citet{pmlr-v139-nichol21a} in fine-tuning diffusion models, which are specified in \url{https://github.com/openai/improved-diffusion}. More concretely, here we fine-tune each pre-trained diffusion model by ``resuming'' the training following its configuration, but with an added regularization term we propose upon the original training loss. We use the 50M-parameter $32\times32$ diffusion model from \citet{pmlr-v139-nichol21a} for CIFAR-10,\footnote{\url{https://github.com/openai/improved-diffusion}} and the 552M-parameter $256\times256$ unconditional model from \citet{dhariwal2021adm} for ImageNet\footnote{\url{https://github.com/openai/guided-diffusion}} to initialize each fine-tuning run. We fine-tune each model for 50K training steps, using batch size 128 and 64 for CIFAR-10 and ImageNet, respectively. 

\subsection{Hyperparameters}

Unless otherwise noted, we use $p_0 = 0.5$ for \emph{cascaded smoothing} throughout our experiments. We mainly consider two configurations of cascaded smoothing: (a) $\sigma \in \{0.25, 0.50, 1.00\}$, and (b) $\sigma \in \{0.25, 0.50\}$, each of which denoted as ``(0.25-1.00)'' and ``(0.25-0.50)'' in tables, respectively. For \emph{diffusion calibration}, on the other hand, we use $\alpha = 1.0$ by default. 
We use $\lambda = 0.01$ on CIFAR-10 and $\lambda = 0.005$ on ImageNet, unless noted: we halve the regularization strength $\lambda$ on ImageNet considering its batch size used in the fine-tuning, \ie 128 (for CIFAR-10) \textit{vs.}~64 (for ImageNet). In other words, we follow $\lambda = 0.01 \cdot (\mathtt{batch\_size} / 128)$. 
Among the baselines considered, ``Consistency'' \cite{jeong2020consistency} requires two hyperparameters: the coefficient for the consistency term $\eta$ and the entropy term $\gamma$. Following those considered by \citet{jeong2020consistency}, we fix $\gamma=0.5$ and $\eta =5.0$ throughout our experiments. 

\subsection{Computing infrastructure}

Overall, we conduct our experiments with a cluster of 8 NVIDIA V100 32GB GPUs and 8 instances of a single NVIDIA A100 80GB GPU. All the CIFAR-10 experiments are run on a single NVIDIA A100 80GB GPU, including both the diffusion fine-tuning and the smoothed inference procedures.
For the ImageNet experiments, we use 8 NVIDIA V100 32GB GPUs per run. In our computing environment, it takes \eg $\sim$10 seconds and $\sim$5 minutes per image (we use $n=10,000$ for each inference) to perform a single pass of smoothed inference on CIFAR-10 and ImageNet, respectively. For a single run of fine-tuning diffusion models, we observe $\sim$1.5 days of training cost on CIFAR-10 with a single NVIDIA A100 80GB GPU, and that of $\sim$4 days on ImageNet using a cluster of 8 NVIDIA V100 32GB GPUs, to run 50K training steps. 

At inference time, our proposed cascaded smoothing may introduce an extra computational overhead as the cost for the increased accuracy, \ie from taking multiple times of smoothed inferences. Nevertheless, remark that the actual overhead per input in practice can be different depending on the early-stopping result of cascaded smoothing. For example, in our experiments, we observe that the average inference time to process the CIFAR-10 test set via cascaded smoothing is only $\sim 1.5 \times$ compared to standard (single-scale) smoothed inferences, even considering $\sigma\in \{0.25, 0.5, 1.0\}$.

\section{Additional Results}

\begin{figure*}[t]
    \begin{minipage}{.45\textwidth}
        \centering
        \includegraphics[width=0.75\linewidth]{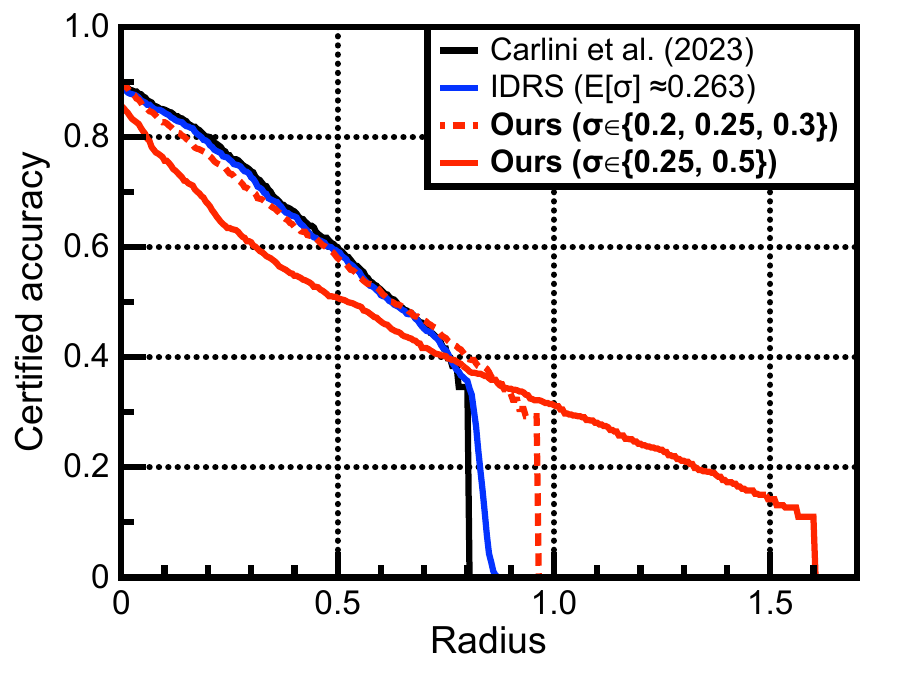}
        \vspace{-0.05in}
        \caption{Comparison of certified test accuracy with (a) \citet{carlini2022certified} at $\sigma=0.25$, (b) \emph{Input-Dependent Randomized Smoothing} (IDRS) proposed by \citet{skenk2021intriguing}. For (b), we adopt the per-sample $\sigma$ values of CIFAR-10 test set of \citet{skenk2021intriguing}.}\label{fig:idrs}
    \end{minipage}%
    \hfill
    \begin{minipage}{0.52\textwidth}
        \centering
        \includegraphics[width=0.75\linewidth]{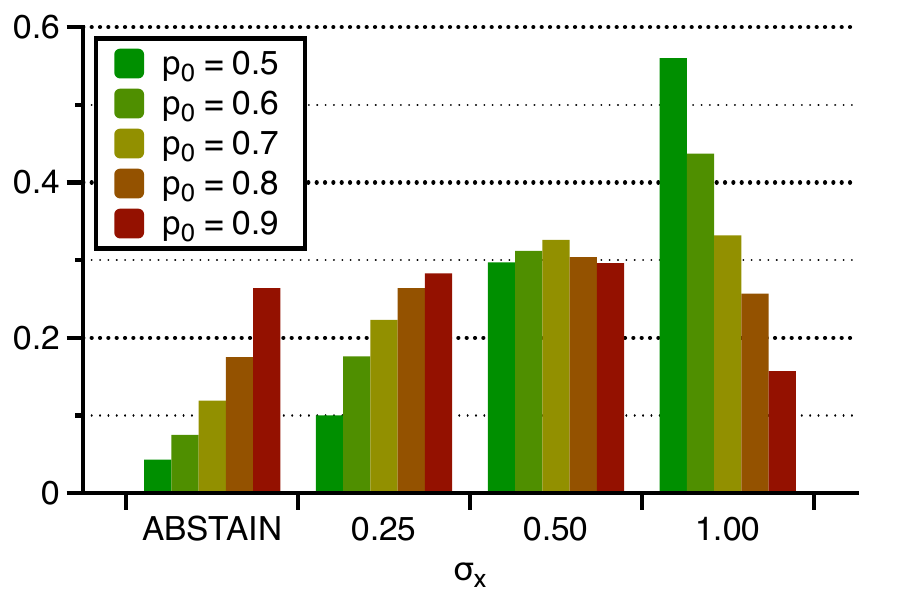}
        \vspace{0.05in}
        \caption{Comparison of histograms across $p_0 \in [0.5, 1.0)$ on the distribution of $\sigma$'s where each of CIFAR-10 test samples halts from cascaded smoothing. Each of smoothed inferences is performed with $n=10,000$ noise samples and $\alpha=0.001$.}\label{fig:per_sigma}
    \end{minipage}
    \vspace{-0.1in}
\end{figure*}

\subsection{Comparison with input-dependent smoothing}

In a sense that the proposed cascaded smoothing results in different smoothing factors $\sigma$ per-sample basis, our approach can be also viewed in the context of \emph{input-dependent randomized smoothing} \cite{alfarra2022data,wang2021pretraintofinetune,chen2021insta,eiras2021ancer}, a line of attempts to improve certified robustness by applying different smoothing factor $\sigma$ conditioned on input. From being more flexible in assigning $\sigma$, however, these approaches commonly require more assumptions for their certification, making them hard to compare directly upon the standard evaluation protocol, \eg that we considered in our experiments. 
For example, \citet{skenk2021intriguing} have proved that such a scheme of freely assigning $\sigma$ does not provide valid robustness certificates in the standard protocol, and \cite{alfarra2022data,eiras2021ancer} in response have suggested to further assume that all the previous test samples during evaluation can be memorized to predict future samples, which can affect the practical relevance of the overall evaluation protocol. 

To address the issue, \citet{skenk2021intriguing} have also proposed a fix of \cite{alfarra2022data} to discard its test-time dependency, by assuming a more strict restriction on the variation of $\sigma$. Nevertheless, they also report that the benefit of input-dependent $\sigma$ then becomes quite limited, supporting the challenging nature of input-dependent smoothing. As shown in Figure~\ref{fig:idrs} that additionally compares our method with \cite{skenk2021intriguing}, we indeed confirm that cascaded smoothing offers significantly wider robustness certificates with a better accuracy trade-off. This is unprecedented to the best of our knowledge, which could be achieved from our completely new approach of data-dependent smoothing: \ie by combining multiple smoothed classifiers through abstention, moving away from the previous attempts that are focused on obtaining a single-step smoothing.

\subsection{Clean ensemble for empirical accuracy}
\label{appendix:empirical}

In this appendix, we demonstrate that predictions from smoothed classifiers can not only useful to obtain adversarial robustness, but also to improve out-of-distribution robustness of \emph{clean} classifiers, which suggests a new way to utilize randomized smoothing. Specifically, consider a smoothed classifier $\hat{f}$, and its smoothed confidences at $\rvx$, say $p_{\hat{f}}(y|\rvx)$ for $y \in \mathcal{Y}$. 
Next, we consider prediction confidences of a ``clean'' prediction, namely $p_f(y|\rvx)$. Remark that $p_f(y|\rvx)$ can be interpreted in terms of \emph{logits}, namely $\log p_f (\rvx | y)$, of a neural network based model $f$ as follows:
\begin{equation}
    p_f(y|\rvx) = \frac{p_f(\rvx, y)}{p_f(\rvx)} = \frac{p(y)\cdot p_f(\rvx | y)}{\sum_{y \in \mathcal{Y}} p_f(\rvx, y)}.
\end{equation}
Note that the prior $p(y)$ is usually assumed to be the uniform distribution $\mathcal{U}(\mathcal{Y})$. Here, we consider a variant of this prediction, 
by replacing the prior $p(y)$ by a \emph{mixture} of $p(y)$ and a \emph{smoothed prediction} $p(\hat{y}) := p_{\hat{f}}(y|\rvx)$: in this way, one could use $\hat{f}(\rvx)$ as an additional prior to refine the clean confidence in cases when the two sources of confidence do not match. More specifically, we consider:
\begin{align}
    \log p_f(\hat{y}|\rvx) &\ := \log p_{\hat{f}}^{\beta}(y|\rvx) + \log p_f(\rvx | y) + C, \quad \text{where} \\
    \log p_{\hat{f}}^{\beta}(y|\rvx) &\ := (1 - \beta) \cdot \log p_{\hat{f}}(y|\rvx) + \beta \cdot p(y).
    % \log p_{\hat{f}}(y|\rvx) + \log p_f(\rvx | y) + C
\end{align}
where $C$ denotes the normalizing constant, and $\beta > 0$ is a hyperparameter which we fix as $\beta = 0.1$ in our experiment. 
In Table~\ref{tab:cifar10c}, we compare the accuracy of the proposed inference with the accuracy of standard classifiers derived from FT-CLIP \cite{dong2022ftclip} fine-tuned on CIFAR-10: overall, we observe that this ``ensemble'' of clean prediction with cascaded smoothed predictions could improve the corruption accuracy of FT-CLIP on CIFAR-10-C, namely by $93.4\% \rightarrow 95.0\%$, while maintaining its clean accuracy. The gain can be considered significant in a sense that the baseline accuracy of $93.4\%$ is already at the state-of-the-art level, \eg according to RobustBench \cite{croce2020robustbench}.

\begin{table*}[t]
\centering
\small
\caption{Comparison of per-corruption empirical accuracy of clean (``FT-CLIP'') and smoothed classifiers (Others) on CIFAR-10-C. For each corruption type, we report accuracy averaged over 5 levels of severity, and for each level we use 1,000 uniform subsamples to compute the accuracy.}\label{tab:cifar10c}
\begin{adjustbox}{width=1\linewidth}
\begin{tabular}{l|c|ccccccccccccccc|c}
\toprule
Method & \rotatebox{70}{Clean} & \rotatebox{70}{Gaussian} & \rotatebox{70}{Shot} & \rotatebox{70}{Impulse} & \rotatebox{70}{Defocus} & \rotatebox{70}{Glass} & \rotatebox{70}{Motion} & \rotatebox{70}{Zoom} & \rotatebox{70}{Snow} & \rotatebox{70}{Frost} & \rotatebox{70}{Fog} & \rotatebox{70}{Brightness} & \rotatebox{70}{Contrast} & \rotatebox{70}{Elastic} & \rotatebox{70}{Pixelate} & \rotatebox{70}{JPEG} & \textbf{AVG} \\
\midrule
FT-CLIP \cite{dong2022ftclip} & \textbf{98.1} & 83.7  & 88.7  & \textbf{96.4} & \textbf{96.8} & 85.7  & 95.0  & 96.1  & \textbf{96.1} & 96.0  & \textbf{97.0} & \textbf{97.8} & \textbf{97.5} & 94.7  & 91.0  & 88.5  & 93.4 \\
\midrule
\citet{carlini2022certified} & 88.8  & 85.8  & 86.9  & 88.7  & 87.3  & 83.6  & 86.8  & 86.0  & 85.2  & 83.6  & 78.9  & 87.8  & 75.7  & 85.4  & 86.4  & 85.8  & 84.9 \\
\rowcolor[HTML]{EFEFEF} + Casc. + Calib. & 92.5  & 89.0  & 90.1  & 91.5  & 90.7  & 87.3  & 89.6  & 90.7  & 89.9  & 89.4  & 86.0  & 91.1  & 81.2  & 89.7  & 90.4  & 89.3  & 89.1 \\
\rowcolor[HTML]{EFEFEF} \ \ \textbf{+ Clean prior} & \textbf{98.0} & \textbf{90.9} & \textbf{93.0} & \textbf{96.5} & \textbf{96.8} & \textbf{89.9} & \textbf{95.2} & \textbf{96.3} & \textbf{96.2} & \textbf{96.6} & 96.8  & \textbf{97.7} & \textbf{97.4} & \textbf{95.3} & \textbf{94.2} & \textbf{91.6} & \textbf{95.0} \\
\bottomrule
\end{tabular}%

\end{adjustbox}
\vspace{0.1in}
\caption{Comparison of empirical accuracy and certified accuracy of cascaded smooth classifiers on CIFAR-10 across ablations on $\lambda$. We use $\sigma \in \{0.25, 0.5, 1.0\}$ for cascaded smoothing.}\label{tab:abl_lbd}
\begin{adjustbox}{width=0.85\linewidth}
% Table generated by Excel2LaTeX from sheet 'Sheet3'
\begin{tabular}{llccccccccc}
\toprule
CIFAR-10 &       & Empirical & \multicolumn{8}{c}{Certified accuracy at $\varepsilon$ (\%)} \\
\cmidrule(r){3-3} \cmidrule{4-11}
\multicolumn{2}{l}{Setups} & Clean & 0.00  & 0.25  & 0.50  & 0.75  & 1.00  & 1.25  & 1.50  & 1.75 \\
\cmidrule(r){1-2} \cmidrule(r){3-3} \cmidrule{4-11}
\multirow{4}[2]{*}{$\lambda = 0.001$} & $\alpha=1.0$ & 84.1  & 79.9  & 54.4  & 39.6  & 30.9  & 24.2  & 18.9  & 13.9  & 9.3 \\
      & $\alpha=2.0$ & 84.3  & 80.7  & 53.9  & 38.9  & 29.7  & 23.9  & 18.7  & 13.0  & 9.6 \\
      & $\alpha=4.0$ & 85.1  & 80.9  & 52.8  & 39.1  & 29.5  & 23.2  & 17.3  & 12.4  & 9.4 \\
      & $\alpha=8.0$ & 86.1  & 81.5  & 53.2  & 38.5  & 29.0  & 22.0  & 17.1  & 13.0  & 8.8 \\
\cmidrule(r){1-2} \cmidrule(r){3-3} \cmidrule{4-11}
\multirow{4}[2]{*}{$\lambda = 0.01$} & $\alpha=1.0$ & 82.9  & 79.6  & 53.5  & 40.5  & 31.2  & 25.0  & 20.1  & 16.2  & 11.4 \\
      & $\alpha=2.0$ & 84.8  & 80.1  & 54.6  & 39.2  & 30.6  & 24.0  & 18.5  & 14.5  & 9.9 \\
      & $\alpha=4.0$ & 85.1  & 80.4  & 53.6  & 39.1  & 29.5  & 22.8  & 17.8  & 13.1  & 9.5 \\
      & $\alpha=8.0$ & 86.1  & 81.5  & 54.5  & 39.0  & 27.7  & 21.4  & 17.3  & 12.9  & 8.8 \\
\cmidrule(r){1-2} \cmidrule(r){3-3} \cmidrule{4-11}
\multirow{4}[2]{*}{$\lambda = 0.1$} & $\alpha=1.0$ & 80.8  & 77.2  & 53.9  & 39.9  & 31.4  & 24.8  & 20.1  & 15.5  & 11.6 \\
      & $\alpha=2.0$ & 82.7  & 78.0  & 53.9  & 39.9  & 30.7  & 24.1  & 19.8  & 15.1  & 11.1 \\
      & $\alpha=4.0$ & 82.1  & 78.3  & 52.6  & 39.1  & 30.1  & 22.6  & 17.4  & 13.7  & 10.0 \\
      & $\alpha=8.0$ & 83.7  & 78.2  & 51.1  & 36.9  & 28.0  & 22.3  & 17.9  & 13.1  & 9.1 \\
\bottomrule
\end{tabular}%

\end{adjustbox}
\vspace{-0.1in}
\end{table*}

\begin{table*}[t]
\centering
\small
\caption{Comparison of empirical accuracy and certified accuracy of cascaded smooth classifiers on CIFAR-10 across ablations on $p_0$. We use $\sigma \in \{0.25, 0.5, 1.0\}$ for cascaded smoothing.}\label{tab:abl_p0}
% Table generated by Excel2LaTeX from sheet 'Sheet3'
\begin{tabular}{cccccccccc}
\toprule
CIFAR-10 & Empirical & \multicolumn{8}{c}{Certified accuracy at $\varepsilon$ (\%)} \\
\cmidrule(r){2-2}\cmidrule{3-10}
Setups & Clean & 0.00  & 0.25  & 0.50  & 0.75  & 1.00  & 1.25  & 1.50  & 1.75 \\
\cmidrule(r){1-1}\cmidrule(r){2-2}\cmidrule{3-10}
$p_0 = 0.50$ & 84.8  & 81.1  & 52.8  & {37.2} & {27.9} & {21.7} & {16.8} & {11.8} & {8.5} \\
$p_0 = 0.55$ & 87.9  & 82.6  & {53.5} & 35.6  & 26.9  & 19.8  & 14.0  & 10.1  & 7.1 \\
$p_0 = 0.60$ & 90.7  & {83.9} & 52.7  & 34.5  & 23.9  & 16.9  & 11.9  & 8.3   & 6.4 \\
$p_0 = 0.65$ & 92.5  & 83.4  & 52.1  & 34.0  & 22.6  & 14.4  & 9.8   & 7.0   & 5.5 \\
$p_0 = 0.70$ & 94.1  & 83.0  & 52.6  & 31.6  & 19.2  & 12.2  & 8.2   & 6.3   & 4.7 \\
$p_0 = 0.75$ & 95.2  & 81.9  & 52.2  & 29.1  & 16.5  & 9.5   & 6.7   & 5.2   & 3.7 \\
$p_0 = 0.80$ & 96.0  & 79.5  & 47.6  & 25.0  & 14.0  & 8.0   & 5.9   & 4.6   & 2.4 \\
$p_0 = 0.85$ & 96.6  & 76.3  & 45.0  & 22.8  & 12.5  & 7.1   & 4.7   & 2.8   & 2.2 \\
$p_0 = 0.90$ & 97.2  & 72.1  & 42.5  & 20.0  & 11.7  & 4.6   & 2.8   & 1.9   & 1.0 \\
$p_0 = 0.95$ & 97.8  & 65.5  & 38.6  & 16.7  & 7.2   & 2.2   & 1.7   & 0.7   & 0.0 \\
\bottomrule
\end{tabular}%

\vspace{-0.1in}
\end{table*}

\begin{figure*}[t]
    \centering
    \includegraphics[width=0.8\linewidth]{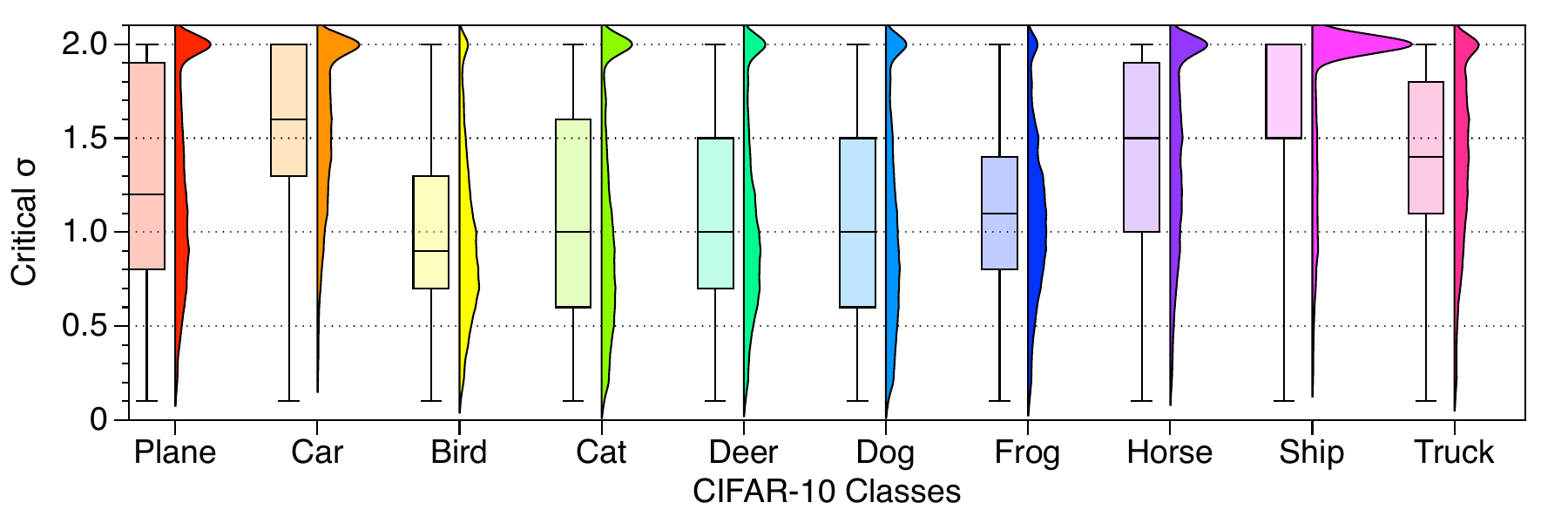}
    \caption{Comparison of per-class histograms of \emph{critical $\sigma$}: the minimum value of $\sigma$ for each input where the smoothed confidence at true class falls below $0.5$. We use the whole CIFAR-10 training samples for the analysis. Each image is smoothed with $n=1,000$ samples of Gaussian noise, while iterating $\sigma$ within the range $\sigma \in [0.0, 2.0]$. Images with critical $\sigma$ larger than $2.0$ are truncated.}\label{fig:critical_sigma}
    \vspace{-0.1in}
\end{figure*}

\subsection{Ablation study}
\label{appendix:ablation}

\textbf{Effect of $\lambda$. }
In Table~\ref{tab:abl_lbd}, we jointly ablate the effect of tuning $\lambda \in \{0.001, 0.01, 0.1\}$ upon the choices of $\alpha \in \{1.0, 2.0, 4.0, 8.0\}$, by comparing its certified accuracy of cascaded smoothing as well as their empirical accuracy on CIFAR-10 test set. Overall, we observe that increasing $\lambda$ has an effect of improving robustness at large radii, with a slight decrease in the clean accuracy as well as the empirical accuracy, where using higher $\alpha$ could compensate this at some extent. Although we generally observe effectiveness of the proposed regularization for a wide range of $\lambda$, using higher values of $\lambda$, \eg $\lambda = 0.1$, often affects the base diffusion model and results in a degradation in accuracy without a significant gain in its certified robustness.

\textbf{Effect of $p_0$. }
Table~\ref{tab:abl_p0}, on the other hand, reports the effect of $p_0$ we introduced for cascaded smoothing. Recall that we use $p_0 = 0.5$ throughout our experiments by default. We observe that using a bit of higher values of $p_0$, \eg $p_0=0.6$ could be beneficial to improve the clean accuracy of cascaded smoothing, at the cost of robustness at larger radii: for example, it improves the certified accuracy at $r=0.0$ by $81.1 \rightarrow 83.9$. The overall certified robustness starts to decrease as $p_0$ further increases, due to its effect of increasing abstention rate. One could still observe the consistent improvement in the empirical accuracy as $p_0$ increases, \ie by also considering abstain as a valid prediction, \ie using high values of $p_0$ could be useful in certain scenarios. 
We think a more detailed choice of $p_0$ per $\sigma$ can further boost the performance of cascaded smoothing, which is potentially an important practical consideration.

\textbf{Critical $\sigma$. }
Although it is not required for cascaded smoothing to guarantee the stability of smoothed confidences across $\sigma$ (\eg outside of $\{0.25, 0.5, 1.0 \}$ in our experiments), we observe that smoothed confidences from denoised smoothing usually interpolate smoothly between different $\sigma$’s, at least for the diffusion denoised smoothing pipeline. In this respect, we consider a new concept of \emph{critical} $\sigma$ for each sample, by measuring the threshold of $\sigma$ where its confidence goes below $p_0 = 0.5$. Figure~\ref{fig:critical_sigma} examines this measure on CIFAR-10 training samples and plots its distribution for each class as histograms. Interestingly, we observe that the distributions are often significantly biased, \eg the ``Ship'' class among CIFAR-10 classes relatively obtains much higher critical $\sigma$ - possibly due to its peculiar background of, \eg ocean.

\textbf{Per-$\sigma$ distribution from cascading. }
To further verify that per-sample $\sigma$'s from cascaded smoothing are diverse enough to achieve better robustness-accuracy trade-off, we examine in Figure~\ref{fig:per_sigma} the actual histograms of the resulting $\sigma \in \{\mathtt{ABSTAIN}, 0.25, 0.50, 1.00\}$ on CIFAR-10, also varying $p_0 \in \{0.5, 0.6, 0.7, 0.8, 0.9\}$. Overall, we observe that (a) the distributions of $\sigma_x$ indeed widely cover the total range of $\sigma$ for every $p_0$’s considered, and (b) using higher $p_0$ can further improve diversity of the distribution while its chance to be abstained can be also increased.

\section{Technical Details}

\subsection{Proof of Theorem~\ref{thm:casc}}
\label{appendix:proof}

Our proof is based on the fact that any smoothed classifier $\hat{f}$ composed with $\Phi^{-1}(\cdot)$ is always Lipschitz-continuous, which is shown by \citet{nips_salman19}:
\begin{lemma}[followed by \citet{nips_salman19}]\label{lemma:lip}
Let $\Phi(a) := \frac{1}{\sqrt{2\pi}}\int_{-\infty}^a \exp\left(-\frac{1}{2} s^2 \right) \mathrm{d} s$ be the cumulative distribution function of standard Gaussian. For any function $f: \mathbb{R}^d \rightarrow [0, 1]$, the map $\rvx \mapsto \sigma \cdot \Phi^{-1}(p_{\hat{f}_{\sigma}}(\rvx))$ is 1-Lipschitz for any $\sigma > 0$.
\end{lemma}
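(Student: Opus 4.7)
The goal is to show that $h_\sigma(\rvx) := \sigma\,\Phi^{-1}(g(\rvx))$ is $1$-Lipschitz, where $g(\rvx) := p_{\hat f_\sigma}(\rvx) = \mathbb{E}_{\rvdt \sim \mathcal{N}(0,\sigma^2 \mathbf{I})}[f(\rvx+\rvdt)]$ is the Gaussian-smoothed version of $f$. Since $g = f \ast \phi_\sigma$ is a convolution of a bounded measurable function with the smooth Gaussian density, $g$ is $C^\infty$ regardless of the regularity of $f$, and $g(\rvx) \in (0,1)$ everywhere nontrivially, so the composition with $\Phi^{-1}$ is smooth. The plan is therefore to reduce the Lipschitz claim to a pointwise gradient bound $\|\nabla h_\sigma(\rvx)\| \le 1$, and then integrate along line segments between any two points.

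By the chain rule, $\nabla h_\sigma(\rvx) = \frac{\sigma}{\phi(\Phi^{-1}(g(\rvx)))} \nabla g(\rvx)$, where $\phi = \Phi'$ denotes the standard normal density. Thus the claim reduces to the pointwise inequality
\begin{equation*}
\|\nabla g(\rvx)\| \;\le\; \frac{1}{\sigma}\,\phi\bigl(\Phi^{-1}(g(\rvx))\bigr),
\end{equation*}
which is a Gaussian-isoperimetric-type statement: the gradient of a Gaussian-convolved $[0,1]$-valued function is controlled by the value of the Gaussian density at the level quantile. This is the main substance of the proof.

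To establish the bound, I would fix a unit vector $\rvu \in \mathbb{R}^d$ and compute the directional derivative via Stein's identity (differentiating under the integral):
\begin{equation*}
\rvu \cdot \nabla g(\rvx) \;=\; \frac{1}{\sigma^2}\,\mathbb{E}_{\rvdt}\!\left[(\rvu \cdot \rvdt)\,f(\rvx+\rvdt)\right].
\end{equation*}
This is a linear functional of $f$ under the constraints $f \in [0,1]$ and $\mathbb{E}[f(\rvx+\rvdt)] = g(\rvx) =: p$. By a Neyman-Pearson / bathtub rearrangement argument, the maximum over such $f$ is attained by the half-space indicator $f^\star(\rvy) = \mathbf{1}[\rvu \cdot (\rvy - \rvx) \ge \sigma t]$ with threshold $t = -\Phi^{-1}(p)$, chosen so that $\mathbb{E}[f^\star(\rvx+\rvdt)] = p$.

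For this extremizer the quantity collapses to a one-dimensional Gaussian integral: setting $Z = \rvu \cdot \rvdt/\sigma \sim \mathcal{N}(0,1)$, a direct calculation gives $\mathbb{E}[Z\,\mathbf{1}[Z \ge t]] = \phi(t)$, so $\rvu \cdot \nabla g(\rvx) \le \sigma^{-1}\phi(t) = \sigma^{-1}\phi(\Phi^{-1}(p))$ by evenness of $\phi$. Maximizing over $\rvu$ yields the desired gradient bound, hence $\|\nabla h_\sigma\| \le 1$, and integrating along any segment $\rvx_0 \to \rvx_1$ closes the argument. I expect the main subtlety to lie in the Neyman-Pearson extremization step for arbitrary measurable $f$ (as opposed to indicators): I would handle this rigorously by the standard argument that replacing $f$ by the level-set-reshuffled function which is an indicator of an appropriate half-space can only increase the inner product $\mathbb{E}[(\rvu \cdot \rvdt) f]$, a fact that follows from Hardy-Littlewood-style rearrangement with respect to the Gaussian measure along the direction $\rvu$.
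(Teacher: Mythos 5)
Your proof is correct. Note that the paper itself does not prove this lemma --- it states it as a known result and defers to \citet{nips_salman19} --- and your argument (differentiate the Gaussian convolution via Stein's identity, extremize the directional derivative over $[0,1]$-valued functions with fixed mean via a Neyman--Pearson/half-space argument, collapse to the one-dimensional identity $\mathbb{E}[Z\,\mathbf{1}[Z\ge t]]=\phi(t)$, and integrate the resulting gradient bound) is essentially the proof given in that reference. The only cosmetic caveat is the degenerate case $f\equiv 0$ or $f\equiv 1$, where $\Phi^{-1}(g)$ is $\mp\infty$ and the statement is vacuous; otherwise $g(\rvx)\in(0,1)$ everywhere and your chain-rule reduction is valid.
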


Given Lemma~\ref{lemma:lip}, the robustness certificate by \citet{pmlr-v97-cohen19c} in (2) (of the main text) can be viewed as computing the region of $p_{\hat{f}_{\sigma}}(\rvx + \rvdt; y) > 0.5$, \ie where it is guaranteed to return $y$ over other (binary) smoothed classifiers $p_{\hat{f}_{\sigma}}(\rvx + \rvdt; y')$ of $y \neq y'$. 

In general, as in our proposed \emph{cascaded smoothing}, one can consider a more ``strict'' region defined by $p_{\hat{f}_{\sigma}}(\rvx + \rvdt; y) > p_0$ for some $p_0 \ge 0.5$. Here, the region corresponds to a policy that $\hat{f}_{\sigma}$ abstains whenever $p_{\hat{f}_{\sigma}}(\rvx) \le p_0$. Accordingly, the certified radius one can obtain from this policy becomes:
\begin{equation}
    R_{p_0}(\hat{f}_\sigma; \rvx, y) := \sigma \cdot \left( \Phi^{-1}(p_{\hat{f}_{\sigma}}(\rvx, y)) - \Phi^{-1}(p_0) \right).
\end{equation}
With this notation, we restate Theorem~\ref{thm:casc} into a tighter form to show, which do not consider neither $\underline{p}$ nor $\overline{p}$ upon the knowledge of ${p}_{\hat{f}_{\sigma_k}}$ and implies Theorem~\ref{thm:casc} as a consequence:

\textbf{Theorem~\ref{thm:casc} \textnormal{\bf (restated)}.}
\textit{Let $\hat{f}_{\sigma_1}, \cdots, \hat{f}_{\sigma_K}: \mathcal{X} \rightarrow \mathcal{Y}$ be smoothed classifiers with $0 < \sigma_1 < \cdots < \sigma_K$. Suppose $\mathtt{casc}(\rvx; \{\hat{f}_{\sigma_i}\}_{i=1}^K) =: \hat{y} \in \mathcal{Y}$ halts at $\hat{f}_{\sigma_k}$ for some $k$.
Then, it holds that $\mathtt{casc}(\rvx + \rvdt; \{\hat{f}_{\sigma_i}\}_{i=1}^K) = \hat{y}$ \ for $\|\rvdt\| < R$, where:
\begin{equation}
    R := \min \left\{ R_{p_0}(\hat{f}_{\sigma_k}; \rvx, \hat{y}), R^-  \right\}, \text{ and } R^- := \min_{\substack{y \neq \hat{y} \\ k' > k}}\left\{
    \min\left\{0,  - R_{p_0}(\hat{f}_{\sigma_{k'}}; \rvx, y)\right\}
    \right\}.
\end{equation}}
\begin{proof}
By the definition of $\mathtt{casc}(\rvx)$, one can ensure $\mathtt{casc}(\rvx + \rvdt) = \hat{y}$ if the following holds:
\begin{enumerate}[label=(\alph*)]
    \item $\hat{f}_{\sigma_k}(\rvx + \rvdt) = \hat{y}$\ \ with\ \ $p_{\hat{f}_{\sigma_k}}(\rvx + \rvdt, \hat{y}) > p_0$, and
    \item $p_{\hat{f}_{\sigma_{k'}}}(\rvx + \rvdt) < p_0$\ \ or\ \ $\hat{f}_{\sigma_{k'}}(\rvx + \rvdt) = \hat{y}$, for every $k' = k+1, \cdots, K$.
\end{enumerate}
Remark that the condition (a) is satisfied by $\rvdt$ within $\|\rvdt\| < R_{p_0}(\hat{f}_{\sigma_{k}}; \rvx, \hat{y})$. Specifically, one can consider the ``certifiable'' region of $\rvdt$ at $\hat{f}_{\sigma_k}$ as follows:
\begin{equation}
    \mathcal{R}_k = \{\rvdt \in \mathcal{X}: \|\rvdt\| < R_{p_0}(\hat{f}_{\sigma_{k}}; \rvx, \hat{y}) \}.
\end{equation}
For the condition (b), on the other hand, note that the condition is equivalent to that:
\begin{enumerate}[label={}]
    \item \hspace{-0.25in} (b$'$)~~$p_{\hat{f}_{\sigma_{k'}}}(\rvx + \rvdt; y) < p_0$ for $y \neq \hat{y}$, and for every $k' = k+1, \cdots, K$.
\end{enumerate}
Fix $k' \in \{k+1, \cdots, K\}$. From the Lipschitzedness of $\rvx \mapsto \sigma \cdot \Phi^{-1}(p_{\hat{f}_{\sigma}}(\rvx; y))$ by Lemma~\ref{lemma:lip}, it is clear to show that $\rvdt$'s inside the following region $\mathcal{R}_{k'}$ satisfies (b$'$):
\begin{align}
    \mathcal{R}_{k'} &= \bigcap_{y \neq \hat{y}} \left\{\rvdt \in \mathcal{X}: \|\rvdt\| < \min\left\{0,  - R_{p_0}(\hat{f}_{\sigma_{k'}}; \rvx, y)\right\} \right\} \\
    &= \left\{\rvdt \in \mathcal{X}: \|\rvdt\| < \min_{y \neq \hat{y}}\left\{\min\left\{0,  - R_{p_0}(\hat{f}_{\sigma_{k'}}; \rvx, y)\right\} \right\}\right\}.
\end{align}
Lastly, the result is followed by considering the intersection of $\mathcal{R}_{k}$'s which represent a certifiable region where $\mathtt{casc}(\rvx + \rvdt; \{\hat{f}_{\sigma_i}\}_{i=1}^K) = \hat{y}$:
\begin{equation}
    \mathcal{R} := \mathcal{R}_k \cap \left(\cap_{k' \ge k+1}\mathcal{R}_{k'}\right) = \{\rvdt \in \mathcal{X}: \|\rvdt\| < R \}.
\end{equation}
\end{proof}
Once we have the restated version of Theorem~\ref{thm:casc} above, the original statement follows from an observation that the new radius by introducing either $\underline{p}$ or $\overline{p}$, say $\underline{R}$, always {lower-bounds} $R$ of the above: \ie $\underline{R} \le R$ for any choices of $\underline{p}$ and $\overline{p}$, thus certifiable for the region as well.

\subsection{Certification and prediction}
\label{appendix:certification}

For practical uses of cascaded smoothing, where the exact values of $p_{\hat{f}_{\sigma_k}}(\rvx, y)$ are not available, one has to estimate the smoothed confidence values as per Theorem~\ref{thm:casc}, \ie by properly lower-bound (or upper-bound) the confidence values. 
More concretely, one is required to obtain (a) $\underline{p}_{\hat{f}_{\sigma_k}}(\rvx)$ at the halting stage $k$, and (b) $\overline{p}_{\hat{f}_{\sigma_{k'}}}(\rvx, y)$ for $k' = k+1, \cdots, K$ and $y \in \mathcal{Y}$. 

For both estimations, we use Monte Carlo algorithm with $n$ \textit{i.i.d.}~Gaussian samples, say $\{\rvdt_i\}$, and perform a majority voting across $\{f_{\sigma}(\rvx + \rvdt_i)\}$ to obtain a histogram on $\mathcal{Y}$ of $n$ trials:
\begin{enumerate}[label=(\alph*)]
    \item To estimate a lower-bound $\underline{p}_{\hat{f}_{\sigma_k}}(\rvx)$, we follow the statistical procedure by \citet{pmlr-v97-cohen19c}, namely $\textsc{Certify}(n, n_0, \alpha)$, which additionally takes two hyper-parameters $n_0$ and $\alpha$: here, $n_0$ denotes the number of samples to initially make a guess to estimate a smoothed prediction, and $\alpha$ is the significance level of the Clopper-Pearson confidence interval \cite{clopper1934use}. Concretely, it sets $\underline{p}_{\hat{f}_{\sigma_k}}$ as the lower confidence level of ${p}_{\hat{f}_{\sigma_k}}$ with coverage $1-\alpha$. 
    \item For $\overline{p}_{\hat{f}_{\sigma_{k'}}}(\rvx, y)$, on the other hand, we first obtain voting counts from $n$ Gaussian trials, say $\mathbf{c}^{(\sigma_{k'})}_{\rvx}$, and adopt Goodman confidence interval to bound multinomial proportions \cite{goodman1965simultaneous} with the significance level $\alpha$. Specifically, one can set $\overline{p}_{\hat{f}_{\sigma_{k'}}}(\rvx, y)$ for $y \in \mathcal{Y}$ by the $(1-\alpha)$-upper level of the (multinomial) confidence interval.
    This can be implemented by using, \eg \verb|multinomial_proportions_confint| of the \verb|statsmodels| library. 
\end{enumerate}
We make an additional treatment on the significance level $\alpha$: unless the pipeline halts at the first stage $K$, cascaded smoothing makes a $(K - k + 1)$-consecutive testing of confidence intervals to make a prediction. In attempt to maintain the overall significance level of this prediction as $\alpha$, one can apply the Bonferroni correction \cite{bonferroni1936teoria} on the significance level of individual testing, specifically by considering $\tfrac{\alpha}{(K - k + 1)}$ as the adjusted significance level (instead of using $\alpha$) at the stage $k$.

\section{Other Possible Designs}
\label{appendix:others}

In this appendix, we present two alternative designs of {multi-scale randomized smoothing} we have also considered other than \emph{cascaded smoothing} proposed in the main text. Specifically, here we additionally propose two different policies of aggregating multiple smoothed classifiers, each of which dubbed as the (a) \emph{max-radius policy}, and (b) \emph{focal smoothing}, respectively. Overall, both design do provide valid certified guarantees combining the accuracy-robustness trade-offs across multiple smoothing factors, often attaining a better robustness even compared to cascaded smoothing when $n$, the number of noise samples, is large enough. 

The common drawback in the both approaches here, however, is the necessity to access every smoothed confidence, which can be crucial in practice in both terms of efficiency and accuracy. 
Specifically, for given $K$ smoothed classifiers, this implies the exact $K$-times increase in the inference cost, which makes it much costly on average compared to cascaded smoothing. The need to access and compare with $K$ smoothed confidence in practice also demands more accurate estimations of the confidence values to avoid abstaining. For example, these approaches can significantly degrade the certified accuracy in practice where $n$ cannot be large unlike the certification phase: which again makes our cascaded smoothing more favorable over the approaches presented here. 

\subsection{Max-radius policy}

Consider $K$ smoothed classifiers, say $\hat{f}_{\sigma_1}, \cdots, \hat{f}_{\sigma_K}: \mathcal{X} \rightarrow \mathcal{Y}$, where $0 < \sigma_1 < \cdots < \sigma_K$. The \emph{max-radius} policy, say ${\tt maxR}(\cdot)$, can be another simple policy one can consider for given $\{\hat{f}_{\sigma_i}\}$:
\begin{equation}
    \mathtt{maxR}(\rvx; \{\hat{f}_{\sigma_i}\}_{i=1}^K):= \hat{f}_{\sigma^*}(\rvx) \text{,\ \  where\ \ } \sigma^* := \argmax_{\sigma_i}\ \sigma_i \cdot \Phi^{-1}\left(p_{\hat{f}_{\sigma_i}}(\rvx)\right).
\end{equation}
Again, given the Lipschitzedness associated with the smoothed confidences $p_{\hat{f}_{\sigma}}(\rvx, y)$, it is easy to check the following robustness certificate for $\mathtt{maxR}(\rvx; \{\hat{f}_{\sigma_i}\}_{i=1}^K)$:
\begin{theorem}
\textit{Let $\hat{f}_{\sigma_1}, \cdots, \hat{f}_{\sigma_K}: \mathcal{X} \rightarrow \mathcal{Y}$ be smoothed classifiers with $0 < \sigma_1 < \cdots < \sigma_K$. If $\mathtt{maxR}(\rvx; \{\hat{f}_{\sigma_i}\}_{i=1}^K) =: \hat{y} \in \mathcal{Y}$, it holds that $\mathtt{maxR}(\rvx + \rvdt; \{\hat{f}_{\sigma_i}\}_{i=1}^K) = \hat{y}$ \ for $\|\rvdt\| < R$, where:
\begin{equation}\label{eq:maxR_cert}
    R := \frac{1}{2} \cdot \left(\sigma^* \cdot \Phi^{-1}\left(p_{\hat{f}_{\sigma^*}}(\rvx)\right) -
    \max_{\substack{y \neq \hat{y} \\ i = 1, \cdots, K}} \sigma_i \cdot \Phi^{-1}\left(p_{\hat{f}_{\sigma_i}}(\rvx; y)\right)
    \right).
\end{equation}}
\end{theorem}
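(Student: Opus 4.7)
My plan is to reduce $\mathtt{maxR}$ to an $\argmax$ over a family of $1$-Lipschitz ``score'' functions, one per class, and then invoke the standard Lipschitz-margin argument. Define, for each class $y\in\mathcal{Y}$ and each smoothing factor $\sigma > 0$,
\begin{equation*}
    g_y(\rvx,\sigma) := \sigma \cdot \Phi^{-1}\!\left(p_{\hat{f}_\sigma}(\rvx, y)\right),\qquad G_y(\rvx) := \max_{i=1,\dots,K}\ g_y(\rvx,\sigma_i).
\end{equation*}
By Lemma~\ref{lemma:lip}, for each fixed $\sigma_i$ and $y$, the map $\rvx\mapsto g_y(\rvx,\sigma_i)$ is $1$-Lipschitz. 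Since the pointwise maximum of $1$-Lipschitz functions is again $1$-Lipschitz, each $G_y$ is $1$-Lipschitz on $\mathcal{X}$.

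The next step is to rewrite $\mathtt{maxR}$ in terms of $G_y$. Because $\Phi^{-1}$ is monotone, $p_{\hat{f}_{\sigma_i}}(\rvx) = \max_y p_{\hat{f}_{\sigma_i}}(\rvx,y)$ implies $\sigma_i \cdot \Phi^{-1}(p_{\hat{f}_{\sigma_i}}(\rvx)) = \max_y g_y(\rvx,\sigma_i)$, so the double maximum over $i$ and then over $y$ equals $\max_y G_y(\rvx)$. Consequently $\mathtt{maxR}(\rvx) = \argmax_y G_y(\rvx)$, with the winning index being $\sigma^*$ and the winning class $\hat{y}$. In particular,
\begin{equation*}
    G_{\hat{y}}(\rvx) \;=\; \sigma^*\cdot\Phi^{-1}\!\left(p_{\hat{f}_{\sigma^*}}(\rvx)\right),\qquad \max_{y\neq\hat{y}} G_y(\rvx) \;=\; \max_{\substack{y\neq\hat{y}\\ i=1,\dots,K}}\ \sigma_i\cdot\Phi^{-1}\!\left(p_{\hat{f}_{\sigma_i}}(\rvx,y)\right).
\end{equation*}

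Now the argument is the standard margin-halving bound. For any perturbation $\rvdt$, the $1$-Lipschitzness of each $G_y$ gives $G_{\hat{y}}(\rvx+\rvdt) \ge G_{\hat{y}}(\rvx) - \|\rvdt\|$ and $G_y(\rvx+\rvdt) \le G_y(\rvx) + \|\rvdt\|$ for every $y\neq\hat{y}$. Requiring $G_{\hat{y}}(\rvx+\rvdt) > G_y(\rvx+\rvdt)$ for all $y\neq\hat{y}$ is thus implied by $2\|\rvdt\| < G_{\hat{y}}(\rvx) - \max_{y\neq\hat{y}} G_y(\rvx)$, i.e.\ by $\|\rvdt\| < R$ with $R$ as in \eqref{eq:maxR_cert}. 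Under this condition, $\hat{y}$ remains the unique $\argmax$ of $G_{\cdot}(\rvx+\rvdt)$, hence $\mathtt{maxR}(\rvx+\rvdt)=\hat{y}$.

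I expect no serious obstacle: the only technical step is observing that a pointwise max of $1$-Lipschitz functions is $1$-Lipschitz, and the only bookkeeping step is identifying $\mathtt{maxR}$ with $\argmax_y G_y$ — which requires only that $\Phi^{-1}$ is monotone and that the max over $i$ and over $y$ can be swapped. The only mildly subtle point is that strict inequality in the argmax (needed to ensure $\hat{y}$ is the unique winner after perturbation) is enforced by the strict inequality $\|\rvdt\| < R$ rather than $\le R$; ties at the boundary can be handled by the usual convention that $\mathtt{maxR}$ breaks ties consistently, as is customary in such certificates.
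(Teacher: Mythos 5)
Your proof is correct and is precisely the argument the paper leaves implicit when it says the certificate is ``easy to check'' from the Lipschitzness of $\rvx \mapsto \sigma\cdot\Phi^{-1}(p_{\hat f_\sigma}(\rvx,y))$: recasting $\mathtt{maxR}$ as $\argmax_y G_y$ with $G_y$ a pointwise max of the $1$-Lipschitz per-class scores over $i$, and then applying the standard margin-halving bound. The identification $G_{\hat y}(\rvx)=\sigma^*\Phi^{-1}(p_{\hat f_{\sigma^*}}(\rvx))$ and $\max_{y\neq\hat y}G_y(\rvx)=\max_{y\neq\hat y,\,i}\sigma_i\Phi^{-1}(p_{\hat f_{\sigma_i}}(\rvx,y))$ is exactly right, so no gap remains.
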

Remark that the certified radius given by \eqref{eq:maxR_cert} recovers the guarantee of \citet{pmlr-v97-cohen19c} if $K=1$.

\subsection{Focal smoothing}

In this method, coined \emph{focal smoothing}, we take a somewhat different approach to achieve multi-scale smoothing. Here, now we assume two different levels of smoothing factor, say $\sigma_0 > \sigma_1 > 0$, and consider inputs that contains noise with \emph{spatially different} magnitudes. Specifically, we consider a binary mask $M\in \{0, 1\}^d$ and an input corrupted by the following:
\begin{equation}\label{eq:focal_mask}
    \mathtt{focal}(\rvx, M; \sigma_0, \sigma_1) := \rvx + (\sigma_0 \cdot (1 - M) + \sigma_1 \cdot M) \odot \rveps,
\end{equation}
where $\odot$ denotes the element-wise product and $\rveps \sim \mathcal{N}(0, \mathbf{I})$. In this way, each of corrupted input $\hat{\rvx}$ gets a ``hint'' to recover the original content of $\rvx$ by observing a less-corrupted region marked by $M$. 

Randomized smoothing over such \emph{anisotropic} noise has been recently studied \cite{fischer2020certified,eiras2021ancer}: in a nutshell, smoothing over noises specified by \eqref{eq:focal_mask} could produce an \emph{ellipsoid-shaped} certified region with two axes of length $\sigma_0 \Phi^{-1}(p)$ and $\sigma_1 \Phi^{-1}(p)$ for its smoothed confidence $p$, respectively. 

In terms of certified radius, however, this implies that one can still certify only the region limited by the shorter axes (of length $\sigma_1 \Phi^{-1}(p)$). It turns out that such a limitation can be avoided by considering an ``ensemble'' of $K$ multiple orthogonal masks, say $M_1, \cdots, M_K$, so that the shorter axes of ellipsoids from the masks do not overlap each other. Specifically, we observe the following: 

\begin{theorem}
Let $M_1, \cdots, M_K \in \{0, 1\}^d$ be $K$ orthogonal masks and $\sigma_0 > \sigma_1 > 0$ be smoothing factors, and $p_k(\rvx)$ be the smoothed confidence from inputs corrupted by $\mathtt{focal}(\rvx, M_k; \sigma_0, \sigma_1)$, for a given binary classifier $f_k: \mathbb{R}^d \rightarrow [0, 1]$. Define $\hat{p} := \frac{1}{K} \sum_k p_k$, and suppose $\hat{p}(\rvx)>0.5$. Then, it is certified that $\hat{p}(\rvx + \rvdt) > 0.5$ for $\|\rvdt\|< R_{\mathbf{b}^*}$, where $\mathbf{b}^*$ is the solution of the following optimization: 
\begin{equation}\label{eq:b_opt}
\begin{aligned}
& \underset{\mathbf{b}}{\text{minimize}}
& & R_{\mathbf{b}} := 
\frac{1}{K} \sum_k \frac{a_k}{\sqrt{t^2 b_k + 1}} \\
& \text{subject to}
& & \sum_k b_k = 1, \\
&&& b_k \ge 0, \ k=1, 2, \cdots, K,
\end{aligned}
\end{equation}
where $a_k := \sigma_0\Phi^{-1}(p_k)$ for $k=1, \cdots, K$, and $t^2 := \left(\frac{\sigma^2_0}{\sigma^2_1} - 1 \right)$.
\end{theorem}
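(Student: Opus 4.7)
I would certify $\hat{p}$ by (i) deriving a per-classifier anisotropic Lipschitz bound on each $p_k$, (ii) aggregating these bounds through the averaging that defines $\hat{p}$, and (iii) minimising the resulting radius over the adversarially-chosen geometry of $\rvdt$.

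\textbf{Per-classifier certificate.} Since $\mathtt{focal}(\rvx, M_k; \sigma_0, \sigma_1)$ is Gaussian with diagonal covariance $\Sigma_k = \mathrm{diag}\bigl(\sigma_0^2(1-M_k) + \sigma_1^2 M_k\bigr)$, the standard anisotropic extension of Lemma~\ref{lemma:lip} -- proved by pre-whitening $\rvx \mapsto \Sigma_k^{-1/2}\rvx$ and invoking the isotropic Lipschitz bound -- gives $|\Phi^{-1}(p_k(\rvx+\rvdt)) - \Phi^{-1}(p_k(\rvx))| \le \|\Sigma_k^{-1/2}\rvdt\|$. Setting $b_k := \|M_k \odot \rvdt\|^2/\|\rvdt\|^2$, orthogonality of $\{M_k\}$ gives $\sum_k b_k = 1$, and a direct calculation with the diagonal $\Sigma_k$ yields $\|\Sigma_k^{-1/2}\rvdt\|^2 = \|\rvdt\|^2(1+t^2 b_k)/\sigma_0^2$, so
\[
p_k(\rvx + \rvdt) \;\ge\; \Phi\!\left(\tfrac{a_k - \|\rvdt\|\sqrt{1+t^2 b_k}}{\sigma_0}\right).
\]

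\textbf{Aggregation and optimisation.} Averaging over $k$ lower-bounds $\hat{p}(\rvx+\rvdt)$, and the goal is to show that this averaged bound stays above $\tfrac{1}{2}$ for all $\|\rvdt\| < R_b$, where $R_b = \tfrac{1}{K}\sum_k a_k/\sqrt{1+t^2 b_k}$ is the arithmetic mean of the individual critical radii $r_k(b) := a_k/\sqrt{1+t^2 b_k}$ at which the $k$-th bound meets $\tfrac{1}{2}$. My approach would exploit the antisymmetry $\Phi(z) - \tfrac{1}{2} = -(\Phi(-z) - \tfrac{1}{2})$: at $\|\rvdt\| = R_b$ the residuals $z_k := (a_k - \|\rvdt\|\sqrt{1+t^2 b_k})/\sigma_0$ have a weighted mean that can be paired so the positive and negative deviations of $\Phi(z_k)$ from $\tfrac{1}{2}$ cancel out, and strict monotonicity in $\|\rvdt\|$ of the averaged bound upgrades this to a strict inequality for $\|\rvdt\| < R_b$. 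Finally, since the adversary picks both the direction and magnitude of $\rvdt$, the worst case is obtained by minimising $R_b$ over the simplex, which is a convex program in $b$ (each summand is convex in $b_k$) and admits a single KKT condition balancing $a_k t^2/(1+t^2 b_k^*)^{3/2}$ across all active coordinates, yielding \eqref{eq:b_opt}.

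\textbf{Main obstacle.} The delicate step is the aggregation: $\Phi$ is concave on $[0,\infty)$ and convex on $(-\infty,0]$, so a direct Jensen's inequality does not convert individual per-classifier certificates into a bound on $\hat{p}$. Controlling $\tfrac{1}{K}\sum_k \Phi(z_k) \ge \tfrac{1}{2}$ when $\sum_k z_k$ need not have a fixed sign requires a careful pairing or Schur-type argument tailored to the geometry of the $r_k$'s, and I expect this is where the heaviest lifting lies; the $K=2$ case reduces cleanly via $\Phi(z) + \Phi(-z) = 1$ (the two residuals at $\|\rvdt\| = R_b$ are exact negatives of one another), and extending to general $K$ likely requires iterating that reflection along the worst-case direction or restricting to the regime $a_k \ge 0$ so that the concave branch of $\Phi$ can be exploited throughout.
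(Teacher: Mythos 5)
The paper does not actually prove this theorem: the appendix proves only Lemmas~\ref{lemma:max_b} and~\ref{lemma:zero_b}, which characterize the structure of the optimizer $\mathbf{b}^*$ of the stated optimization and justify the grid search in Algorithm~\ref{alg:focal}. No proof of the certification claim itself is given, so there is no authorial argument to compare yours against.

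Your per-classifier step is fine: pre-whitening by $\Sigma_k^{-1/2}$ together with Lemma~\ref{lemma:lip} and the identity $\|\Sigma_k^{-1/2}\rvdt\|^2 = \|\rvdt\|^2(1 + t^2 b_k)/\sigma_0^2$ yields exactly the lower bound you state. But the aggregation plan fails at precisely the step you flag as delicate, and the claimed $K=2$ escape via $\Phi(z)+\Phi(-z)=1$ does not in fact work. The residuals $z_k = (a_k - \|\rvdt\|\sqrt{1+t^2 b_k})/\sigma_0$ at $\|\rvdt\|=R_\mathbf{b}$ are \emph{not} negatives of one another for a general direction, even with $K=2$: take $a_1=a_2=1$, $\sigma_0=1$, $t^2=3$, $\mathbf{b}=(1,0)$. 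Then $R_\mathbf{b}=\tfrac12\bigl(\tfrac12+1\bigr)=\tfrac34$, $z_1 = 1-\tfrac34\cdot 2 = -\tfrac12$, $z_2 = 1-\tfrac34 = \tfrac14$, so $z_1\neq -z_2$, and $\Phi(-\tfrac12)+\Phi(\tfrac14)\approx 0.907 < 1$. Thus the averaged lower bound already drops below $\tfrac12$ at some $\|\rvdt\| < R_\mathbf{b}$, so the intermediate assertion ``for every $\mathbf{b}$ the averaged bound stays above $\tfrac12$ for $r < R_\mathbf{b}$'' is simply false, and minimizing $R_\mathbf{b}$ afterward cannot repair a per-direction claim that was never true.

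What the theorem actually requires is the weaker, \emph{global} inequality $\min_\mathbf{b} r(\mathbf{b}) \ge \min_\mathbf{b} R_\mathbf{b}$, where $r(\mathbf{b})$ is the radius at which the averaged bound first crosses $\tfrac12$ in direction $\mathbf{b}$; the two minimizers need not coincide and the inequality need not hold pointwise in $\mathbf{b}$. For $K=2$ the crossing condition $\Phi(z_1)+\Phi(z_2)=1$ really does force $z_1=-z_2$, so $r(\mathbf{b})=(a_1+a_2)/(s_1+s_2)$ with $s_k=\sqrt{1+t^2 b_k}$; concavity of $\sqrt{\cdot}$ makes $s_1+s_2$ largest at $\mathbf{b}=(\tfrac12,\tfrac12)$, whence $\min_\mathbf{b} r(\mathbf{b}) = R_{(1/2,1/2)} \ge \min_\mathbf{b} R_\mathbf{b}$ --- the $K=2$ conclusion is true, but not by your route and not by a per-$\mathbf{b}$ comparison. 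For $K>2$ the crossing condition $\sum_k\Phi(z_k)=K/2$ no longer forces any sign cancellation in the residuals (e.g.\ $z=(-1,-1,2)$ has $\sum_k z_k=0$ but $\sum_k\Phi(z_k)\approx 1.30 < \tfrac32$), so the pairing/reflection idea does not extend. A genuinely different global argument --- or a direct Lipschitz estimate for the \emph{mixture}-smoothed $\hat p$ rather than for each $p_k$ separately --- would be needed; neither appears in the paper, and your sketch does not supply one.
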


\begin{algorithm}[t]
\caption{Focal smoothing: A grid-search based optimization of \eqref{eq:b_opt}}\label{alg:focal}
\begin{algorithmic}[1]
\small
\REQUIRE $a_k > 0$, $k = 1, 2, \dots, K$, tolerance $\varepsilon > 0$
\ENSURE optimal budget assignment $\mathbf{b}^*$, certified radius $R^*$
\STATE Find the maximum value $a_{\text{max}} = \max_{k} a_k$
\STATE Identify the index set $M$ such that $a_k = a_{\text{max}}$ for $k \in M$
\STATE Set a grid of values for $b_m$ in the interval $(0, 1]$.
\FOR{each grid value of $b_m$}
    \FOR{$k = 1, 2, \dots, K$}
        \IF{$k \notin M$}
            \STATE $b_k \leftarrow 0 \textbf{\ \ if\ \ } a_k \le \frac{a_m}{(t^2 b_m + 1)^{3/2}} \textbf{\ \ else\ \ }  \frac{1}{t^2} \left(\left(\frac{a_k}{a_m}\right)^{2/3} (t^2 b_m + 1) - 1\right)$
        \ENDIF
    \ENDFOR
    \IF{$|\sum_{k=1}^K b_k - 1| \le \varepsilon$}
        \STATE $R \leftarrow \frac{1}{K} \sum_{k=1}^K \frac{a_k}{\sqrt{t^2 b_k + 1}}$
        \STATE Update $\mathbf{b}_k^*, R^* \leftarrow b_k, R$ \ if\  $R < R^*$
    \ENDIF
\ENDFOR
\end{algorithmic}
\end{algorithm}

In Algorithm~\ref{alg:focal}, we propose an efficient 1-dimensional grid-search based optimization to solve \eqref{eq:b_opt}, based on observations given as Lemma~\ref{lemma:max_b} and \ref{lemma:zero_b}: each of which identifies the condition of $b_k$ when it attains either the maximum value or zero, respectively. By solving the optimization, one could obtain a certified radius $R$ that the aggregated smoothing can guarantee consistent prediction.

\begin{lemma}\label{lemma:max_b}
Consider the constrained optimization \eqref{eq:b_opt} with respect to $\mathbf{b}=(b_1, \cdots, b_K)$. Let $M_\mathbf{a} := \{m \in [K]: a_m = \max_k a_k\}$ and $\mathbf{b}^*$ be an optimal solution of \eqref{eq:b_opt}. Then, it holds that $b^*_m = b^*_{m'} > 0$ for any $m, m' \in M_\mathbf{a}$.
\end{lemma}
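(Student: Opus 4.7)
The plan is to prove the two assertions in Lemma~\ref{lemma:max_b} separately: the equality $b^*_m = b^*_{m'}$ for $m, m' \in M_\mathbf{a}$ via a strict-convexity-plus-symmetry argument, and the positivity $b^*_m > 0$ via a first-order exchange argument.

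For the equality, I would first verify that the objective $R_\mathbf{b} = \frac{1}{K}\sum_k a_k(t^2 b_k + 1)^{-1/2}$ is strictly convex on the feasible simplex. The objective is separable, and one computes
\[
\frac{d^2}{db_k^2}\, a_k(t^2 b_k + 1)^{-1/2} = \tfrac{3}{4}\,a_k t^4 (t^2 b_k + 1)^{-5/2} > 0,
\]
using $a_k > 0$ and $t^2 > 0$ (the latter ensured by $\sigma_0 > \sigma_1$); a sum of strictly convex one-variable summands is jointly strictly convex on the product domain, so the minimizer $\mathbf{b}^*$ on the compact convex simplex is unique. Now for any $m, m' \in M_\mathbf{a}$ we have $a_m = a_{m'}$, so swapping the coordinates $b_m$ and $b_{m'}$ preserves both feasibility and the objective value; uniqueness then forces $b^*_m = b^*_{m'}$.

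For positivity, I would argue by contradiction. Suppose $b^*_m = 0$ for some $m \in M_\mathbf{a}$; since $\sum_k b^*_k = 1$, pick $k' \neq m$ with $b^*_{k'} > 0$. For sufficiently small $\varepsilon > 0$, the perturbed vector with $\tilde b_m = \varepsilon$, $\tilde b_{k'} = b^*_{k'} - \varepsilon$, and all other coordinates unchanged, remains feasible. Writing $g_k(b) := a_k(t^2 b + 1)^{-1/2}$ with $g'_k(b) = -\tfrac{a_k t^2}{2}(t^2 b + 1)^{-3/2}$, a Taylor expansion gives
\[
K\,(R_{\tilde\mathbf{b}} - R_{\mathbf{b}^*}) = \varepsilon\,\bigl(g'_m(0) - g'_{k'}(b^*_{k'})\bigr) + O(\varepsilon^2).
\]
Optimality of $\mathbf{b}^*$ forces the linear term to be nonnegative, i.e., $a_m \le a_{k'}(t^2 b^*_{k'} + 1)^{-3/2}$; but $a_m \ge a_{k'}$ by maximality and $(t^2 b^*_{k'} + 1)^{-3/2} < 1$ since $t^2, b^*_{k'} > 0$, yielding a strict contradiction.

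The main technical point is establishing joint strict convexity precisely enough to enforce uniqueness, which is what allows the symmetry-swap argument to deliver $b^*_m = b^*_{m'}$ without additional KKT bookkeeping; this relies crucially on $t^2 > 0$. The positivity step then reduces to a standard one-sided exchange, where the strict inequality $(t^2 b^*_{k'}+1)^{-3/2} < 1$ whenever $b^*_{k'} > 0$ supplies exactly the margin needed for the contradiction.
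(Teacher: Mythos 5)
Your proposal is correct and follows essentially the same two-part strategy as the paper: strict convexity of the objective forces equal allocation among maximal indices, and an exchange argument rules out a zero allocation to a maximal index. The only cosmetic differences are that you obtain equality from uniqueness-of-minimizer plus coordinate symmetry (where the paper averages the unequal pair and invokes strict convexity directly) and you establish positivity via a first-order perturbation (where the paper does a discrete swap of allocations), but the underlying ideas coincide.
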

\begin{proof}
First, we show that there exists at least one $m \in M_\mathbf{a}$ where $b^*_m$ is nonzero, \ie $b^*_m > 0$.
Suppose the contrary, \ie $b^*_k = 0$ for all $k \in M_\mathbf{a}$. Denote the corresponding objective value as $R^*$. Given the condition $\sum_k b_k = 1$, there always exists $i \notin M$ where $b_i > 0$. Now, consider swapping the allocations of $b_m$ and $b_i$ for some $m \in M_\mathbf{a}$ and denote the corresponding objective value $R^\prime$. Then, we have:
\begin{align}
R^\prime - R^* &= \frac{1}{K} \left( \frac{a_k}{\sqrt{t^2 b_j + 1}}  + \frac{a_j}{\sqrt{t^2 b_k + 1}} - \frac{a_k}{\sqrt{t^2 b_k + 1}} - \frac{a_j}{\sqrt{t^2 b_j + 1}} \right) \nonumber \\ 
&= \frac{1}{K} \left( a_k \left(\frac{1}{\sqrt{t^2 b_j + 1}} - \frac{1}{\sqrt{t^2 b_k + 1}}\right) + a_j \left(\frac{1}{\sqrt{t^2 b_k + 1}} - \frac{1}{\sqrt{t^2 b_j + 1}}\right) \right) \nonumber \\ 
&< \frac{1}{K} \left( a_k \left(\frac{1}{\sqrt{t^2 b_j + 1}} - \frac{1}{\sqrt{t^2 b_k + 1}}\right) + a_k \left(\frac{1}{\sqrt{t^2 b_k + 1}} - \frac{1}{\sqrt{t^2 b_j + 1}}\right) \right) = 0,
\end{align}
given that $a_k > a_j$ and $\frac{1}{\sqrt{t^2 b_j + 1}} < \frac{1}{\sqrt{t^2 b_k + 1}} = 1$. This means $R^\prime < R^*$, contradicting the optimality of the initial solution $\mathbf{b}^*$ with $b_k = 0$ for all $k \in M_\mathbf{a}$. Thus, there must exist at least one $b^*_m > 0$ for some $m \in M_\mathbf{a}$.

Next, we show $b^*_m = b^*_{m'}$ for any $m, m' \in M_\mathbf{a}$. Suppose there exist such $m, m' \in M_\mathbf{a}$ where $b^*_{m} \neq b^*_{m'}$. Due to the symmetry, one can construct another optimal allocation $\mathbf{b}'$ by swapping the two allocations $b^*_{m}$ and $b^*_{m'}$, as it does not affect the optimality. Now consider another allocation $\bar{\mathbf{b}}$ by averaging the two allocations:
\begin{equation}
    \bar{b}_{m} = \bar{b}_{m'} = \frac{b^*_m + b^*_{m'}}{2}.
\end{equation}
Let the corresponding objective value $\bar{R}$. Note that $R$ is strictly convex, which it implies that $\bar{R} < R^*$, contradicting the optimality of the initial solution $\mathbf{b}^*$. Thus, it holds $b^*_m = b^*_{m'}$ for any $m, m' \in M_\mathbf{a}$.
\end{proof}

\begin{lemma}\label{lemma:zero_b}
Consider the constrained optimization \eqref{eq:b_opt} with $\mathbf{b}^*$ as an optimal solution. Denote $B := b^*_m = b^*_{m'} > 0$ for all $m, m' \in M_\mathbf{a}$. Then, $b^*_k = 0$ if and only if $a_k \le \frac{A}{(t^2 B + 1)^{3/2}}$. 
\end{lemma}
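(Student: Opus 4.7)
The plan is to derive both directions from the KKT conditions of the optimization problem \eqref{eq:b_opt}, leveraging the fact that each summand $b_k \mapsto a_k (t^2 b_k + 1)^{-1/2}$ is strictly convex (its second derivative is $\tfrac{3 a_k t^4}{4}(t^2 b_k + 1)^{-5/2} > 0$). Since the feasible set is a simplex and thus convex, KKT conditions are both necessary and sufficient at the unique minimizer $\mathbf{b}^*$.

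First I would form the Lagrangian with a multiplier $\mu \in \mathbb{R}$ for the equality constraint $\sum_k b_k = 1$ and multipliers $\lambda_k \ge 0$ for the non-negativity constraints $b_k \ge 0$, and compute the stationarity condition
\begin{equation*}
-\frac{a_k t^2}{2K (t^2 b^*_k + 1)^{3/2}} \;=\; \mu - \lambda_k, \qquad k=1,\dots,K,
\end{equation*}
together with complementary slackness $\lambda_k b^*_k = 0$. Next I would pin down $\mu$ using any index $m \in M_\mathbf{a}$: by the preceding Lemma~\ref{lemma:max_b} we know $b^*_m = B > 0$, so $\lambda_m = 0$ and the stationarity condition at $m$ fixes
\begin{equation*}
\mu \;=\; -\frac{A\, t^2}{2K (t^2 B + 1)^{3/2}}, \qquad A := \max_k a_k.
\end{equation*}

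For the ``only if'' direction, I would assume $b^*_k = 0$, so $\lambda_k \ge 0$, and the stationarity condition becomes $\mu + \lambda_k = -\tfrac{a_k t^2}{2K}$, which after substituting the above value of $\mu$ and rearranging yields exactly $a_k \le A/(t^2 B + 1)^{3/2}$. For the ``if'' direction, I would argue contrapositively: suppose $b^*_k > 0$; then $\lambda_k = 0$, and the stationarity condition matched with the expression for $\mu$ gives
\begin{equation*}
\frac{a_k}{(t^2 b^*_k + 1)^{3/2}} \;=\; \frac{A}{(t^2 B + 1)^{3/2}},
\end{equation*}
so $b^*_k = \tfrac{1}{t^2}\bigl((a_k/A)^{2/3} (t^2 B + 1) - 1 \bigr)$, and the requirement $b^*_k > 0$ is equivalent to $a_k > A/(t^2 B + 1)^{3/2}$. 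This finishes both directions.

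The only step requiring care is verifying that these KKT conditions are indeed sufficient: the strict convexity of each $b_k \mapsto a_k(t^2 b_k + 1)^{-1/2}$ makes the objective strictly convex, so Slater's condition (trivially satisfied, \eg~by the uniform allocation $b_k = 1/K$) plus convex feasibility guarantees that KKT uniquely characterises the optimum. The main conceptual obstacle is simply being careful that the expression $(a_k / A)^{2/3} (t^2 B + 1) - 1$ is negative precisely on the claimed threshold, which is a direct algebraic rearrangement of the KKT gradient-matching equation; no further case analysis is needed.
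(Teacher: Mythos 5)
Your proof takes the same KKT route as the paper: you pin down the equality multiplier from stationarity at any $m \in M_\mathbf{a}$, where complementary slackness gives $\lambda_m = 0$, then read off the sign of $\lambda_k$ at $b^*_k = 0$ for one direction and the sign of the recovered $b^*_k$ for the other. The conclusions are right, and your closing remark that strict convexity plus Slater's condition makes KKT both necessary and sufficient for the unique minimizer is a correct point that the paper leaves implicit. The one blemish is an internal sign inconsistency: your first displayed stationarity condition, $-\tfrac{a_k t^2}{2K(t^2 b^*_k+1)^{3/2}} = \mu - \lambda_k$, does not follow from any standard Lagrangian for the constraints $b_k \ge 0$ with $\lambda_k \ge 0$, and contradicts the form $\mu + \lambda_k = -\tfrac{a_k t^2}{2K}$ you actually invoke two lines later. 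If one pushed the first display through literally (so $\lambda_k = \mu + \tfrac{a_k t^2}{2K}$ at $b^*_k=0$, with your negative $\mu$), requiring $\lambda_k \ge 0$ would give $a_k \ge A/(t^2B+1)^{3/2}$, the reverse of the claim. The form you use afterward --- equivalently, $\lambda_k - \mu$ on the right of your first display, or a positive $\mu$ with $\mu - \lambda_k$ on the right and no minus on the left --- is the one that yields the stated threshold, so the argument survives, but the initial display should be corrected.
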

\begin{proof}
Suppose $b^*_k = 0$. From the Karush–Kuhn–Tucker (KKT) stationarity condition, we have $\frac{-a_k t^2}{K} + \lambda - \mu_k = 0$. Since $\lambda, \mu_k \ge 0$, we obtain $\lambda \ge \frac{a_k t^2}{K}$. Now, consider the stationarity condition for $m \in M$: $-\frac{A t^2}{K(\sqrt{t^2 B + 1})^3} + \lambda = 0$. Substituting the lower bound for $\lambda$, we have $a_k \le \frac{A}{(t^2 B + 1)^{3/2}}$.
Conversely, suppose $a_k \le \frac{A}{(t^2 B + 1)^{3/2}}$. Then, $b^*_k > 0$ contradict to that $b^*_k = \frac{1}{t^2} \left(\left(\frac{a_k}{A}\right)^{2/3} (t^2 B + 1) - 1\right) \le 0$, hence $b^*_k = 0$.
\end{proof}

% Authors may wish to optionally include extra information (complete proofs, additional experiments and plots) in the appendix. All such materials should be part of the supplemental material (submitted separately) and should NOT be included in the main submission.

\end{document}